\declaretheorem[name=Lemma]{lemma}
\title{A Cramér–von Mises Approach to Incentivizing Truthful Data Sharing}
\author{
    Alex Clinton \\
    University of Wisconsin-Madison\\
    aclinton@wisc.edu \\
    \And 
    Thomas Zeng \\
    University of Wisconsin-Madison\\
    tpzeng@wisc.edu \\
    \And 
    Yiding Chen \\
    Cornell University\\
    yc2773@cornell.edu \\
    \And 
    Xiaojin Zhu \\
    University of Wisconsin-Madison\\
    jerryzhu@cs.wisc.edu \\
    \And 
    Kirthevasan Kandasamy \\
    University of Wisconsin-Madison \\
    kandasamy@cs.wisc.edu
}
\begin{document}

\maketitle

\begin{abstract}

Modern data marketplaces and data sharing consortia increasingly rely on incentive mechanisms to encourage agents to contribute data. However, schemes that reward agents based on the quantity of submitted data are vulnerable to manipulation, as agents may submit fabricated or low-quality data to inflate their rewards. 
Prior work has proposed comparing each agent's data against others' to promote honesty: when others contribute genuine data, the best way to minimize discrepancy is to do the same. Yet prior implementations of this idea rely on very strong assumptions about the data distribution (e.g. Gaussian), limiting their applicability.
In this work, we develop reward mechanisms based on a novel two-sample test statistic inspired by the Cramér-von Mises statistic. 
Our methods strictly incentivize agents to submit more genuine data, while disincentivizing data fabrication and other types of untruthful reporting.
We establish that truthful reporting constitutes a (possibly approximate) Nash equilibrium in both Bayesian and prior-agnostic settings. We theoretically instantiate our method in three canonical data sharing problems and show that it relaxes key assumptions made by prior work.
Empirically, we demonstrate that our mechanism incentivizes truthful data sharing via simulations and on real-world language and image data.

\end{abstract}
\section{Introduction}
\vspace{-0.15cm}

Data is invaluable for machine learning (ML).
Yet many organizations and individuals lack the capability to collect sufficient data on their own.
This has driven the emergence of \emph{data marketplaces}~\citep{googleads,deltasharing,awshub}---where consumers 
purchase data from contributors with
money---and \emph{consortia}~\citep{ibmdatafabric,freightdat,snowflakedatamarketplace} for data sharing and federated learning---where agents share 
their own data in return for access to others' data. As such platforms depend critically on data from contributing 
agents, they incentivize these agents to contribute more data via commensurate rewards: consortia typically grant 
agents greater access to the pooled data~\citep{chen2023mechanism,karimireddy2022mechanisms},
while marketplaces provide correspondingly larger payments~\citep{cai2015optimum,chen2020truthful}.

However, most existing work implicitly assume that contributors will report data truthfully. 
In reality, strategic contributors may untruthfully report data to exploit the incentive scheme.
As one such example, they may \emph{fabricate data}---either through naïve random generation or sophisticated ML-based synthesis 
---to artificially inflate their submissions and maximize their own rewards.
In naive incentive schemes, where rewards scale with the \emph{quantity} of data, such behavior can flood the system with poor quality data which
undermines trust in the platform.

The central challenge in preventing such strategic misreporting, including fabrication, is that consortia and marketplace operators typically lack ground-truth 
knowledge about the underlying data distribution---if the ground truth were known, the very need for learning and data sharing would be obviated. 
To address this, prior work
has proposed a simple and intuitive idea: compare each agent's data submission against the pooled submissions of 
other agents. In these mechanisms, when all agents' data come from the same distribution, truthful reporting constitutes a Nash equilibrium. 
Intuitively, when others contribute genuine data, minimizing the discrepancy between one's own submission and the aggregate submission of others 
also requires submitting genuine data.

Despite this promising intuition, prior work has succeeded only under strong assumptions about data 
distributions~\citep{chen2023mechanism,clintoncollaborative}
and/or narrow models of untruthful behavior~\citep{dorner2023incentivizing,falconer2023towards,cummings2015accuracy}. 
Realizing this idea to general data distributions and arbitrary types of strategic misreporting has remained challenging.

\parahead{Our contributions}
This gap motivates the central premise of our work.
We develop a mechanism where agents are rewarded 
based on a novel loss function that is inspired by
two-sample testing.
Our loss function, resembling the Cram\'er-von Mises (CvM) two-sample test statistic~\citep{cramer1928composition,mises1939probability}, 
is computationally inexpensive,
and applies to many different data types, including complex data modalities such as text and images.
We design (approximate) Nash equilibria in which agents are incentivized to truthfully report data, without relying on restrictive assumptions about 
the underlying distribution or strategic behaviors. We theoretically demonstrate the application of our mechanism in 
three data sharing problems
involving purchasing data, and data sharing without money.
We 
empirically demonstrate its usefulness via experiments on synthetic 
and real world datasets.

\vspace{-0.2cm}
\subsection{Overview of Contributions}
\label{sec:summary}
\vspace{-0.2cm}

\parahead{Model}
There are $\numagents$ agents. Each agent $i$ possesses a dataset $\initdatai$ drawn from an unknown distribution $\datadist$, and submits $\subdatai$, not necessarily truthfully (i.e. $\subdatai \neq \initdatai$). In data-sharing consortia or marketplaces, the goal is to design losses (negative rewards) $\teststat = \{\teststati\}_{i \in [m]}$, where agent $i$ is rewarded according to $-\teststati(\{\subdataj\}_{j \in [m]})$, so as to incentivize truthful reporting.
A natural and widely adopted approach~\citep{chen2023mechanism,clintoncollaborative,chen2020truthful}, which we also follow, is to design $\teststati$ as a function of the form $\teststati(\subdatai, \subdatami)$, where 
\(
    \subdatami = \rbr{
        \subdataj
    }_{j\neq i}
\)
is the pooled submission of all agents except $i$. 
A high value of $\teststati(\subdatai, \subdatami)$ suggests that agent $i$'s data deviates from the rest, which may indicate untruthful behavior when other agents report truthfully (i.e. $\subdataj = \initdataj$ for all $j \neq i$).

Comparing an agent's submission to the pooled data from others can be naturally viewed as computing a \emph{two-sample test statistic}---or simply, a \emph{two-sample test}---between $\subdatai$ and $\subdatami$~\citep{cramer1928composition,Kolmogorov1933}. This perspective motivates the design of our loss function.

\parahead{Key technical challenges}
There are two primary challenges in designing a loss.
First, we should ensure that the loss $\teststat$ is \emph{truthful}: specifically, when $\subdatami$ is drawn i.i.d. from $\datadist$ 
(\ie all other agents report truthfully), the optimal strategy for agent $i$ to minimize $\teststati(\subdatai, \subdatami)$ should be to also submit truthfully, 
\ie $\subdatai = \initdatai$. 
Without this property, agents may have an incentive to manipulate their submissions to reduce $\teststati(\subdatai, \subdatami)$.
However, many standard two-sample tests---such as Kolmogorov–Smirnov~\citep{Kolmogorov1933,Smirnov1939}, $t$-test~\citep{Student1908}, 
Mann–Whitney~\citep{MannWhitney1947}, and MMD~\citep{Gretton2012}---are not provably truthful. 
The second challenge is to reward agents for higher quality submissions, \ie
$\teststati$ 
should decrease as the quantity of the submitted (truthful) data increases.

While each challenge is easy to address in isolation, satisfying both simultaneously is far more difficult. For example, a mechanism that rewards agents equally is trivially truthful but offers no incentive to collect more data.
Conversely, if losses 
are tied solely to the quantity of submitted data, the mechanism becomes vulnerable to data fabrication, leaving honest agents
worse off.

A third, less central challenge is ensuring that we have a handle on the distribution of $\teststati$ to enable its application in data sharing use cases. For instance, penalizing large values of $\teststati$ requires understanding what constitutes ``large'' under truthful reporting. Prior work addresses these three challenges only under strong assumptions on $\datadist$ (\eg Gaussian~\citep{chen2023mechanism,clintoncollaborative}, Bernoulli~\citep{ghosh2014buying}, restricted class of exponential families~\citep{chen2020truthful}),
or narrow models of untruthful reporting~\citep{dorner2023incentivizing,falconer2023towards,ghosh2014buying}.

\parahead{Our method and results}
In~\S\ref{sec:exact-ne-algs}, we consider a Bayesian setting in which each agent's data is drawn from an \emph{unknown} distribution $\datadist$, 
itself sampled from a \emph{known} prior $\bayesprior$.
We introduce our loss $\teststat$
which is inspired by the 
Cram\'er–von Mises (CvM) test.
Leveraging this statistic along with user-specified data featurizations, we design a loss in which truthful reporting forms an exact Nash equilibrium (NE).
Moreover, we show that $\teststat$ incentivizes the submission of larger datasets---an agent is \emph{strictly} better off by submitting more truthful data.
Our loss is also bounded, and decreases gracefully with the amount of data submitted,
making it useful for data sharing applications as we will see in \S\ref{sec:applications}.

However, this approach has two practical limitations.
First, specifying a meaningful prior can be difficult, particularly for complex data modalities such as text or images.
Second, even with a prior, computing $\teststat$ may be
intractable when it requires expensive Bayesian posterior computations.
In~\S\ref{sec:approx-ne-alg}, we address these issues by replacing the above Bayesian version of our loss 
with a prior-agnostic version that is simpler to compute.
We show that this leads to a truthful $\varepsilon$-approximate NE in both Bayesian and frequentist settings
where $\varepsilon$ approaches zero
as the amount of data submitted increases.
We also show that agents benefit 
from submitting more data, and that our new loss is also bounded
and decreases gracefully with the amount of data submitted.

\textbf{Applications.}
In~\S\ref{sec:applications}, we theoretically demonstrate how
our Bayesian method can be applied to solve three different data sharing 
problems,
some of which have been studied in prior work, while relaxing their technical conditions. The first problem is incentivizing truthful data submissions 
via payments assuming agents already possess data~\citep{chen2020truthful}.
The second is the design of a data marketplace where a buyer is willing to pay strategic agents to collect data on her 
behalf~\citep{chen2025incentivizing}.
The third is a federated learning setting where agents wish to share data for ML tasks without the use of 
money~\citep{karimireddy2022mechanisms}.

\textbf{Empirical evaluation.}
In~\S\ref{sec:experiments}, we empirically evaluate our methods
on simulations, and real world image and language experiments.
To simulate untruthful behavior, we consider agents who augment their datasets by fabricating samples using simple fitted models, or 
generative models such as 
diffusion models and LLMs~\citep{rombach2022high, gupta2024progressive, grattafiori2024llama}.
Our results demonstrate that such untruthful submissions lead to larger losses compared to truthful reporting.
This corroborates theoretical results for both methods and demonstrates that the prior-agnostic version is 
practically useful for real world data sharing.

\vspace{-0.475cm}
\subsection{Related Work}
\vspace{-0.2cm}

There has been growing interest in the incentive structures underlying data sharing, federated learning, and data marketplaces. 
A central goal in these settings is to incentivize data contributions. However, most prior work do not consider untruthful reporting. When they do, they either impose
restrictive distributional assumptions, or limit how contributors may misreport.

\textbf{Incentivizing data sharing without truthfulness requirements.}
A line of work addresses incentivizing data collection in federated learning~\citep{blum2021one,karimireddy2022mechanisms,fraboni2021free,lin2019free,huang2023evaluating,chen2018optimal,cai2015optimum}. Other studies focus on incentivizing the sharing of private data~\citep{fallah2024optimal} or truthful reporting of private data collection costs~\citep{cummings2015accuracy}. All of these works assume agents report data truthfully, and do not encounter the challenges we address here.

\textbf{Restricted distributional assumptions.}
\citet{cai2015optimum} study a principal-agent model where a principal selects measurement locations and compensates agents who exert costly effort to reduce observation noise. Their optimal contract relies on a known effort-to-data-quality function, which may be unknown or nonexistent in practice. \citet{ghosh2014buying} design a mechanism to purchase binary data under differential privacy, compensating agents for privacy loss. \citet{chen2020truthful} drop the privacy constraint to handle non-binary data, proposing a fixed-budget mechanism that ensures truthful reporting, but requiring the data distribution to have finite support or belong to an exponential family. Other work focuses on incentivizing truthful reporting in Gaussian mean estimation for data sharing~\citep{chen2023mechanism,clintoncollaborative} and data marketplaces~\citep{chen2025incentivizing}; however, as our experiments show, their approach---based on comparing means of the reported data---does not generalize beyond Gaussian data.

\textbf{Restricted untruthful reporting.}
\citet{falconer2023towards} propose monetary incentives for data sharing, assuming agents can only fabricate data by duplicating existing entries. \citet{dorner2023incentivizing} study mean estimation where agents may misreport only by adding a scalar to their true values.

\textbf{Peer prediction.}
The peer prediction literature addresses a challenge similar to ours: eliciting truthful reports without access to ground truth. Prior work~\citep{miller2005eliciting,prelec2004bayesian,dasgupta2013crowdsourced,chen2024carrot} uses reported signals to cross-validate agents' submissions, showing that truthful reporting forms an (approximate) Nash equilibrium. Techniques from~\citep{kong2018water,kong2019information} have been applied to design payment-based mechanisms for data sharing~\citep{chen2020truthful}, but these rely on strong assumptions about the data distribution (e.g., exponential families or finite support).
It is not clear if these methods generally work when agents may change the number of signals (data points) they have, which is a critical consideration in data sharing use cases where fabrication is possible. More precisely, the mechanism designer does not know how many data points an agent holds, yet must still incentivize truthful reporting.

\emph{Practical applicability.}
The vast majority of the above works focus on theoretical development, but lack empirical evaluation, with their practicality unclear due to expensive Bayesian  computations. In contrast, our prior-agnostic method is simple and performs well on real data.

\paragraph{Review of the Cram\'er-von Mises test.}
We briefly review the Cramér–von Mises (CvM) test~\citep{cramer1928composition}.
Let \(X = \{X_1, \ldots, X_n\} \overset{\text{i.i.d.}}{\sim} F_1\) and 
\(Y = \{Y_1, \ldots, Y_m\} \overset{\text{i.i.d.}}{\sim} F_2\) be samples from 
$\RR$-valued  distributions \(F_1\) and \(F_2\), respectively.
Let 
\(
    F_X(t)=\frac{1}{\abr{X}}\sum_{x\in X}1_{\cbr{x\leq t}}
\) 
and 
\(
    F_Y(t)=\frac{1}{\abr{Y}}\sum_{y\in Y}1_{\cbr{y\leq t}}
\) 
be the empirical CDFs (ECDFs) of $X$ and $Y$.
Set 
\(
    Z=\rbr{
        X_1,\ldots,X_n,
        Y_1,\ldots,Y_m
    }
\).
The two-sample CvM test statistic is then defined below in~\eqref{eq:cvmtest}.
We have illustrated the CvM test in Fig.~\ref{fig:two-sample-cvm}.
    \begin{equation*}
        \text{CvM}\rbr{X,Y}
        =
        \frac{nm}{(n+m)^2}
        \sum_{
            i=1
        }^{n+m}
        \rbr{
            F_X(Z_i)-F_Y(Z_i)
        }^2
        \numberthis
        \label{eq:cvmtest} .
    \end{equation*}

\vspace{-0.3cm}
\section{A Truthful Mechanism in a Bayesian Setting}
\label{sec:exact-ne-algs}
\vspace{-0.20cm}

In this section, we design a mechanism to reward agents based on the quality of their submitted data. We begin by specifying our model.
To build intuition, we present a simplified single-variable version of our loss (mechanism) in \S\ref{sec:single-var-alg}. We then present 
the general version of our mechanism in \S\ref{sec:general-alg}.

\textbf{Setting.}
There are $\numagents>2$ agents, where each agent $i\in[\numagents]$
has a dataset 
\(
    \initdatai=
    \cbr{
        \initdataiijj{i}{1},
        \ldots,
        \initdataiijj{i}{\nni}
    }
    \subset
    \cbr{
        X_{i,j}
    }_{j=1}^\infty
\)
of $\nni\in\NN$ points. 
Here
\(
    \cbr{
        X_{i,j}
    }_{j=1}^\infty
\)
are
drawn i.i.d. from an \emph{unknown} distribution 
$\distfromprior$ over $\dataspace$
and
$X_i\in\dataspace^{\nni}$.
We refer to $\dataspace$ as the dataspace; examples include the space of images, text, or simply $\RR^\datadim$.
In this section,
we consider a Bayesian setting where $\distfromprior$ is drawn from 
a \emph{publicly known} prior $\bayesprior$.
A mechanism designer wishes to incentivize the agents to report their datasets truthfully by designing losses (negative rewards).

Let 
\(
    \datasetspace = \bigsqcup_{\ell=0}^{\infty} \dataspace^\ell
\)
be the collection of finite subsets of $\dataspace$, which forms the space of datasets an agent could possess. 
A mechanism for this problem is a normal form game which maps the agents' dataset submissions to a 
vector of losses, \ie $\mech\in \{\mech': \datasetspace^{\numagents} \rightarrow \RR^{\numagents}\}$.
Once the mechanism $\mech$ is published, each agent will submit a dataset $\subdatai$ (not necessarily equal to $\initdatai$).
An agent's strategy can be viewed as a function 
\(
    \subfunci \in\subfuncspace
    =
    \{
        \subfunc:\datasetspace \rightarrow \datasetspace
        \text{ s.t. }
        \subfunc
        \text{ is measurable}
    \}
\)
which maps their original dataset $\initdatai$ to $\subdatai=\subfunci(\initdatai)$.
This allows for strategic data manipulations which may depend on the agent's own dataset.
Let $\identity$ be the identity (truthful) strategy which maps a dataset to itself, \ie $\identity(\initdatai) = \initdatai$.

Agent $i$'s loss $\teststati$ is the $i$'th ouput of the mechanism $\mech$, and is a function of the strategies 
$\subfunc = \{\subfunci\}_{i\in[\numagents]}$ adopted by other agents and the initial datasets
$\initdata=\cbr{\initdataii{1},\ldots,\initdataii{\numagents}}$,
and can be written as
\(
\teststati
=
\mechi(\{\subfunci\}_{i\in[\numagents]})
=
\mechi(\{\subfunci(\initdatai)\}_{i\in[\numagents]})
\)
to highlight or suppress these dependencies.

\textbf{Requirements.}
The mechanism designer wishes to design $\mech$ to satisfy two key properties:
\begin{enumerate}[leftmargin=0.2in]
    \item 
    \emph{Truthfulness:} All agents submitting truthfully ($\subfunci=\identity$), is a Nash equilibrium, that is,
    \vspace{-0.05in}
    \[
    \forall i\in[\numagents], \forall\subfunci\in\subfuncspace, \quad
    \EE\left[
        \teststati(\{\identity\}_{j=1}^\numagents)
        \right]
        \leq 
            \EE\left[
        \teststati(\subfunci, \{\identity\}_{j\neq i})
        \right] .
    \]
    \item \emph{More (data) is (strictly) better (MIB):}
    Let $\initdatai, \initdatai'$ be two datasets 
    such that $|\initdatai'| > |\initdatai|$. Then,
    \vspace{-0.05in}
        \[
    \EE\left[
        \teststati(
            \identity(\initdatai'), \{\identity(\initdataj)\}_{j\neq i}
        )
        \right]
        <
            \EE\left[
            \teststati( 
                \{\identity(\initdataj)\}_{j\in[\numagents]}
            )
        \right] .
    \]
\end{enumerate}
Above, the expectation is with respect to the prior $\distfromprior\sim\bayesprior$, the data $\initdatai,\initdatai'\sim\distfromprior$ for all $i$, and any 
randomness in the agent strategies $\subfunci$ and mechanism $\mech$.
As discussed in~\S\ref{sec:summary} under `Key technical challenges',
while satisfying either of these requirement is easy, designing a mechanism which satisfies
both simultaneously is significantly more difficult.

\begin{figure}[t]
    \begin{subfigure}[b]{0.49\textwidth}
        \centering
        \includegraphics[scale=0.42]{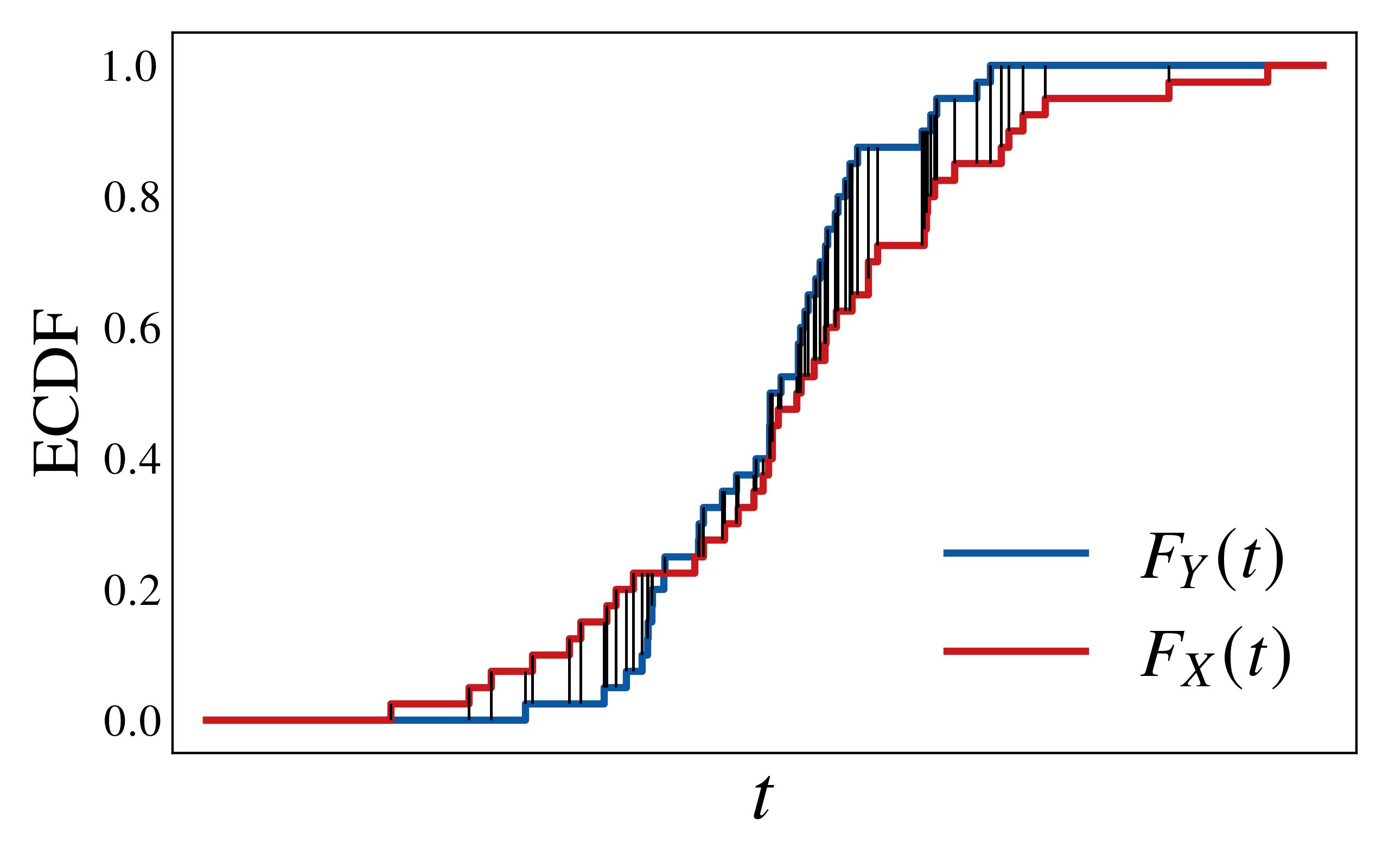}
        \caption{The two-sample Cramér-von Mises test}
        \label{fig:two-sample-cvm}
    \end{subfigure}
    \hfill 
    \begin{subfigure}[b]{0.49\textwidth}
        \centering
        \includegraphics[scale=0.42]{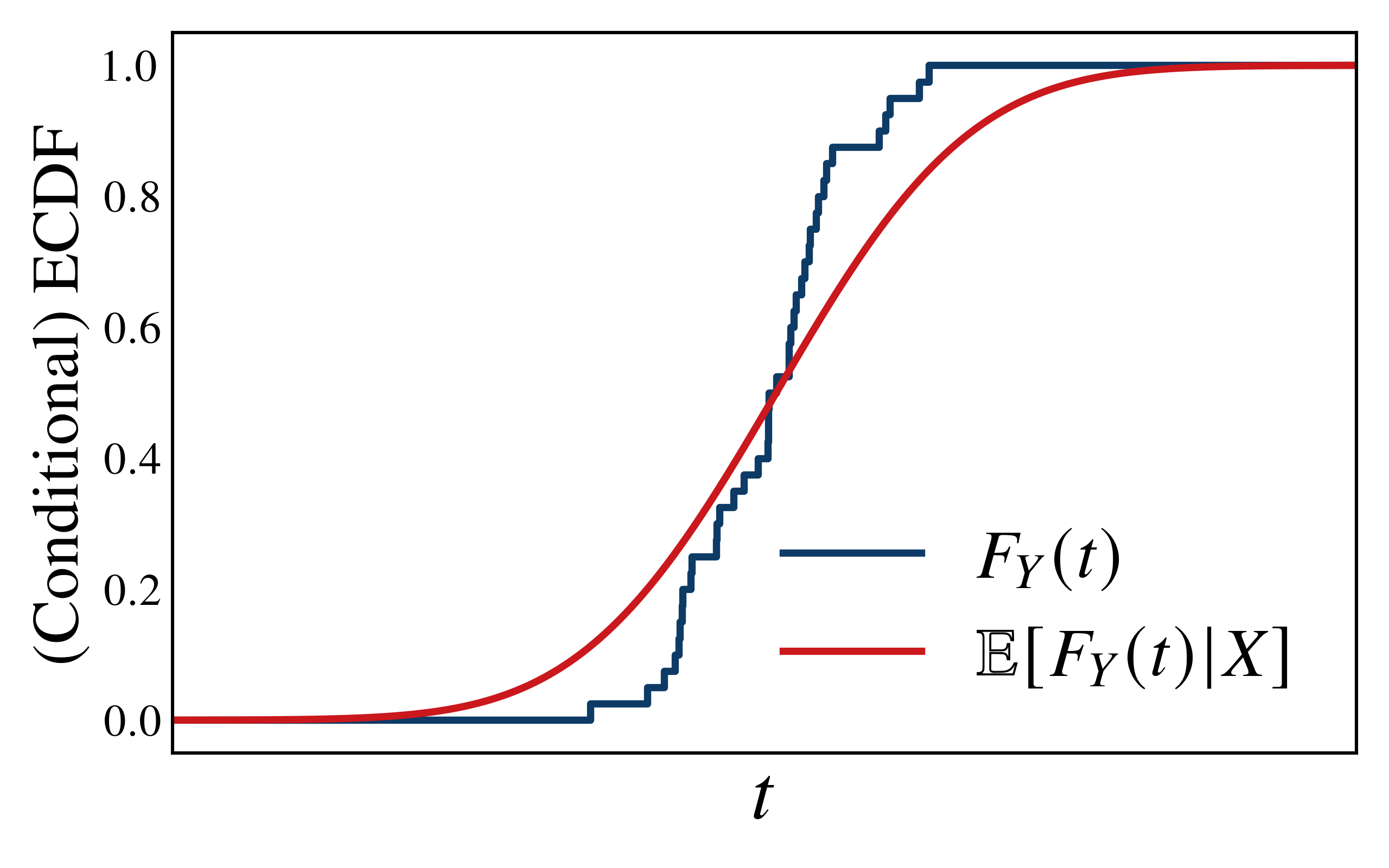}
        \caption{An empirical CDF vs its conditional expectation}
        \label{fig:ecdf-vs-cond-ex}
    \end{subfigure}
        \caption{Subfigure~(\subref{fig:two-sample-cvm}) shows the empirical CDFs (ECDF) for two datasets $X=\cbr{X_1,\ldots,X_n}$, $Y=\cbr{Y_1,\ldots,Y_m}$.
        The gray lines are the differences between the two curves at each point in 
        \(
            \rbr{
                X_1,\ldots,X_n,
                Y_1,\ldots,Y_m
            }
        \),
        and are used to calculate the two-sample CvM test 
        in~\eqref{eq:cvmtest}. 
        Subfigure~(\subref{fig:ecdf-vs-cond-ex}) replaces 
        $F_Y(t)$ with $\EE[F_Y(t)|X]$ which can be thought of as the best approximation to $F_{Y}(t)$ based on having seen $X$. 
        }
\end{figure}

\vspace{-0.3cm}
\subsection{Warm-up when \texorpdfstring{$\dataspace=\RR$}{TEXT}}
\label{sec:single-var-alg}
\vspace{-0.1cm}

\textbf{Algorithm~\ref{alg:single-var-cvms} description.}
To build intuition, we first study the simple one-dimensional case $\dataspace = \RR$.
The mechanism works by aggregating all of the submissions $\cbr{\subdatai}_{i=1}^{\numagents}$ and for each agent $i\in[\numagents]$, 
computing a (randomized) loss $\teststati$.
To compute $\teststati$, an evaluation point $\testsamplei$ is first randomly sampled from the data 
submitted by the other agents $\subdatami$. The remaining data $\cleandatai$ is used to define the empirical CDF $F_{\cleandatai}$.
The loss $\teststati$ is then defined as the squared difference between this ECDF evaluated at $\testsamplei$, 
\ie $F_{\cleandatai}(\testsamplei)$, and its conditional expectation given
\(
    (
        \initdataiijj{i}{1},
        \ldots,
        \initdataiijj{i}{\abr{\subdatai}},
        \testsamplei
    )
\)
 evaluated at
 \(
    (
        \subdataiijj{i}{1},
        \ldots,
        \subdataiijj{i}{\abr{\subdatai}},
        \testsamplei
    )
\).
Finally, the mechanism outputs $\teststati\in[0,1]$ as agent $i$'s loss.

\emph{Design intuition:}
The conditional expectation 
\(
    \EE\sbr{
        F_{\cleandatai}\rbr{\testsamplei}
        |
        \initdataiijj{i}{1},
        \ldots,
        \initdataiijj{i}{\abr{\subdatai}},
        \testsamplei
    }
\)
can be thought of as the best guess for $F_{\cleandatai}(\testsamplei)$ having seen 
\(
    (
        \initdataiijj{i}{1},
        \ldots,
        \initdataiijj{i}{\abr{\subdatai}}
    )
\).
Thus, 
\(
    \EE\sbr{
        F_{\cleandatai}\rbr{\testsamplei}
        |
        \initdataiijj{i}{1} = \subdataiijj{i}{1},
        \ldots,
        \initdataiijj{i}{\abr{\subdatai}} = \subdataiijj{i}{\abr{\subdatai}},
        \testsamplei
    } 
\)
can be thought of as the best guess for $F_{\cleandatai}(\testsamplei)$ assuming that
\(
    (
        \subdataiijj{i}{1},\ldots,\subdataiijj{i}{\abr{\subdatai}}
    )
\)
is the agent's true data.
A visual comparison of $F_{\cleandatai}(\testsamplei)$ to 
\(
    \EE\sbr{
        F_{\cleandatai}\rbr{\testsamplei}
        |
        \initdataiijj{i}{1},
        \ldots,
        \initdataiijj{i}{\abr{\subdatai}},
        \testsamplei
    }
\)
can be seen in Fig.~\ref{fig:ecdf-vs-cond-ex}.

The loss $\teststati$ defined above is well-posed and computable. As demonstrated in our experiments 
(with derivations in Appendix~\ref{app:conj-prior-calculations}), closed-form expressions for $\teststati$ can be derived 
in simple conjugate settings such as Gaussian-Gaussian and Bernoulli-Beta, enabling efficient implementations. For 
more complex prior distributions, numerical approximations using methods such as MCMC~\citep{gilks1995markov} or 
variational inference~\citep{wainwright2008graphical}
 can be employed.

\parahead{Theoretical results}
We now present the theoretical properties of Algorithm~\ref{alg:single-var-cvms}. To satisfy the MIB condition, we require that the prior $\bayesprior$ meet a non-degeneracy condition, formalized in Definition~\ref{def:prior-degnerate}. Intuitively, this condition ensures that the posterior changes upon observing an additional data point. Examples of degenerate priors include those that select a fixed distribution $\datadist$ with probability 1, 
or choose $\datadist$ to be a degenerate distribution $\delta_x, x\in\dataspace$ with probability 1.  In such cases, data sharing is meaningless, as the distribution is either fully known or revealed by a single sample. Thus, it is natural to assume $\bayesprior$ is non-degenerate, so that additional data remains informative.

\begin{restatable}{definition}{PriorDegenerate}
    \label{def:prior-degnerate}(Degenerate priors):
    Let 
    \(
        \distfromprior\sim\bayesprior
    \)
    and 
    \(
        \cbr{X_i}_{i=1}^\infty, T,Z\drawniid \distfromprior
    \). 
    We say that $\bayesprior$ is degenerate if for some $n\in\NN$,
    \(
        P\rbr{
            Z\leq T
            |
            T,X_1,\ldots,X_n
        }
        \eqas
        P\rbr{
            Z\leq T
            |
            T,X_1,\ldots,X_{n+1}
        }
        .
    \)
\end{restatable}

\begin{algorithm}[t]
    \caption{A single variable Cramér–von Mises style statistic}
    \label{alg:single-var-cvms}
    \begin{algorithmic}[1]
        \State 
            \textbf{Input parameters}: A prior $\bayesprior$ over the set of $\RR$-valued distributions. 
        \State 
            \textbf{for} each agent $i\in[\numagents]$:
            \sindent[1]
                \(
                    \subdatami
                    \leftarrow
                    \rbr{
                        \subdataiijj{j}{\ell}
                    }_{
                        j\neq i,
                        \ell\in\abr{\subdataj}
                    }
                \).
                \sindent[1]
                    Sample 
                    \(
                        j\sim\Unif\rbr{
                                1,\ldots,
                                \abr{\subdatami}
                        }
                    \)
                    and set
                    ~
                    $\testsamplei\leftarrow \subdataiijj{\textup{--}i}{j}$,
                    ~
                    \(
                        \cleandatai\leftarrow
                        \rbr{
                            \subdataiijj{\textup{--}i}{\ell}
                        }_{\ell\neq j}
                    \).
                \sindent[1]
                    Return 
                    \(
                        \teststati
                        \leftarrow
                        \rbr{
                            \EE\sbr{
                                F_{\cleandatai}\rbr{\testsamplei}
                                |
                                \initdataiijj{i}{1} = \subdataiijj{i}{1},
                                \ldots,
                                \initdataiijj{i}{\abr{\subdatai}} = \subdataiijj{i}{\abr{\subdatai}},
                                \testsamplei
                            }
                            -
                            F_{\cleandatai}\rbr{\testsamplei}
                        }^2
                    \).
    \end{algorithmic}
\end{algorithm}

Theorem~\ref{thm:single-var-cvms-properties} shows that Algorithm~\ref{alg:single-var-cvms} satisfies truthfulness for all priors $\bayesprior$, and MIB when $\bayesprior$ is not degenerate.
The key idea for truthfulness is that by computing the aforementioned conditional expectation, the mechanism performs, 
on behalf of agent $i$, the best possible guess 
for $F_{\cleandatai}(\testsamplei)$ just using $\subdatai$. Thus, it is in agent $i$'s best interest if
$\subdatai=\initdatai$. 

\begin{restatable}{theorem}{SingleVarCvmsProperties}
    \label{thm:single-var-cvms-properties}
    The mechanism in Algorithm~\ref{alg:single-var-cvms} satisfies truthfulness. 
    Moreover, when $\bayesprior$ is not degenerate,
    then Algorithm~\ref{alg:single-var-cvms} also satisfies MIB.
\end{restatable}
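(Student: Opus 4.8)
\emph{Proof sketch (plan).}
The plan is to recast the loss as a prediction problem and then apply the $L^2$-optimality of the conditional mean. Suppose all agents $j\neq i$ report truthfully, so that $\subdatami$ consists of $\abr{\subdatami}=:N$ points drawn i.i.d.\ from $\distfromprior$; we take $N\ge 2$ so that $\cleandatai\neq\emptyset$ and $F_{\cleandatai}$ is well defined. Drawing $\testsamplei$ uniformly from $\subdatami$ and letting $\cleandatai$ be the remaining points, exchangeability of i.i.d.\ samples implies that $(\testsamplei,\cleandatai)$, jointly with agent $i$'s data, has the same law as $(T,\{Z_1,\dots,Z_{N-1}\})$ where $T,Z_1,\dots,Z_{N-1}\drawniid\distfromprior$ independently of agent $i$'s data given $\distfromprior$. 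Write $W:=F_{\cleandatai}(\testsamplei)$ and observe $\EE[W\mid\distfromprior,\testsamplei]=F_{\distfromprior}(\testsamplei)$, the CDF of $\distfromprior$ at $\testsamplei$, for any $N$. Since the points of $\cleandatai$ are conditionally independent of agent $i$'s dataset given $\distfromprior$, the conditional expectation the mechanism forms when agent $i$ submits a set $\subdatai$ with $\abr{\subdatai}=k$ equals
\[
  g(\subdatai,\testsamplei)\;:=\;\EE\left[F_{\distfromprior}(\testsamplei)\;\middle|\;\initdataiijj{i}{1}=\subdataiijj{i}{1},\,\dots,\,\initdataiijj{i}{k}=\subdataiijj{i}{k},\;\testsamplei\right],
\]
the posterior mean of $F_{\distfromprior}(\testsamplei)$ given $k$ hypothetical samples equal to the submitted points; notably $g$ does not depend on agent $i$'s \emph{true} sample size, and $\teststati=(g(\subdatai,\testsamplei)-W)^2$.

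For \textbf{truthfulness}, I would condition on $(\initdatai,\testsamplei)$. Since a strategy depends only on the agent's own data, $g(\subdatai,\testsamplei)=g(\subfunci(\initdatai),\testsamplei)$ is measurable with respect to $(\initdatai,\testsamplei)$ and private randomness independent of $W$ given $(\initdatai,\testsamplei)$; hence
\[
  \EE\left[(g(\subdatai,\testsamplei)-W)^2\mid\initdatai,\testsamplei\right]\;\ge\;\EE\left[\bigl(\EE[W\mid\initdatai,\testsamplei]-W\bigr)^2\mid\initdatai,\testsamplei\right].
\]
Conditional independence of the clean points from $\initdatai$ given $\distfromprior$ gives $\EE[W\mid\initdatai,\testsamplei]=\EE[F_{\distfromprior}(\testsamplei)\mid\initdatai,\testsamplei]=g(\initdatai,\testsamplei)$, so the lower bound is attained exactly by truthful reporting ($\subdatai=\initdatai$, $k=\nni$). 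Taking expectations gives $\EE[\teststati(\{\identity\}_j)]\le\EE[\teststati(\subfunci,\{\identity\}_{j\neq i})]$ for every $\subfunci$, which is the desired Nash inequality, and the argument used no assumption on $\bayesprior$.

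For \textbf{MIB}, write $X_{1:n}=(X_1,\dots,X_n)$ for agent $i$'s truthful points, coupled across sample sizes through one infinite sequence $X_1,X_2,\dots\drawniid\distfromprior$. The truthfulness computation shows that under truthful reporting with $n$ points, $\EE[\teststati]=L(n):=\EE[W^2]-\EE\bigl[(\EE[W\mid X_{1:n},\testsamplei])^2\bigr]$. Since $\sigma(X_{1:n},\testsamplei)\subseteq\sigma(X_{1:n+1},\testsamplei)$, the tower property yields
\[
  \EE\bigl[(\EE[W\mid X_{1:n+1},\testsamplei])^2\bigr]-\EE\bigl[(\EE[W\mid X_{1:n},\testsamplei])^2\bigr]=\EE\bigl[(\EE[W\mid X_{1:n+1},\testsamplei]-\EE[W\mid X_{1:n},\testsamplei])^2\bigr]\;\ge\;0,
\]
so $L$ is non-increasing and $L(\abr{\initdatai'})\le L(\abr{\initdatai})$ whenever $\abr{\initdatai'}>\abr{\initdatai}$. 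The inequality from $n$ to $n+1$ is strict iff $\EE[W\mid X_{1:n},\testsamplei]\neq\EE[W\mid X_{1:n+1},\testsamplei]$ on a positive-probability event. Finally, linearity of the ECDF and conditional independence of the clean points from $X_{1:n}$ given $\distfromprior$ give $\EE[W\mid X_{1:n},\testsamplei]=P(Z\le\testsamplei\mid X_{1:n},\testsamplei)$ for a fresh $Z\drawniid\distfromprior$, which in the notation of Definition~\ref{def:prior-degnerate} (with $\testsamplei$ playing the role of $T$) is $P(Z\le T\mid T,X_1,\dots,X_n)$. Hence non-degeneracy of $\bayesprior$ says exactly that this strict inequality holds with positive probability for every $n$, giving $L(\abr{\initdatai'})<L(\abr{\initdatai})$.

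I expect the main obstacle to be the reduction carried out in the first paragraph: showing the mechanism's conditional expectation coincides with the posterior mean of $F_{\distfromprior}(\testsamplei)$ and is insensitive to the agent's true sample size (which relies on the clean data being conditionally independent of agent $i$'s data given $\distfromprior$, together with unbiasedness of the ECDF), and reducing the uniform draw of $\testsamplei$ to fresh i.i.d.\ sampling via exchangeability. Once that is in place, truthfulness is the standard optimality of the conditional mean in mean-squared error, and MIB is the standard monotonicity of $n\mapsto\EE\bigl[(\EE[W\mid X_{1:n},\testsamplei])^2\bigr]$ along an increasing filtration, with Definition~\ref{def:prior-degnerate} supplying strictness.
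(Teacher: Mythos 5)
Your proposal is correct and follows essentially the same route as the paper: truthfulness via the $L^2$-optimality of the conditional expectation combined with the observation that $g(\subfunci(\initdatai),\testsamplei)$ is $\sigma(\initdatai,\testsamplei)$-measurable, and MIB via the variance-decomposition identity $L(n)-L(n+1)=\EE\bigl[(\EE[W\mid X_{1:n+1},\testsamplei]-\EE[W\mid X_{1:n},\testsamplei])^2\bigr]$ together with the identification of these conditional expectations with the probabilities in Definition~\ref{def:prior-degnerate}. The only cosmetic difference is that you pass through the posterior mean of $F_{\distfromprior}(\testsamplei)$ explicitly, whereas the paper works directly with $F_{\cleandatai}(\testsamplei)$ and its Lemmas on conditional-mean optimality and total variance; the substance is identical.
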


While the previous theorem indicates that submitting more data is beneficial for the agent, it does not quantify how an agent's
loss decreases as they contribute more data.
The following proposition quantifies this by 
offering bounds on how an agent's expected loss decreases with the amount of data they submit, assuming
all agents are truthful. 
This handle on $\EE\sbr{\teststati}$, along with the property that $\teststati\in[0,1]$, is useful for applying our mechanism to 
data sharing applications as we will see in \S\ref{sec:applications}.

\begin{restatable}{proposition}{SingleVarAlgExpTestBounds}
    \label{prop:single-var-alg-exp-test-bounds}
    Let $\teststati\rbr{\cbr{\identity}_{i=1}^\numagents}$ denote the value of $\teststati$ when agents 
    are truthful
    in Algorithm~\ref{alg:single-var-cvms}.
    Then,
    \(
        0\leq 
        \EE\sbr{
            \teststati\rbr{
                \cbr{\identity}_{i=1}^{\numagents}
            }
        }
        \leq 
        \frac{1}{4}
        \Big(
            \frac{1}{\abr{\initdatai}}
            +
            \frac{1}{\abr{\cleandatai}}
        \Big)
    \).
    Moreover, when $\bayesprior$ is a prior over the set of continuous $\RR$-valued distributions, 
    \(
        \frac{1}{6\abr{\cleandatai}}
        \leq 
        \EE\sbr{
            \teststati\rbr{
                \cbr{\identity}_{i=1}^{\numagents}
            }
        }
        \leq
        \frac{1}{6}
        \Big(
            \frac{1}{\abr{\initdatai}}
            +
            \frac{1}{\abr{\cleandatai}}
        \Big)
    \).
\end{restatable}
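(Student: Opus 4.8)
The plan is to reduce the entire statement to a single elementary quantity, the expected Bernoulli variance $q := \EE\sbr{p(1-p)}$, where $p := P_{Z\sim\distfromprior}\rbr{Z\le\testsamplei}$ is the CDF of $\distfromprior$ evaluated at the random test point $\testsamplei$, viewed as a function of $(\distfromprior,\testsamplei)$. Write $n=\abr{\initdatai}$, $k=\abr{\cleandatai}$, let $X_1,\dots,X_n$ denote agent $i$'s (truthfully reported) data and $\cleandatai=\cbr{Z_1,\dots,Z_k}$. First I would record a bookkeeping fact: since all agents are truthful, $\subdatami$ is a pool of i.i.d.\ samples from $\distfromprior$ and $\testsamplei$ is a uniformly random one of them, so conditionally on $\distfromprior$ the variables $X_1,\dots,X_n,\testsamplei,Z_1,\dots,Z_k$ are jointly i.i.d.\ from $\distfromprior$; moreover the submitted data coincides with the true data, so the conditional expectation inside the loss returned by Algorithm~\ref{alg:single-var-cvms} is (a version of) $\EE\sbr{F_{\cleandatai}(\testsamplei)\mid X_{1:n},\testsamplei}$. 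Writing $\teststati$ for its value under truthful reporting, this gives
\[
  \EE\sbr{\teststati} \;=\; \EE\sbr{\mathrm{Var}\rbr{F_{\cleandatai}(\testsamplei)\mid X_{1:n},\testsamplei}} \;\ge\; 0 .
\]

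Next I would split this residual variance by conditioning additionally on the unknown distribution. With $\mathcal G = \sigma(X_{1:n},\testsamplei)$ and the finer $\mathcal F = \sigma(\distfromprior,X_{1:n},\testsamplei)\supseteq\mathcal G$, the law of total variance in conditional form, followed by taking expectations, yields
\[
  \EE\sbr{\teststati} \;=\; \underbrace{\EE\sbr{\mathrm{Var}\rbr{F_{\cleandatai}(\testsamplei)\mid\mathcal F}}}_{(\mathrm A)} \;+\; \underbrace{\EE\sbr{\mathrm{Var}\rbr{\EE\sbr{F_{\cleandatai}(\testsamplei)\mid\mathcal F}\mid\mathcal G}}}_{(\mathrm B)} ,
\]
with $(\mathrm A),(\mathrm B)\ge 0$. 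Conditioning on $(\distfromprior,\testsamplei)$, the $k$ indicators comprising $F_{\cleandatai}(\testsamplei)$ are i.i.d.\ $\mathrm{Bernoulli}(p)$ and independent of $X_{1:n}$, so $\EE\sbr{F_{\cleandatai}(\testsamplei)\mid\mathcal F}=p$ and $(\mathrm A)=q/k$. Since $p(1-p)\le\frac14$ we get $(\mathrm A)\le\frac1{4k}$, and when $\bayesprior$ is supported on continuous distributions the probability integral transform gives $p\mid\distfromprior\sim\mathrm{Unif}(0,1)$, hence $q=\int_0^1 u(1-u)\,du=\frac16$ and $(\mathrm A)=\frac1{6k}$ exactly. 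Combined with $(\mathrm B)\ge0$, this already proves the lower bounds: $\EE\sbr{\teststati}\ge0$ in general, and $\EE\sbr{\teststati}\ge\frac1{6\abr{\cleandatai}}$ in the continuous case.

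For the upper bounds the crucial observation is that $(\mathrm B)=\EE\sbr{\mathrm{Var}(p\mid X_{1:n},\testsamplei)}$ is a genuinely prior-dependent posterior variance that need not be evaluated, only bounded. Since the conditional expectation is the $L^2$-optimal $\mathcal G$-measurable predictor of $p$, for every measurable $\psi$ we have $(\mathrm B)\le\EE\sbr{\rbr{p-\psi(X_{1:n},\testsamplei)}^2}$, and I would take $\psi$ to be agent $i$'s own empirical CDF, $\psi(X_{1:n},\testsamplei)=\frac1n\sum_{\ell=1}^n 1\cbr{X_\ell\le\testsamplei}$. Writing $p-\psi(X_{1:n},\testsamplei)=\frac1n\sum_\ell\rbr{p-1\cbr{X_\ell\le\testsamplei}}$ and conditioning on $(\distfromprior,\testsamplei)$ makes the summands i.i.d.\ with mean $0$ and variance $p(1-p)$, so $\EE\sbr{\rbr{p-\psi}^2}=q/n$ and hence $(\mathrm B)\le q/n$. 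Adding the estimates for $(\mathrm A)$ and $(\mathrm B)$,
\[
  \EE\sbr{\teststati} \;\le\; q\rbr{\frac1n+\frac1k} ,
\]
which is at most $\frac14\rbr{\frac1{\abr{\initdatai}}+\frac1{\abr{\cleandatai}}}$ in general and at most $\frac16\rbr{\frac1{\abr{\initdatai}}+\frac1{\abr{\cleandatai}}}$ in the continuous case; together with the continuous-case lower bound from $(\mathrm A)$ this closes the sandwich.

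The routine ingredients are the two conditional second-moment computations for $(\mathrm A)$ and $(\mathrm B)$ and the probability integral transform. The one load-bearing idea, and the step I expect to be the main obstacle, is recognizing that the posterior variance $\mathrm{Var}(p\mid X_{1:n},\testsamplei)$ — which depends intricately on $\bayesprior$ and is generally uncomputable — can be bounded uniformly over all priors by feeding the suboptimal empirical-CDF predictor into the variational characterization of conditional expectation. A minor technicality is ties between $X_\ell$ and $\testsamplei$ when $\bayesprior$ is not supported on continuous distributions; these cause no difficulty because only upper bounds are needed there, as long as one consistently uses $\le$ and the right-continuous CDF. The degenerate size cases $\abr{\cleandatai}=0$ or $\abr{\initdatai}=0$ make the stated bounds vacuous and require no separate treatment.
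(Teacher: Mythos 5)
Your proof is correct and rests on the same ingredients as the paper's: the $L^2$-optimality of conditional expectation, comparison against agent $i$'s own empirical CDF $F_{\initdatai}(\testsamplei)$ for the upper bound, comparison against the true CDF $F_{\distfromprior}(\testsamplei)$ (i.e.\ conditioning on $\distfromprior$) for the lower bound, and the probability integral transform for the $1/6$ constant. The only difference is packaging: you obtain both bounds at once from a single law-of-total-variance decomposition $\EE\sbr{\teststati}=(\mathrm A)+(\mathrm B)$ with $(\mathrm A)=q/\abr{\cleandatai}$ exact and $(\mathrm B)\le q/\abr{\initdatai}$, whereas the paper applies its Lemma~\ref{lem:cond-exp-mse} twice in opposite directions and invokes the separately proven one- and two-sample computations (Lemmas~\ref{lem:sv-1-sample-cvm-eq} and~\ref{lem:sv-2-sample-cvm-ub}), which amount to exactly the same quantities.
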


\vspace{-0.2cm}
\subsection{A General Mechanism with Feature Maps}
\label{sec:general-alg}
\vspace{-0.2cm}

We now extend our mechanism and to handle data from arbitrary distributions. 
The key modification is the introduction of feature maps: functions chosen by the mechanism designer that
transform 
general data distributions
into 
$\RR$--valued distributions to apply our mechanism to.

\textbf{Feature maps.}
We define a feature map to be any measurable function $\featmap:\dataspace\rightarrow\RR$ which maps the data to a single variable distribution.
We will see that any 
collection of feature maps $\{\featmapl:\dataspace\rightarrow\RR\}_{\featind=1}^\numfeat$ which
map the data to a collection of single variable distributions supports a 
truthful mechanism.
However,
some feature maps perform better than others depending on the use case, so we allow the mechanism designer flexibility to select maps. For Euclidean data, coordinate projections may suffice, while for complex data like text or images,  embeddings from deep learning models are more appropriate (as used in our experiments in \S\ref{sec:real-world-experiments}).

\textbf{Algorithm~\ref{alg:general-cvms} description.}
The mechanism 
designer first specifies a collection of feature maps, 
$\{\featmapl\}_{\featind=1}^\numfeat$
based on the publicly known prior $\bayesprior$. After this,
Algorithm~\ref{alg:general-cvms} can be viewed as applying Algorithm~\ref{alg:single-var-cvms} 
for each feature $\featind\in[\numfeat]$, making use of $\featmapl$ to map general data in $\dataspace$ to $\RR$.

\begin{algorithm}[t]
    \caption{A feature-based Cramér–von Mises style statistic}
    \label{alg:general-cvms}
    \begin{algorithmic}[1]
        \State 
        \textbf{Input parameters}: A prior $\bayesprior$ over the set of $\dataspace$-valued distributions, 
         feature maps $\{\featmapl\}_{\featind=1}^\numfeat$.
            \State
                \textbf{for} each agent $i\in[\numagents]$:
                \sindent[1]
                    \(
                        \subdatami
                        \leftarrow
                        \rbr{
                            \subdataiijj{j}{\ell}
                        }_{
                            j\neq i,
                            \ell\in\abr{\subdataj}
                        }
                    \).
                \vspace{0.1cm}
                \sindent[1]
                    Sample 
                    \(
                        j\sim\Unif\rbr{
                                1,\ldots,
                                \abr{\subdatami}
                        }
                    \)
                    and set
                    ~
                    $\testsamplei\leftarrow \subdataiijj{\textup{--}i}{j}$,
                    ~
                    \(
                        \cleandatai\leftarrow
                        \rbr{
                            \subdataiijj{\textup{--}i}{\ell}
                        }_{\ell\neq j}
                    \).
                \sindent[1]
                    \textbf{for} each feature $\featind\in[\numfeat]$:
                        \vspace{0.1cm}
                        \sindent[2]
                            \(
                                \cleandatail
                                \leftarrow
                                \rbr{
                                    \featmapl\rbr{
                                        Z_{i,j}
                                    }
                                }_{j=1}^{\abr{\cleandatai}}
                            \),~~
                            \(
                                \testsampleil
                                \leftarrow
                                \featmapl\rbr{\testsamplei}
                            \).
                        \sindent[2]
                            \label{line:squared-diff-general-cvm}
                            \(
                                \teststatil
                                \leftarrow
                                \rbr{
                                    \EE\sbr{
                                        F_{\cleandatail}
                                        \rbr{
                                            \testsampleil
                                        }
                                        \big |
                                        \initdataiijj{i}{1} = \subdataiijj{i}{1},
                                        \ldots,
                                        \initdataiijj{i}{\abr{\subdatai}} = \subdataiijj{i}{\abr{\subdatai}},
                                        \testsampleil
                                    }
                                    - 
                                    F_{\cleandatail}
                                    \rbr{
                                        \testsampleil
                                    }
                                }^2
                            .
                            \)
                            \label{eq:general-cmvs-cond-exp}
                    \sindent[1]
                        Return
                        \(
                            \teststati
                            \leftarrow
                            \frac{1}{\numfeat}
                            \sum_{\featind=1}^{\numfeat}
                            \teststatil
                        \).
    \end{algorithmic}
\end{algorithm}

The following theorem shows that Algorithm~\ref{alg:general-cvms} is truthful, which
is a result of the same arguments made in Theorem~\ref{thm:single-var-cvms-properties}, now repeated 
for each feature map. For MIB, we 
require an analogous condition to the one given in 
Theorem~\ref{thm:single-var-cvms-properties}, stating that more data leads to a more informative posterior distribution
for at least one of the $\numfeat$ features. To state this formally, we first extend Definition~\ref{def:prior-degnerate}.

\begin{restatable}{definition}{PriorDegenerateFeature}
    Let
    \(
        \distfromprior\sim\bayesprior
    \)
    and
    \(
        \cbr{X_i}_{i=1}^\infty, T,Z\drawniid \distfromprior
    \). 
    We say that $\bayesprior$ is degenerate for feature $\featind\in[\numfeat]$ if for some $n\in\NN$,
    \[
        P\rbr{
            \featmapl(Z)\leq \featmapl(T)
            |
            \featmapl(T),X_1,\ldots,X_n
        }
        \eqas
        P\rbr{
            \featmapl(Z)\leq \featmapl(T)
            |
            \featmapl(T),X_1,\ldots,X_{n+1}
        }
        .
    \]
\end{restatable}  

\begin{restatable}{theorem}{GeneralCvmsProperties}
    \label{thm:general-cvms-properties}
    The mechanism in Algorithm~\ref{alg:general-cvms} satisfies truthfulness.
    Moreover, if there is a feature $\featind\in[\numfeat]$,
    for which $\bayesprior$ is not degenerate, 
    then Algorithm~\ref{alg:general-cvms} also satisfies MIB.
\end{restatable}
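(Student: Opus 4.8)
The plan is to reduce to Theorem~\ref{thm:single-var-cvms-properties} one feature at a time and combine the per-feature conclusions by linearity of expectation, using that $\teststati=\frac{1}{\numfeat}\sum_{\featind=1}^{\numfeat}\teststatil$. Fix an agent $i$ and suppose every other agent reports truthfully, so that $\subdatami$ is a pool of i.i.d.\ draws from $\distfromprior$ with $\distfromprior\sim\bayesprior$, of which $\testsamplei$ is one element and $\cleandatai$ the rest. It then suffices to establish, for each $\featind$: (a) truthful reporting by agent $i$ minimizes $\EE[\teststatil]$ over all of agent $i$'s strategies; and (b) $\EE[\teststatil]$ is nonincreasing in the number of points agent $i$ truthfully submits, and strictly decreasing in it for the feature at which $\bayesprior$ is non-degenerate. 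Averaging (a) over $\featind$ gives truthfulness; averaging (b) gives MIB.

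For (a), fix agent $i$'s true dataset $\initdatai$ and the realized featurized test value $\testsampleil$, and write $Y=F_{\cleandatail}(\testsampleil)$. The feature-$\featind$ loss is $\left(h_\featind(\subdatai,\testsampleil)-Y\right)^2$, where $h_\featind(\subdatai,\testsampleil)$ denotes the conditional expectation in Algorithm~\ref{alg:general-cvms} computed as though $\subdatai$ were agent $i$'s true data. Exactly as in the one-dimensional warm-up, $h_\featind(\subdatai,\testsampleil)$ is the posterior probability that one fresh featurized sample lands at or below $\testsampleil$; in particular it does not depend on $\abr{\cleandatai}$, and under truthful reporting ($\subdatai=\initdatai$) it equals $\EE[Y\mid\initdatai,\testsampleil]$. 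Since $h_\featind(\subfunci(\initdatai),\testsampleil)$ is, for any strategy $\subfunci$, a function of $(\initdatai,\testsampleil)$ together with independent strategy randomness, the bias--variance identity gives $\EE\sbr{(h_\featind(\subdatai,\testsampleil)-Y)^2\mid\initdatai,\testsampleil}\ge\EE\sbr{(\EE[Y\mid\initdatai,\testsampleil]-Y)^2\mid\initdatai,\testsampleil}$, attained with equality by the truthful strategy. Integrating over $(\initdatai,\testsampleil)$ and averaging over $\featind$ proves truthfulness; this is just the Theorem~\ref{thm:single-var-cvms-properties} argument applied to the pushforward of $\distfromprior$ under $\featmapl$, and I expect it to be routine.

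For (b), the equality case just established gives, when agent $i$ truthfully submits $n$ points $X_{1:n}$, that $\EE[\teststatil]=\EE\sbr{\mathrm{Var}(Y\mid X_{1:n},\testsampleil)}$. Enlarging the conditioning $\sigma$-field never increases expected conditional variance, so $\EE[\teststatil]$ is nonincreasing in $n$ for every feature, which already yields the non-strict version of MIB. For the strict decrease, take the feature $\featind$ at which $\bayesprior$ is not degenerate and apply the law of total variance:
\[
\EE\sbr{\mathrm{Var}(Y\mid X_{1:n},\testsampleil)}
=\EE\sbr{\mathrm{Var}(Y\mid X_{1:n+1},\testsampleil)}
+\EE\sbr{\mathrm{Var}(A_{n+1}\mid X_{1:n},\testsampleil)},
\]
where $A_{n+1}=\EE[Y\mid X_{1:n+1},\testsampleil]$ equals the posterior predictive $P(\featmapl(Z)\le\testsampleil\mid X_{1:n+1},\testsampleil)$ for a fresh draw $Z\sim\distfromprior$, i.e.\ the quantity in the feature-$\featind$ analogue of Definition~\ref{def:prior-degnerate}. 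Non-degeneracy states that $A_{n+1}$ is not almost surely equal to $A_{n}=\EE[A_{n+1}\mid X_{1:n},\testsampleil]$, so the third term is strictly positive; hence $\EE[\teststatil]$ strictly decreases at every step $n\to n+1$, and therefore between $\abr{\initdatai}$ and any $\abr{\initdatai'}>\abr{\initdatai}$. Since all other features are at least weakly decreasing, averaging with weights $1/\numfeat$ gives strict MIB.

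The step I expect to demand the most care is the conditioning bookkeeping behind (a) and (b): the mechanism conditions its expectation on agent $i$'s \emph{raw} submitted points and on the \emph{featurized} test value $\testsampleil=\featmapl(\testsamplei)$ (rather than on $\testsamplei$ itself), and it ignores $\abr{\cleandatai}$; one must verify that, under truthful reporting, $h_\featind(\initdatai,\testsampleil)$ nonetheless coincides with the Bayes-optimal $L^2$ predictor of $F_{\cleandatail}(\testsampleil)$ given agent $i$'s true information, so that the projection identity applies verbatim and the variance gap is exactly the Definition~\ref{def:prior-degnerate} quantity. A minor edge case is to ensure $\abr{\cleandatai}\ge 1$ --- i.e.\ the other agents pool at least two points --- so that the empirical CDFs are well defined.
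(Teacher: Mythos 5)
Your proposal is correct and follows essentially the same route as the paper: per-feature reduction to the one-dimensional argument, the $L^2$-projection (bias--variance) identity for truthfulness with the measurability of $\subfunci(\initdatai)$ doing the work, and the law-of-total-variance decomposition $\EE[\Var(Y\mid X_{1:n},\testsampleil)]-\EE[\Var(Y\mid X_{1:n+1},\testsampleil)]=\EE[\Var(A_{n+1}\mid X_{1:n},\testsampleil)]$ combined with per-feature non-degeneracy to get strict MIB (the paper packages this identity as a separate lemma and a sensitivity proposition, but the content is identical). No gaps.
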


Proposition~\ref{prop:general-alg-exp-test-bounds} (Appendix~\ref{app:general-ne-results-proofs}), analogous to Proposition~\ref{prop:single-var-alg-exp-test-bounds}, quantifies how $\teststati$ decreases with data size, which will be useful when using this loss in data sharing applications.
Additionally, Proposition~\ref{prop:general-cvms-sensitivity} (Appendix~\ref{app:general-ne-results-proofs})
gives an explicit relationship for how the expected loss changes when an agent submits an additional data point, 
depending on the prior and feature maps.
This exactly quantifies how much lower an agent's loss is when submitting more data.
\vspace{-0.2cm}
\section{A Prior Agnostic Mechanism}
\label{sec:approx-ne-alg}
\vspace{-0.17cm}

While our mechanism in~\S\ref{sec:exact-ne-algs} applies broadly in Bayesian settings, it has two practical limitations. First, specifying a meaningful prior can be difficult, especially for complex data like text or images. Second, even with a suitable prior, computing the conditional expectation in line~\ref{eq:general-cmvs-cond-exp} may be intractable due to the cost of Bayesian posterior inference. To address this, we introduce a prior-agnostic variant that is significantly easier to compute. The trade-off is that truthful reporting becomes an $\varepsilon$-approximate NE, where $\varepsilon$ vanishes as the amount of submitted data grows.

\textbf{Changes to Algorithm~\ref{alg:general-cvms}.}
Thus far, we have only focused on the Bayesian setting, assuming that agents wish to minimize their expected 
loss
\(
    \EEV{\distfromprior\sim\bayesprior}
    \sbr{
        \EEV{
            \cbr{\initdatai}_{i=1}^\numagents\sim\distfromprior
        }\sbr{
            \teststati
        }
    }
\). 
However, this modification also supports a frequentist view
    where agents wish to minimize their worst case
expected loss over a class $\freqclass$ possible distributions, i.e. 
\(
    \sup_{
        \datadist\in\freqclass
    }
    \EEV{
        \cbr{\initdatai}_{i=1}^\numagents\sim\distfromprior
    }\sbr{
        \teststati
    }
\).
In the frequentist setting, the class $\freqclass$ is the analog of the prior $\bayesprior$.
As such, our prior agnostic mechanism does not have a prior $\bayesprior$ as input.

Algorithm~\ref{alg:prior-free-general-cvms} computes each agent's loss as follows: first 
partition
$\subdatami$ into three parts, (1) an evaluation point $\testsamplei$, (2) data to augment agent $i$'s submission with $\suppcleandatai$,
and (3) data to compare agent $i$'s submission against $\cleandatai$. The mechanism designer is free to choose how much data to
allocate to $\suppcleandatai$ as given by the map $\testsizefunc$. 
For each feature $\featind\in[\numfeat]$, we then obtain $\testsampleil$, $\suppcleandatail$, and $\cleandatail$ by applying $\featmapl$.
The main modification of the prior-agnostic mechanism is that the conditional expectation
in line~\ref{eq:general-cmvs-cond-exp} of Algorithm~\ref{alg:general-cvms}, 
\(
    \EE[
        F_{\cleandatail}
        \rbr{
            \testsampleil
        }
        \big |
        \initdatai=\subdatai,
        \testsampleil
    ]
\),
is replaced with  
\(
    F_{
        \concatspacesens{\subdatail}{\suppcleandatail}
    }
        (\testsampleil)
\)
which serves as an easy to compute estimate for 
\(
    F_{\cleandatail}
        (\testsampleil)
\). 
Here 
\(
    F_{
        \concatspacesens{\subdatail}{\suppcleandatail}
    }
\)
denotes the ECDF from the combined data of $\subdatail$ and $\suppcleandatail$.
The reason we allow the mechanism designer the flexibility to supplement $\subdatail$ with $\suppcleandatail$ is that 
doing so allows them to decrease the $\varepsilon$ parameter corresponding to truthfulness being an $\varepsilon$-approximate 
Nash in the following theorem. A reasonable choice
for the size of $\suppcleandatai$ is to set it so that 
$\abr{\suppcleandatai}+\abr{\subdatai}=\abr{\cleandatai}$.

\begin{algorithm}[t]
    \caption{A prior free Cramér–von Mises style statistic}
    \label{alg:prior-free-general-cvms}
    \begin{algorithmic}[1]
        \State 
        \textbf{Input parameters}: 
        Feature maps $\{\featmapl\}_{k=1}^\numfeat$ and an augment split map
        $\testsizefunc:\NN\rightarrow \NN : \testsizefunc(n)<n-1$.
            \State
                \textbf{for} each agent $i\in[\numagents]$:
                \sindent[1]
                    \(
                        \subdatami
                        \leftarrow
                        \rbr{
                            \subdataiijj{j}{\ell}
                        }_{
                            j\neq i,
                            \ell\in\abr{\subdataj}
                        }
                    \).
                \sindent[1]
                    Split $\subdatami$ into 
                    \(
                        \subdatami
                        =
                        \rbr{
                            \cbr{\testsamplei},
                            \suppcleandatai,
                            \cleandatai
                        }
                    \)~
                    s.t.
                    \(
                            \abr{\suppcleandatai}
                        =
                        \testsizefunc\rbr{
                            \abr{
                                \subdatami
                            }
                        }
                    \).
                    \label{eq:prior-free-general-cvms-data-split}
                \sindent[1]
                    \textbf{for} each feature $k\in[\numfeat]$:
                        \sindent[2]
                            \(
                                \testsampleil
                                \leftarrow
                                \featmapl\rbr{\testsamplei}
                            \),~~
                            \(
                                \suppcleandatail
                                \leftarrow
                                \rbr{
                                    \featmapl\rbr{\suppcleandata_{i,\ell}}
                                }_{\ell=1}^{
                                    \abr{\suppcleandatai}
                                }
                            \),~~
                            \(
                                \cleandatail
                                \leftarrow
                                \rbr{
                                    \featmapl\rbr{
                                        \cleandataiijj{i}{\ell}
                                    }
                                }_{\ell=1}^{
                                    \abr{\cleandatai}
                                }
                            \)
                        \vspace{0.05cm}
                        \sindent[2]
                            \(
                                \teststatil
                                \leftarrow
                                \rbr{
                                    F_{
                                        \concatspacesens{\subdatail}{\suppcleandatail}
                                    }
                                    \rbr{
                                        \testsampleil
                                    }
                                    - 
                                    F_{\cleandatail}
                                    \rbr{
                                        \testsampleil
                                    }
                                }^2
                            .
                            \)
                            \label{eq:prior-free-general-cmvs-cond-exp}
                    \vspace{0.05cm}
                    \sindent[1]
                        Return
                        \(
                            \teststati
                            \leftarrow
                            \frac{1}{\numfeat}
                            \sum_{\featind=1}^{\numfeat}
                            \teststatil
                        \).
    \end{algorithmic}
        \vspace{-0.03in}
\end{algorithm}

Before stating the theorem, we define $\varepsilon$-approximate truthfulness for a mechanism in both 
the Bayesian and frequentist paradigms.

    \emph{$\varepsilon$-Approximate Truthfulness:} All agents submitting truthfully ($\subfunci=\identity$), is an $\varepsilon$-approximate
    Nash equilibrium. 
    In the Bayesian setting this means
    \(
        \forall i\in[\numagents], \forall\subfunci\in\subfuncspace
    \)
    \[
        \EEV{\distfromprior\sim\bayesprior}
        \sbr{
            \EE_{
                \cbr{\initdatai}_{i=1}^\numagents\sim\distfromprior
            }\sbr{
                \teststati(\{\identity\}_{j=1}^\numagents)
            }
        }
        \leq 
        \EEV{\distfromprior\sim\bayesprior}
        \sbr{
            \EE_{
                \cbr{\initdatai}_{i=1}^\numagents\sim\distfromprior
            }\sbr{
                \teststati\rbr{\subfunci, \{\identity\}_{j\neq i}}
            }
        }
        +\varepsilon
        .
    \]
    In the frequentist setting this means 
    \(
        \forall i\in[\numagents], \forall\subfunci\in\subfuncspace
    \)
    \[
        \sup_{
            \datadist\in\freqclass
        }
        \EE_{
            \cbr{\initdatai}_{i=1}^\numagents\sim\distfromprior
        }\sbr{
            \teststati(\{\identity\}_{j=1}^\numagents)
        }
            \leq 
        \sup_{
            \datadist\in\freqclass
        }
        \EE_{
            \cbr{\initdatai}_{i=1}^\numagents\sim\distfromprior
        }\sbr{
            \teststati\rbr{\subfunci, \{\identity\}_{j\neq i}}
        }
        +\varepsilon
        .
    \]
Algorithm~\ref{alg:prior-free-general-cvms} requires a similar non-degeneracy condition for MIB.
In the Bayesian setting, the same condition given in Theorem~\ref{thm:general-cvms-properties} suffices. In the frequentist setting, we require that
the class of distributions $\freqclass$ is not solely comprised of 
distributions for which all of the feature map induced distributions are degenerate.
The following theorem summarizes the main properties of Algorithm~\ref{alg:prior-free-general-cvms}.
We see that as the total amount of data increases, the approimate truthfulness parameter vanishes provided that the datasets 
$(\initdatai,\suppcleandatai)$ and $\cleandatai$ are balanced.

\begin{restatable}{theorem}{PriorFreeGeneralCvmsProperties}
    The mechanism in Algorithm~\ref{alg:prior-free-general-cvms} is  
    \(
        \frac{1}{4}
        \rbr{
            \frac{1}{
                \abr{\initdatai}
                +
                \abr{\suppcleandatai}
            }
            +
            \frac{1}{\abr{\cleandatai}}
        }
    \)-approximately truthful in both the Bayesian and frequentist settings.
    Moreover, if there is a feature $\featind\in[\numfeat]$,
    for which $\bayesprior$ is not degenerate, 
    then Algorithm~\ref{alg:prior-free-general-cvms} satisfies MIB in the Bayesian setting.
    If it is not the case that 
    \( 
        \freqclass
        \subseteq
        \cbr{
            \datadist\in\classdists
            :
            \forall \featind\in\sbr{\numfeat},
            \datadist\circ\rbr{\featmapl}^{-1}
            \in
            \delta_x,
            x\in\RR
        }
    \)
    then Algorithm~\ref{alg:prior-free-general-cvms} satisfies MIB in the frequentist setting.
\end{restatable}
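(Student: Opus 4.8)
The plan is to derive the $\varepsilon$-approximate truthfulness bound first and then obtain both MIB claims from essentially the same computation. The key observation is that when all agents $j\ne i$ report truthfully \emph{and} agent $i$ also reports truthfully, agent $i$'s per-feature loss is
\[
\teststatil=\bigl(F_{\subdatail\cup\suppcleandatail}(\testsampleil)-F_{\cleandatail}(\testsampleil)\bigr)^2,
\]
where $\subdatail$ is now simply the featurization of agent $i$'s true data $\initdatai$. Conditioning on the data distribution $\datadist$ and on the realized evaluation point $\testsampleil=t$, I will argue that $F_{\subdatail\cup\suppcleandatail}(t)$ and $F_{\cleandatail}(t)$ are two \emph{independent} empirical CDFs, built from $|\initdatai|+|\suppcleandatai|$ and $|\cleandatai|$ i.i.d.\ samples respectively of the pushforward $\featmapl\#\datadist$ of $\datadist$ under $\featmapl$, both unbiased for the value $p:=F_{\featmapl\#\datadist}(t)$ of that pushforward CDF at $t$. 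A bias--variance decomposition then yields the exact identity
\[
\EE[\teststatil\mid\datadist,t]=p(1-p)\bigl(\tfrac{1}{|\initdatai|+|\suppcleandatai|}+\tfrac{1}{|\cleandatai|}\bigr),
\]
the bias vanishing because both CDFs have mean $p$. Using $p(1-p)\le\tfrac14$, averaging over the $\numfeat$ features and integrating out $t$, we get that, conditional on \emph{every} $\datadist$, the truthful expected loss is at most $\varepsilon:=\tfrac14\bigl(\tfrac{1}{|\initdatai|+|\suppcleandatai|}+\tfrac{1}{|\cleandatai|}\bigr)$. Since $\teststati\ge0$ pointwise, any alternative strategy $\subfunci$ satisfies $\EE[\teststati(\subfunci,\{\identity\}_{j\ne i})]\ge0\ge\EE[\teststati(\{\identity\}_{j})]-\varepsilon$, and because the truthful bound holds uniformly in $\datadist$ this inequality survives both $\EE_{\datadist\sim\bayesprior}$ and $\sup_{\datadist\in\freqclass}$; hence $\varepsilon$-approximate truthfulness holds in the Bayesian and frequentist settings simultaneously.

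The step to carry out carefully is the conditional-independence claim. Conditioning on $\datadist$, the pool $\subdatami$ is a sequence of i.i.d.\ draws, so by exchangeability the random split into $\{\testsamplei\}$, $\suppcleandatai$, $\cleandatai$ gives three mutually independent i.i.d.\ blocks, and agent $i$'s own (truthfully reported) dataset $\initdatai$ is yet another independent i.i.d.\ block; conditioning further on $\testsampleil=t$ therefore leaves $\cleandatail$ and $\subdatail\cup\suppcleandatail$ as independent i.i.d.\ samples of $\featmapl\#\datadist$, which is exactly what the variance computation requires. Observe also that $|\suppcleandatai|$ and $|\cleandatai|$ are determined by $|\subdatami|=\sum_{j\ne i}|\initdataj|$ through $\testsizefunc$, so they do not depend on agent $i$'s own data size, and that $\testsizefunc(n)<n-1$ guarantees $|\cleandatai|\ge1$, so every quantity above is well defined.

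For MIB I reuse the identity: under truthful reporting, $\EE[\teststati\mid\datadist]=\bar v(\datadist)\bigl(\tfrac{1}{|\initdatai|+|\suppcleandatai|}+\tfrac{1}{|\cleandatai|}\bigr)$, where $\bar v(\datadist)\ge0$ is the average over features $\featind\in[\numfeat]$ of $\EE_{t\sim\featmapl\#\datadist}[F_{\featmapl\#\datadist}(t)(1-F_{\featmapl\#\datadist}(t))]$. Since $n\mapsto\tfrac{1}{n+|\suppcleandatai|}$ is strictly decreasing, $|\initdatai'|>|\initdatai|$ makes $\EE[\teststati]$ strictly smaller provided $\EE_{\datadist\sim\bayesprior}[\bar v(\datadist)]>0$ (Bayesian) or $\sup_{\datadist\in\freqclass}\bar v(\datadist)>0$ (frequentist). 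To obtain these positivity facts I would use the elementary lemma that a distribution $\mu$ on $\RR$ satisfies $\EE_{t\sim\mu}[F_\mu(t)(1-F_\mu(t))]>0$ if and only if $\mu$ is not a point mass (because $\mu$ always assigns zero mass to $\{t:F_\mu(t)=0\}$, and assigns strictly less than full mass to $\{t:F_\mu(t)=1\}$ unless $\mu=\delta_c$). For the Bayesian claim, if the summand of $\bar v$ corresponding to a non-degenerate feature $\featind$ vanished $\bayesprior$-a.s., then $\featmapl\#\datadist$ would be a point mass $\bayesprior$-a.s., forcing $P(\featmapl(Z)\le\featmapl(T)\mid\featmapl(T),X_1,\dots,X_n)\equiv1$ for all $n$ and thus making $\bayesprior$ degenerate for $\featind$ --- a contradiction; so $\EE_\datadist[\bar v]>0$, and since that feature's summand of $\EE[\teststati]$ strictly decreases while the others are non-increasing, strict MIB follows. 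For the frequentist claim, $\freqclass\not\subseteq\{\delta_x:x\in\dataspace\}$ supplies a non-point-mass $\datadist\in\freqclass$, and then $\bar v(\datadist)>0$ as long as some feature map is non-constant on its support.

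The part I expect to be the main obstacle is the conditional-independence/exchangeability argument underlying the bias--variance step: one must make sure that, after a \emph{random} partition of $\subdatami$ and after conditioning on the realized value of the evaluation point $\testsampleil$, the two empirical CDFs entering $\teststatil$ really are independent and genuinely i.i.d.\ from $\featmapl\#\datadist$ --- a point where a careless argument could spuriously inject dependence. A secondary, more cosmetic issue is the frequentist MIB hypothesis: as literally stated, $\freqclass\not\subseteq\{\delta_x\}$ only forces strict MIB when the feature family is rich enough to keep some non-degenerate member of $\freqclass$ non-degenerate after featurization (automatic for, e.g., coordinate projections on $\RR^\datadim$), so I would either add this mild richness assumption or rephrase the hypothesis in feature-degeneracy terms to mirror the Bayesian version.
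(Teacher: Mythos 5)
Your proposal is correct and follows essentially the same route as the paper: the truthfulness bound comes from the exact bias--variance identity for the two independent empirical CDFs (giving truthful expected loss $\le \varepsilon$) combined with nonnegativity of the loss under any deviation, and MIB comes from the same identity's strict decrease in $\frac{1}{|\initdatai|+|\suppcleandatai|}$ together with positivity of the $F(1-F)$ term under non-degeneracy. Your closing caveat about the frequentist MIB hypothesis (that $\freqclass\not\subseteq\{\delta_x\}$ only yields strictness when some feature map keeps the pushforward non-degenerate) is a fair observation --- the paper's own proof silently makes the same leap.
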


Proposition~\ref{prop:prior-free-general-alg-exp-test-bounds} (Appendix~\ref{app:prior-free-results-proofs}) gives,
for both the Bayesian and frequentist settings, 
bounds on how an agent's 
expected loss decreases with the amount of data they submit, assuming
all agents are truthful.
Moreover, when the pushforward $\datadistl=\datadist\circ\rbr{\featmapl}^{-1}$ is a.s. 
continuous $\forall \featind\in[\numfeat]$,
this proposition provides an exact expression for the expected loss in both the Bayesian and frequentist settings.
\vspace{-0.1cm}
\section{Applications to Data Sharing Problems}
\label{sec:applications}
\vspace{-0.2cm}

\parahead{1. A data marketplace for purchasing existing data}
Our first problem, studied by \citet{chen2020truthful} is incentivizing agents to truthfully submit data using payments 
from a fixed budget $\dataanalystbudget$ in a Bayesian setting. Their mechanism requires the data distribution to have 
finite support or belong to the exponential family to ensure budget feasibility (payments do not exceed $\dataanalystbudget$)
and individual rationality (agents receive non-negative payments). Our method 
removes these distributional assumptions.

In this setting, $\numagents$ agents each posses a dataset  
\(
    \initdatai=\{
        \initdataiijj{i}{1},
        \ldots,
        \initdataiijj{i}{\nni}
    \}
\)
with points
drawn i.i.d. from an unknown distribution $\distfromprior$ in a Bayesian model.
A data analyst with budget $\dataanalystbudget$
wishes to purchase this data.
Agents submit datasets $\cbr{\subdatai}_{i=1}^{\numagents}$ in return for payments 
\(
    \{
        \dataanalystpaymenti
        (
            \cbr{\subdatai}_i
        )
    \}_{i=1}^\numagents
\).
\citet{chen2020truthful}, building on \citet{kong2019information}, design a truthful mechanism based 
on log pairwise mutual information, but their payments can be unbounded, violating budget 
feasibility and individual rationality.
We address this using Algorithm~\ref{alg:general-cvms} to construct bounded payments satisfying truthfulness, 
individual rationality, and budget feasibility without distributional assumptions.
Algorithm~\ref{alg:purchasing-existing-data} (see Appendix~\ref{sec:purchasing-existing-data}) implements this, 
and Proposition~\ref{prop:purchasing-existing-data} guarantees these properties.

\parahead{2. A data marketplace to incentivize data collection at a cost}
\label{sec:designing-a-data-marketplace}
The second problem, studied 
by~\citet{chen2025incentivizing}, 
involves designing a data 
marketplace in which a buyer wishes to pay agents to collect data on her behalf \emph{at a cost}.
They study a Gaussian mean estimation problem in a frequentist setting.
We study a simplified Bayesian version without assuming Gaussianity.

In a data marketplace mechanism, the interaction between the buyer and agents takes place as follows.
First, each agent chooses how much data to collect, $\nni \in \NN$, paying a \emph{known} per-sample cost $\cost$, and obtains the dataset
\(
    \initdatai=\cbr{
        \initdataiijj{i}{1},
        \ldots,
        \initdataiijj{i}{\nni}
    }
\)
with data drawn i.i.d. from an unknown $\datadist\sim\bayesprior$.
They submit $\subdatai=\subfunci\rbr{\initdatai}$ to the mechanism, and 
in return, receive a payment
$\cpaymenti\rbr{\cbr{\subdatai}_{i=1}^\numagents}$ charged to the buyer. 
The buyer derives value 
$\val:\ZZ_{\geq 0}\rightarrow \RR_{\geq 0}$ from the total amount of truthful data received.
An agent's utility is their expected payment minus collection cost
$\cutili=\EE\sbr{\cpaymenti\rbr{\cbr{\subdatai}_{i=1}^\numagents}}-\cost\nni$, and the buyer's utility,
when agents are truthful,
is the 
valuation of the data received minus the expected sum of payments,
\(
    \butil
    =
    \val\rbr{
        \sum_{i=1}^\numagents
        \abr{\subdatai}
    }
    -
    \EE\sbr{
        \sum_{i=1}^\numagents
        \cpaymenti\rbr{\cbr{\subdatai}_{i=1}^\numagents}
    }
\).

The goal of a data market mechanism is to incentivize agents to collect and 
truthfully report data. 
If not carefully designed, the mechanism may incentivize agents to fabricate data to earn payments without incurring 
collection costs, undermining market integrity and deterring buyers.
To address this, we propose Algorithm~\ref{alg:a-data-marketplace} (see Appendix~\ref{app:designing-a-data-marketplace}), 
using Algorithm~\ref{alg:general-cvms}, which—unlike~\citet{chen2025incentivizing}—does 
not assume Gaussianity.
Proposition~\ref{prop:designing-a-data-marketplace} shows that, under a market feasibility condition, the mechanism 
is incentive compatible for agents and individually rational for buyers.

\parahead{3. Federated learning}
The third problem is a simple federated learning setting, similar to~\citet{karimireddy2022mechanisms}, where agents share 
data to improve personalized models. Unlike their work, which assumes agents truthfully report collected data, we 
allow strategic misreporting.

Each of
$\numagents$ agents, possess a dataset
\(
    \initdatai
    =
    \{
        \initdataiijj{i}{1},
        \ldots,
        \initdataiijj{i}{\nni}
    \}
\)
of points drawn i.i.d. in a Bayesian model, and have 
a valuation function $\fedvali: \NN \to \RR$ (increasing),
quantifying the value of using a given amount of data for
their machine learning task. Acting alone, an agent's utility is simply $\fedvali(|\initdatai|)$.
When participating, the federated learning mechanism delopys a subset of the others' data submitted,
$\cleandatai$, for agent $i$'s task
based on the quality of their submission $\subfunci(\initdatai)$.
This result in a valuation of $\fedvali(\abr{\cleandatai})$ when the others are truthful.
Thus, an agent's utility when participating is defined as
$\fedutili = \EE[\fedvali(\abr{\cleandatai})]$.
We propose Algorithm~\ref{alg:federated-learning} (see Appendix~\ref{app:federated-learning}), based on 
Algorithm~\ref{alg:general-cvms}, which does not assume truthful reporting.
Proposition~\ref{prop:federated-learning} shows it is truthful and individually rational.
\section{Experiments}
\label{sec:experiments}

\textbf{Synthetic experiments.}
\label{sec:synth-experiments}
We consider
two Bayesian models with conjugate priors (beta-Bernoulli and normal-normal) where the calculation of the 
conditional expectation in line~\ref{eq:general-cmvs-cond-exp} of Algorithm~\ref{alg:single-var-cvms} is analytically tractable.
In both setups, $\dataspace=\RR$ and we will use the method in Algorithm~\ref{alg:single-var-cvms}.

\emph{Baselines:}
We compare our mechanism to three standard two-sample tests, used here as losses:
(1) the KS-test 
KS$(\subdatai,\subdatami)=\sup_{t\in\RR}\abr{F_{\subdatai}(t)-F_{\subdatami}(t)}$.
(2) The CvM test (the direct version, not our adaptation):
CvM$(\subdatai,\subdatami)$ (see~\eqref{eq:cvmtest}).
(3) The mean difference (similar to the $t$-test):
\(
    \text{Mean-diff}
    (\subdatai,\subdatami)
    =
    \big|
        \frac{1}{\abr{\subdatai}}
        \sum_{y\in\subdatai} y
        -
        \frac{1}{\abr{\subdatami}}
        \sum_{y\in\subdatami} y
    \big|
\), which has been used to incentivize truthful reporting for normal mean estimation in a frequentist settings~\citep{chen2023mechanism,chen2025incentivizing}.

\emph{1) Beta-Bernoulli. }
Our first model is a beta-Bernoulli Bayesian model with $p\sim\Beta(2,2)$ and then $\initdataij|p\sim\Bern(p)$ i.i.d. 
We evaluate whether an agent can reduce their loss (increase rewards) by adding fabricated data to their submission.
We consider two types of fabrication: (1) adding $\Bern(1/2)$ samples and (2) estimating $p$
via $\tilde{p}=\frac{1}{\abr{\initdataij}}\sum_{x\in\initdataij} x$ then
adding $\Bern(\tilde{p})$ samples.
We compare this to an agent's loss when submitting truthfully, assuming in both cases that other agents
are truthful. 
Fig.~\ref{fig:beta-bern-experiments} shows average losses under Algorithm~\ref{alg:single-var-cvms} and the 
three two-sample tests
under truthful and non-truthful reporting.
Under Algorithm~\ref{alg:single-var-cvms}, fabricated data always leads to higher loss, while the baselines yields lower 
loss under at least one fabrication strategy. 
Thus, the two-sample tests are
susceptible to data fabrication whereas Algorithm~\ref{alg:single-var-cvms} is not.
Notably,
Mean-diff, which is
used in~\citep{chen2023mechanism,clintoncollaborative},
fails,
showing their methods do not work beyond normal mean estimation settings.

\begin{figure}[t]
    \begin{subfigure}[b]{0.48\textwidth}
        \centering
        \includegraphics[scale=0.413]{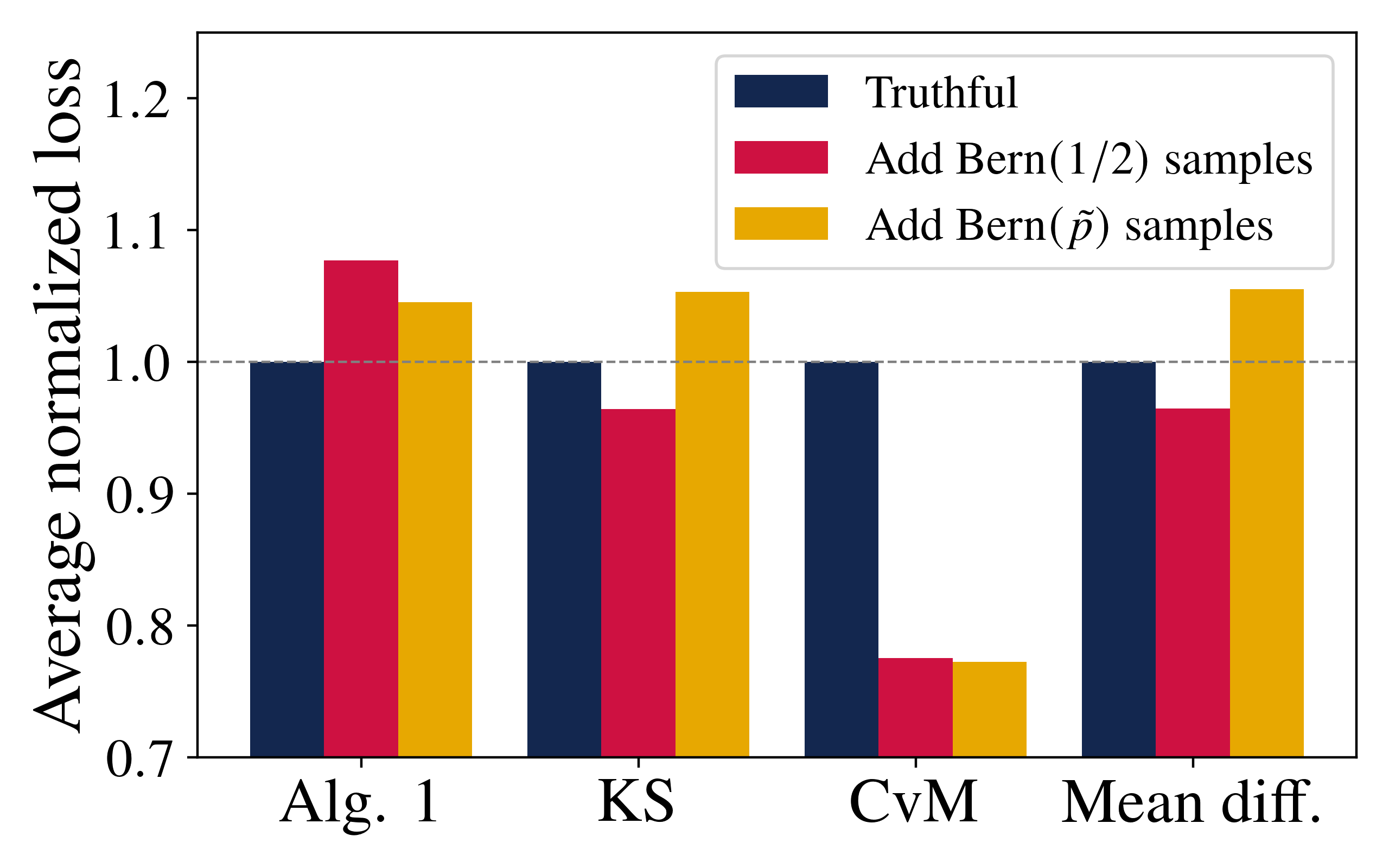}
        \caption{Losses in the beta-bernoulli model}
        \label{fig:beta-bern-experiments}
    \end{subfigure}
    \hfill 
    \begin{subfigure}[b]{0.48\textwidth}
        \centering
        \includegraphics[scale=0.413]{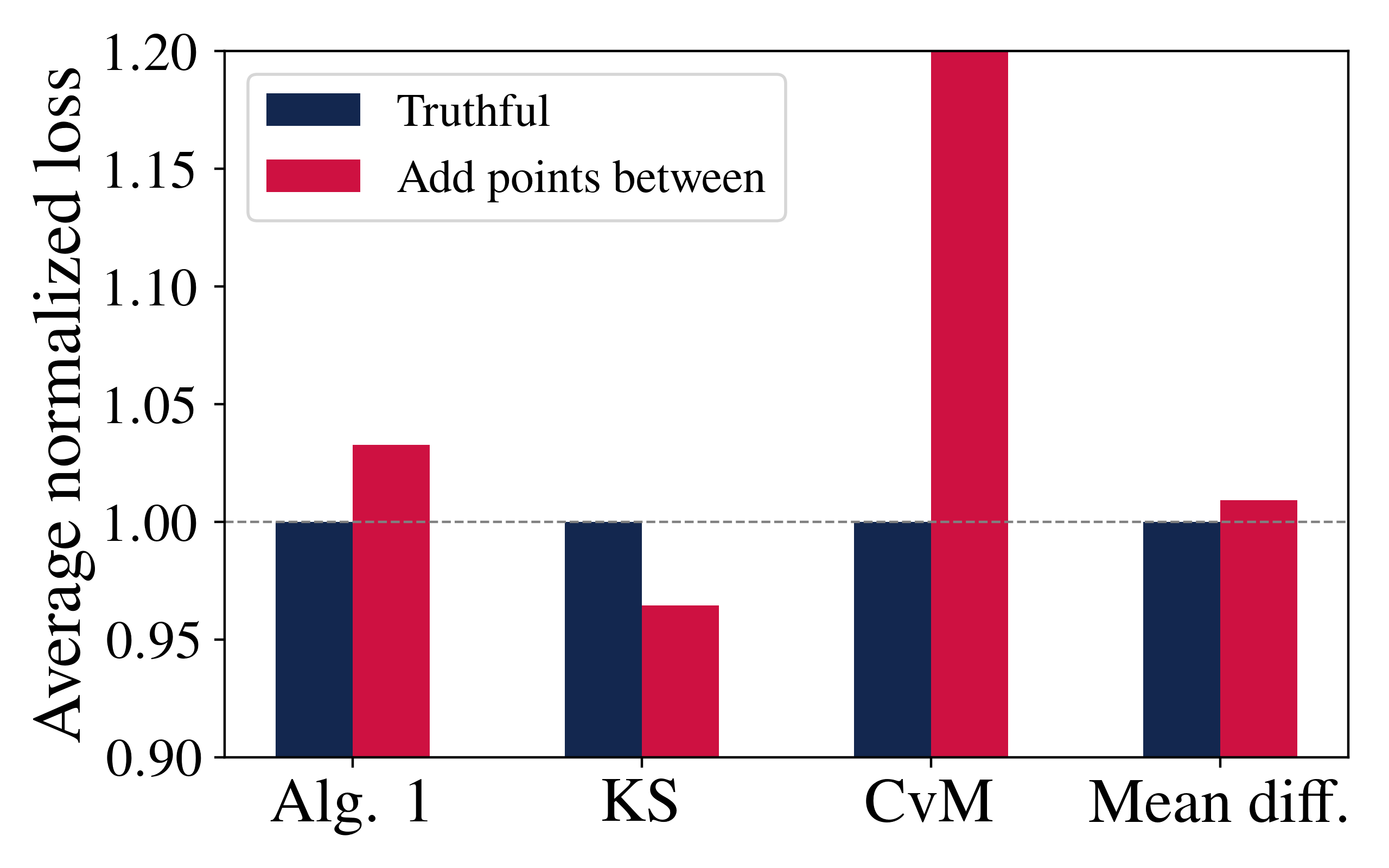}
        \caption{Losses in the normal-normal model}
        \label{fig:normal-normal-experiments}
    \end{subfigure}
    \caption{\small
        (\subref{fig:beta-bern-experiments}):
        Losses when submitting truthfully,
        adding $\Bern(1/2)$ samples, and adding $\Bern(\tilde{p})$ samples in the beta-Bernoulli experiment.
        (\subref{fig:normal-normal-experiments}): 
        Losses when submitting truthfully
        and adding fabricated data between adjacent pairs of true data points in the normal-normal experiment.
        In~(\subref{fig:normal-normal-experiments}), the CvM bar for fabrication behavior extends to $\approx 1.6$.  
        Losses for truthful submission in each method and subfigure are normalized to 1 (gray lines); 
        values $<1$ indicate fabrication improves performance, $>1$ means it worsens.
        A truthful mechanism should yield losses above 1 for \emph{all} fabrication behavior.
    }
\end{figure}
\vspace{-.15cm}

\emph{2) Normal-normal. }
Our second experiment is a normal-normal Bayesian model, where $\mu\sim\Ncal(0,1)$ and then $\initdataij|\mu\sim\Ncal(\mu,1)$ i.i.d.
Here, we fabricate data by inserting fake points in between real observations.
Fig.~\ref{fig:normal-normal-experiments} presents the results.
Truthful reporting yields lower loss under Algorithm~\ref{alg:single-var-cvms}, CvM, and Mean-diff, while KS gives 
lower loss for fabrication, revealing its susceptibility.

\textbf{Language data.}
Next, we evaluate our method and the above baselines on language data.
For this, we use data from the SQuAD dataset \cite{rajpurkar2016squad}, where each data point is a question about an article.
We model the environment with $\numagents=20$ and $\numagents=100$ agents, where all agents have 2500 and 500 original data points respectively. 
We fabricate data by prompting Llama 3.2-1B-Instruct~\citep{grattafiori2024llama} to generate fake sentences based on the legitimate sentences that agent 1 has.
We fabricate the same number of sentences in the original dataset.
Agent 1 then submits the combined dataset, both true and fabricated, to the mechanism.
We instantiate Algorithm~\ref{alg:prior-free-general-cvms} with feature maps obtained from the feature layer of the DistilBERT~\cite{Sanh2019DistilBERTAD} encoder model, which corresponds to 768 features.
We apply the baselines to the same set of features and take the average.
We have provided additional details on the experimental set up and some true and fabricated sentences generated in Appendix~\ref{app:experiments-extended-text}.

The results are presented in 
Table~\ref{tab:text-experiments}, showing that 
all methods perform well, obtaining a smaller loss for truthful submission when compared to fabricating.
It is worth emphasizing that only our method is provably
approximately truthful, and other methods may be susceptible to more sophisticated types of fabrication.

\label{sec:real-world-experiments}

\begin{table}[h]
    \centering
    \caption{
      An agent's average 
      loss ($\pm$ the standard error)
      when reporting sentences truthfully/untruthfully, assuming the others are reporting truthfully.
      The experiments were run once assuming all agents had 500 sentences, then again assuming all agents had 2500 sentences.
      In each row the smaller loss is bolded.
    }
    \label{tab:text-experiments}
    \begin{tabular}{@{}l l r r @{}}
        \toprule
        \textbf{Sentences} & \textbf{Method}        & \textbf{Avg. truthful loss} & \textbf{Avg. untruthful loss} \\
        \midrule
        \multirow{4}{*}{500}
        & Algorithm~\ref{alg:prior-free-general-cvms}    
                             & \textbf{0.0003}$~\pm~1.8\cdot 10^{-5}$ & 0.0011$~\pm~5.8\cdot 10^{-5}$ \\
        & KS-test       
                             & \textbf{0.0379}$~\pm~7.6\cdot 10^{-4}$ & 0.0524$~\pm~9.7\cdot 10^{-4}$ \\
        & CvM-test      
                             & \textbf{0.1547}$~\pm~9.2\cdot 10^{-3}$ & 0.8598$~\pm~4.8\cdot 10^{-2}$ \\
        & Mean diff.    
                             & \textbf{0.0043}$~\pm~2.5\cdot 10^{-4}$ & 0.0095$~\pm~3.4\cdot 10^{-4}$ \\
        \midrule
        \multirow{4}{*}{2500}
        & Algorithm~\ref{alg:prior-free-general-cvms}    
                             & \textbf{0.00003}$~\pm~3.3\cdot 10^{-6}$ & 0.0005$~\pm~7.1\cdot 10^{-6}$ \\
        & KS-test       
                             & \textbf{0.0127}$~\pm~2.4\cdot 10^{-4}$ &  0.0309$~\pm~1.2\cdot 10^{-4}$ \\
        & CvM-test      
                             & \textbf{0.1609}$~\pm~7.1\cdot 10^{-3}$ &  3.2760$~\pm~3.4\cdot 10^{-2}$ \\
        & Mean diff.    
                             & \textbf{0.0015}$~\pm~8.4\cdot 10^{-5}$ & 0.0069$~\pm~5.9\cdot 10^{-5}$ \\
    \bottomrule
    \end{tabular}
\end{table}

\textbf{Image data.}
We perform a similar experiment on image data using the Oxford Flowers‑102 dataset~\cite{nilsback2008automated} dataset. 
where each data point is an image of a flower. We model the enviornment with $\numagents=5$ and $\numagents=47$,
where all agents have roughly 1000 and 100 original data points respectively. 
We fabricate data by using
Segmind Stable Diffusion-1B~\cite{gupta2024progressive}, a lightweight diffusion model,
to generate fake images of flowers based on the legitimate pictures. We fabricate the same number of images 
that an agent possesses. Algorithm~\ref{alg:prior-free-general-cvms} is instantiated with 384 feature maps 
corresponding to the 384 nodes in the embedding layer of 
DeIT-small-distilled \cite{touvron2021training}, a small vision transformer. 
As above, we apply the baselines to the same set of features and take the average.
Additional details on the experimental set up can be found in Appendix~\ref{app:experiments-extended-image}.

Table~\ref{tab:image-experiments} shows that, similar to text, all methods perform well, truthful submission 
leads to a lower loss compared to the fabrication procedure detailed above.

\begin{table}[h]
  \centering
    \caption{
      An agent's average 
      loss ($\pm$ the standard error)
      when reporting images truthfully/untruthfully, assuming the others are reporting truthfully.
      The experiments were run once assuming agent 1 had 100 images, then again assuming agent 1 had 1000 images. 
      The 4,612 images in the test set of~\citep{nilsback2008automated} were used to represent the data submitted by other agents.
      In each row the smaller loss is bolded.
    }
  \label{tab:image-experiments}
  \begin{tabular}{@{}l l r r r r@{}}
    \toprule
    \textbf{Images} & \textbf{Method} & \textbf{Avg. truthful loss} & \textbf{Avg. untruthful loss} \\
    \midrule
    \multirow{4}{*}{100}
    & Algorithm~\ref{alg:prior-free-general-cvms}
                        & \textbf{0.0015}$~\pm ~3.2\cdot 10^{-5}$ & 0.0040$~\pm ~1.2\cdot 10^{-4}$ \\
    & KS-test
                        & \textbf{0.0833}$~\pm~ 4.2\cdot 10^{-4}$ & 0.0993$~\pm ~1.3\cdot 10^{-3}$ \\
    & CvM-test
                        & \textbf{0.1491}$~\pm~ 2.6\cdot 10^{-3}$ & 0.7730$~\pm~ 2.0\cdot 10^{-2}$\\
    & Mean diff.
                        & \textbf{0.0462}$~\pm~ 1.0\cdot 10^{-3}$ & 0.0953$~\pm~ 1.1\cdot 10^{-3}$ \\
    \midrule
    \multirow{4}{*}{1000}
    & Algorithm~\ref{alg:prior-free-general-cvms}
                        & \textbf{0.0002}$~\pm ~3.7\cdot 10^{-6}$ & 0.0032$~\pm ~2.9\cdot 10^{-5}$ \\
    & KS-test
                        & \textbf{0.0290}$~\pm ~2.1\cdot 10^{-4}$ & 0.0738$~\pm ~2.7\cdot 10^{-4}$ \\
    & CvM-test
                        & \textbf{0.1458}$~\pm ~3.5\cdot 10^{-3}$ & 4.5478$~\pm ~3.0\cdot 10^{-2}$ \\
    & Mean diff.
                        & \textbf{0.0157}$~\pm ~5.2\cdot 10^{-4}$ & 0.0896$~\pm ~3.2\cdot 10^{-4}$ \\
    \bottomrule
  \end{tabular}
\end{table}
\section{Conclusion}
\label{sec:conclusion}

We study designing mechanisms that incentivize truthful data submission while rewarding agents for contributing more data. In the Bayesian setting, we propose a mechanism that satisfies these goals under a mild non-degeneracy condition on the prior. 
We additionally develop a prior-agnostic variant that applies in both Bayesian and frequentist settings. We illustrate the practical utility of our mechanisms by revisiting data sharing problems studied in prior work, relaxing their technical assumptions, and validating our approach through experiments on  synthetic and real-world datasets.

\emph{Limitations.}
The mechanisms in~\S\ref{sec:exact-ne-algs} rely on Bayesian posterior computations, which may be computationally expensive for complex priors. We also require specifying feature maps that effectively represent the data. While this offers flexibility for the mechanism designer to select application-specific features, there is no universally optimal way to choose them.

\newpage

\begin{ack}
    This work was partially supported by NSF grant IIS-2441796.
\end{ack}  

\bibliographystyle{unsrtnat}
\bibliography{references.bib}

\newpage
\appendix

\section{Omitted application algorithms}

\subsection{A data marketplace for purchasing existing data}
\label{sec:purchasing-existing-data}

Recall the problem setup from \S\ref{sec:applications}.
Below we provide a short algorithm that incentivizes agents to truthfully report their data, $\cbr{\initdatai}_{i=1}^\numagents$, using payments.
The idea is to use Algorithm~\ref{alg:general-cvms} to quantify the quality of an agent's submission and,
based on it, determine what fraction of the budget to pay them.

\begin{restatable}{definition}{MarketplaceExistingDataIRBudget}
    We say an algorithm is budget feasible if the sum of the payments never exceeds the budget
    $\rbr{\sum_{i=1}^\numagents \dataanalystpaymenti \leq \dataanalystbudget}$, and individually rational 
    (for participants) if 
    the payments are always nonnegative $\rbr{\forall i\in[\numagents], \dataanalystpaymenti\geq 0}$.
\end{restatable}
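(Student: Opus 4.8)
The plan is to verify directly the two properties named in the proposition—budget feasibility, $\sum_{i=1}^{\numagents}\dataanalystpaymenti \leq \dataanalystbudget$, and individual rationality, $\dataanalystpaymenti \geq 0$ for all $i$—by reading them off the payment rule of Algorithm~\ref{alg:purchasing-existing-data} together with the fact that each agent's loss from Algorithm~\ref{alg:general-cvms} satisfies $\teststati \in [0,1]$. The crucial structural observation I would establish first is that this bound is \emph{pointwise}: the output of Algorithm~\ref{alg:general-cvms} is an average over features of squared differences of two empirical CDFs, each such squared difference lies in $[0,1]$, so $\teststati \in [0,1]$ holds deterministically for every realization of the submitted datasets, not merely in expectation. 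This is exactly what the definitions require, since ``never exceeds the budget'' and ``always nonnegative'' are statements about every submission profile.

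Next I would make the payment rule explicit. Following the appendix description—use the loss to decide ``what fraction of the budget to pay''—the rule converts the bounded loss into a fraction of $\dataanalystbudget$, the natural affine choice being $\dataanalystpaymenti = \frac{\dataanalystbudget}{\numagents}\rbr{1 - \teststati}$. I would then argue the two properties in turn. For individual rationality, $\teststati \leq 1$ gives $1-\teststati \geq 0$, hence $\dataanalystpaymenti = \frac{\dataanalystbudget}{\numagents}\rbr{1-\teststati} \geq 0$ for every $i$; because the bound $\teststati \leq 1$ is deterministic, this holds on every realization, matching the ``always'' in the definition.

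For budget feasibility, I would sum the payments and use the lower bound $\teststati \geq 0$:
\[
\sum_{i=1}^{\numagents}\dataanalystpaymenti
= \frac{\dataanalystbudget}{\numagents}\sum_{i=1}^{\numagents}(1-\teststati)
= \dataanalystbudget - \frac{\dataanalystbudget}{\numagents}\sum_{i=1}^{\numagents}\teststati
\leq \dataanalystbudget,
\]
where the final inequality is immediate since each $\teststati \geq 0$. As with individual rationality, the deterministic sign of $\teststati$ makes the inequality hold for every submission profile, so budget feasibility holds in the strong ``never exceeds'' sense rather than only on average.

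I do not expect a genuine obstacle here: once the pointwise bound $\teststati \in [0,1]$ is in hand, both claims reduce to one-line algebra. The only point demanding care is to trace this bound back to the \emph{construction} of Algorithm~\ref{alg:general-cvms} (each featurewise squared CDF gap is in $[0,1]$) rather than to the expectation bounds of Proposition~\ref{prop:single-var-alg-exp-test-bounds}, so that the ``never''/``always'' quantifiers are genuinely satisfied. A secondary subtlety would arise only if Algorithm~\ref{alg:purchasing-existing-data} instead normalized payments by the realized total $\sum_j(1-\teststatj)$: then I would guard against the degenerate case where every $\teststati = 1$ makes that denominator vanish. Under the affine rule above no such edge case occurs, and the two properties follow cleanly.
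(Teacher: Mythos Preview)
The statement you were given is not a claim to be proved: despite being typeset as a \texttt{proposition}, it merely \emph{defines} the terms ``budget feasible'' and ``individually rational''. The paper accordingly provides no proof for it, and none is needed.

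What you have actually written is a proof of the individual-rationality and budget-feasibility parts of the \emph{next} proposition in the paper, Proposition~\ref{prop:purchasing-existing-data}, which asserts that Algorithm~\ref{alg:purchasing-existing-data} is truthful, individually rational, and budget feasible. For that target your argument is correct and essentially identical to the paper's: both observe that $\teststati\in[0,1]$ pointwise (being an average of squared differences of empirical CDFs), deduce $0\leq \dataanalystpaymenti=\frac{\dataanalystbudget}{\numagents}(1-\teststati)\leq\frac{\dataanalystbudget}{\numagents}$, and conclude IR and budget feasibility immediately. Your additional care in tracing the pointwise bound to the construction of Algorithm~\ref{alg:general-cvms} (rather than to an expectation bound) is exactly right and slightly more explicit than the paper, which simply asserts $\teststati\in[0,1]$.
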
 

\begin{algorithm}[H]
    \caption{A data marketplace for purchasing existing data}
    \label{alg:purchasing-existing-data}
    \begin{algorithmic}[1]
        \State
            \textbf{Input parameters}: A prior $\bayesprior$ over the set of $\dataspace$-valued distributions, 
            feature maps $\{\featmapl\}_{\featind=1}^\numfeat$.
        \State 
            Receive datasets $\subdataii{1},\ldots,\subdataii{\numagents}$ from the agents.
        \State 
            Execute Algorithm~\ref{alg:general-cvms} with $\cbr{\subdatai}_{i=1}^m$,
            $\bayesprior$, $\{\featmapl\}_{\featind=1}^\numfeat$,
            to obtain the loss $\teststati\in[0,1]$ for agent $i$.
        \State
            Pay agent $i$: 
            \(
                \dataanalystpaymenti\rbr{\cbr{\subdatai}_{i=1}^m}
                =
                \frac{\dataanalystbudget}{\numagents}
                \rbr{
                    1-
                    \teststati\rbr{\cbr{\subdatai}_{i=1}^m}
                }.
            \)
    \end{algorithmic}
\end{algorithm}

\begin{restatable}{proposition}{PurchasingExistingData}
    \label{prop:purchasing-existing-data}
    Algorithm~\ref{alg:purchasing-existing-data} is
    truthful, individually rational, and budget feasibility.
\end{restatable}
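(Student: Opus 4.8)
The plan is to obtain all three properties as direct consequences of two facts already established: (i) the loss $\teststati$ returned by Algorithm~\ref{alg:general-cvms} satisfies $\teststati\in[0,1]$ for every realization of the submissions and the mechanism's internal randomness, and (ii) Algorithm~\ref{alg:general-cvms} is truthful in the Bayesian sense of Theorem~\ref{thm:general-cvms-properties}. Throughout I assume $\dataanalystbudget\ge 0$ and $\numagents\ge 1$. \emph{Individual rationality} is immediate: since $\dataanalystpaymenti=\frac{\dataanalystbudget}{\numagents}\bigl(1-\teststati\bigr)$ and $\teststati\le 1$, the factor $1-\teststati$ is nonnegative, so $\dataanalystpaymenti\ge 0$ always. \emph{Budget feasibility} follows by summing: $\sum_{i=1}^{\numagents}\dataanalystpaymenti=\frac{\dataanalystbudget}{\numagents}\bigl(\numagents-\sum_{i=1}^{\numagents}\teststati\bigr)\le\dataanalystbudget$, where the inequality uses $\teststati\ge 0$ for all $i$; this again holds pointwise, so the budget is never exceeded.

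For \emph{truthfulness}, the key observation is that the payment is an affine, strictly decreasing function of the loss, so maximizing expected payment is equivalent to minimizing expected loss. Fix any agent $i$ and suppose all other agents play the truthful strategy $\identity$. An agent choosing strategy $\subfunci$ receives expected payment $\frac{\dataanalystbudget}{\numagents}\bigl(1-\EE[\teststati(\subfunci,\{\identity\}_{j\neq i})]\bigr)$, and since $\dataanalystbudget/\numagents>0$ the agent's best response is exactly the strategy minimizing $\EE[\teststati(\subfunci,\{\identity\}_{j\neq i})]$. But Theorem~\ref{thm:general-cvms-properties} states precisely that $\EE[\teststati(\{\identity\}_{j=1}^{\numagents})]\le\EE[\teststati(\subfunci,\{\identity\}_{j\neq i})]$ for every $\subfunci\in\subfuncspace$, so $\subfunci=\identity$ is a best response. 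As this holds for every $i$, the all-truthful profile is a Nash equilibrium, which is the claimed notion of truthfulness.

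The main ``obstacle'' is really just bookkeeping: one must check that the sense of truthfulness asserted by Theorem~\ref{thm:general-cvms-properties} (an inequality in expectation over $\bayesprior$, the data, and all randomness) is the same sense needed for the Nash-equilibrium claim here, and that the monotone reparametrization $\teststati\mapsto\frac{\dataanalystbudget}{\numagents}(1-\teststati)$ preserves best responses; both are straightforward. It is also worth noting, though it is not among the three claimed properties, that the MIB guarantee of Theorem~\ref{thm:general-cvms-properties} implies that when $\bayesprior$ is non-degenerate an agent strictly increases expected payment by contributing more genuine data, so Algorithm~\ref{alg:purchasing-existing-data} additionally incentivizes agents to submit larger truthful datasets rather than merely not fabricating. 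The degenerate edge cases ($\dataanalystbudget=0$) make all payments zero and trivially satisfy every property.
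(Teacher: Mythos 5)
Your proposal is correct and follows essentially the same route as the paper: individual rationality and budget feasibility from the pointwise bound $\teststati\in[0,1]$, and truthfulness by observing that the payment is an affine decreasing function of the loss and invoking Theorem~\ref{thm:general-cvms-properties}. The extra remarks on MIB and the $\dataanalystbudget=0$ edge case are fine but not needed.
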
  
\begin{proof}
    Since $\teststati\in[0,1]$, we have $0\leq \dataanalystpaymenti\leq\frac{\dataanalystbudget}{\numagents}$,
    so it immediately follows that Algorithm~\ref{alg:purchasing-existing-data} is both individually rational
    for the agents and budget feasible.
    For truthfulness, notice that for any $\subfunci\in\subfuncspace$, we can appeal to 
    Theorem~\ref{thm:general-cvms-properties} to get
    \begin{align*}
        \EE\sbr{
            \dataanalystpaymenti\rbr{
                \subfunci,
                \cbr{\identity}_{j\neq i}
            } 
        }
        &=
        \frac{\dataanalystbudget}{\numagents}
        \rbr{
            1-
            \EE\sbr{
                \teststati\rbr{
                    \subfunci,
                    \cbr{\identity}_{j\neq i}
                }
            }
        }
        \\
        &\leq
        \frac{\dataanalystbudget}{\numagents}
        \rbr{
            1-
            \EE\sbr{
                \teststati\rbr{
                    \cbr{\identity}_{i=1}^{\numagents}
                }
            }
        }
        \\
        &=
        \EE\sbr{
            \dataanalystpaymenti\rbr{
                \cbr{\identity}_{i=1}^{\numagents}
            } 
        } .
    \end{align*}  
    Therefore, Algorithm~\ref{alg:purchasing-existing-data} is also truthful, as agents maximize their expected payment when
    submitting truthfully.
\end{proof}  

\subsection{A data marketplace to incentivize data collection at a cost}
\label{app:designing-a-data-marketplace}

Recall the problem setup from \S\ref{sec:applications}, which is a simplified version of the problem studied 
by~\citep{chen2025incentivizing}. Our setting does not subsume~\citep{chen2025incentivizing},
as they allow for agents to have varying collection costs, study a frequentist setting (whereas we consider a Bayesian setting),
and derive payments that are easy to compute. We now motivate a solution to our simplified setting.

To facilitate data sharing between a buyer and agents, a mechanism must first determine how much data agents should be 
asked to collect based on the cost of data collection $\commoncost$, and the buyer's valuation function $\val$.
To do this, suppose that the buyer could collect data himself.
In this case, he would choose to collect $\nnopt:=\argmax_{n\in\NN}\rbr{\val(n)-\cost n}$ points to maximize his utility.
However, as he cannot,
when there are $\numagents$ agents, the mechanism will ask each of them to collect $\frac{\nnopt}{\numagents}$ points on his behalf
in exchange for payments.

An important detail is that for the marketplace to be feasible, an agent's expected payment must outweigh the cost of data collection.
This requirement is reflected in the first technical condition in Proposition~\ref{prop:designing-a-data-marketplace}, 
which at a high level says 
that the change in an agents expected payment with respect to $\nni$, when collecting $\frac{\nnopt}{\numagents}$ points, is at least $\commoncost$.
This can be thought of requiring that 
the derivative with repect to $\nni$, of the expected payment at $\frac{\nnopt}{\numagents}$, be at least $\commoncost$.
We also assume that $\bayesprior$ is not degnerate for all the features and there are deminishing returns for collecting and submitting
more data under Algorithm~\ref{alg:general-cvms}.

When these condition holds, Proposition~\ref{prop:designing-a-data-marketplace} shows that it is individually rational for a buyer to participate
in the marketplace,
and in agents' best interest to collect $\frac{\nnopt}{\numagents}$ points and submit them truthfully.

The idea of Algorithm~\ref{alg:a-data-marketplace} is to determine what fraction of 
\(
    \frac{
        \val(\nnopt)
    }{\numagents}
\)
to pay agent $i$ based on the quality of her submission, as measured by $\teststati$.

\begin{algorithm}[H]
    \caption{A data marketplace to incentivize data collection at a cost}
    \label{alg:a-data-marketplace}
    \begin{algorithmic}[1]
        \State
            \textbf{Input parameters}: A prior $\bayesprior$ over the set of $\dataspace$-valued distributions, 
            feature maps $\{\featmapl\}_{\featind=1}^\numfeat$.
        \State 
            Receive datasets $\subdataii{1},\ldots,\subdataii{\numagents}$ from the agents.
        \State 
            Execute Algorithm~\ref{alg:general-cvms} with $\cbr{\subdatai}_{i=1}^m$,
            $\bayesprior$, $\{\featmapl\}_{\featind=1}^\numfeat$,
            to obtain the loss $\teststati\in[0,1]$ for agent $i$.
        \State
            Pay agent $i$: 
            \(
                \dataanalystpaymenti\rbr{\cbr{\subdatai}_{i=1}^m}
                =
                \frac{
                    \val(\nnopt)
                }{\numagents}
                \rbr{
                    1-
                    \alpha
                    \teststati
                } 
            \)
            where $\alpha$ is given in Definition~\ref{def:data-marketplace-def}.
        \State Charge the buyer:
            \(
                \bpayment
                \rbr{\cbr{\subdatai}_{i=1}^m} 
                =
                \sum_{i=1}^{\numagents}
                \dataanalystpaymenti
                \rbr{\cbr{\subdatai}_{i=1}^m} 
                .
            \)
    \end{algorithmic}
\end{algorithm}

\begin{restatable}{definition}{DataMarketplaceDef}
    \label{def:data-marketplace-def} 
    For Algorithm~\ref{alg:a-data-marketplace} we introduce notation for the change in an agent's expected payment
    when collecting and submitting one more data point truthfully, assuming others are truthful:
    \begin{equation*}
        \frac{\partial}{\partial \nni} 
        \EE\sbr{
            \teststati\rbr{
                    \nni,
                    \nnmi
            }
        }
        :=
        \EE\sbr{
            \teststati\rbr{
                \rbr{
                    \nni+1,
                    \nnmi
                }, 
                \cbr{
                        \identity
                }_{i=1}^\numagents
            }
        }
        -
        \EE\sbr{
            \teststati\rbr{
                \rbr{
                    \nni,
                    \nnmi
                }, 
                \cbr{
                        \identity
                }_{i=1}^\numagents
            }
        } .
    \end{equation*}  
    When $\bayesprior$ is not degenerate $\forall \featind\in[\numfeat]$,
    \(
        \frac{\partial}{\partial \nni} 
            \EE\sbr{
                \teststati\rbr{
                    \cbr{
                            \frac{
                                \nnopt
                            }{\numagents}
                    }_{i=1}^\numagents
                }
            }
        <0
    \) (by Theorem~\ref{thm:general-cvms-properties})
    and we define
    \begin{equation*}
        \alpha
        :=
        -\frac{
            \cost\numagents
        }{
            \frac{\partial}{\partial \nni} 
            \EE\sbr{
                \teststati\rbr{
                    \cbr{
                            \frac{
                                \nnopt
                            }{\numagents}
                    }_{i=1}^\numagents
                }
            }
            \val(\nnopt)
        } .
    \end{equation*}  
\end{restatable}

\begin{restatable}{proposition}{DesigningADataMarketplace}
    \label{prop:designing-a-data-marketplace}
    Suppose that the following technical conditions are satisfied in Algorithm~\ref{alg:a-data-marketplace}:
        \begin{equation*}
            \frac{\val\rbr{\nnopt}}{\numagents}
            \rbr{
                -\frac{\partial}{\partial \nni} 
                \EE\sbr{
                    \teststati\rbr{
                        \cbr{
                                \frac{
                                    \nnopt
                                }{\numagents}
                        }_{i=1}^\numagents   
                    }
                }
            }
            \geq 
            \cost,
        \end{equation*}  
    $\bayesprior$ is not degenerate $\forall \featind\in[\numfeat]$, and
   \(
        -\frac{\partial}{\partial \nni} 
        \EE\sbr{
            \teststati\rbr{
                \nni,
                \nnmi
            }
        }
   \)
   is decreasing in $\nni$.

    Then, the strategy profile
    \(
        \cbr{
            \rbr{
                \frac{
                    \nnopt
                }{\numagents},
                \identity
            }
        }_{i=1}^\numagents   
    \)
    is individually rational for the buyer, i.e.
    \begin{align*}
        \butil
        \rbr{
            \cbr{
                \rbr{
                    \frac{
                        \nnopt
                    }{\numagents},
                    \identity 
                }
            }_{i=1}^{\numagents}
        }
        \geq 0
    \end{align*}  
    and 
    incentive compatible for the agents, i.e. for any $\nni\in\NN$, $\subfunci\in\subfuncspace$,
    \begin{align*}
        \cutili\rbr{
            \rbr{
                \nni,\subfunci 
            },
            \cbr{
                \rbr{
                    \frac{
                        \nnopt
                    }{\numagents}
                    ,
                    \identity 
                }
            }_{j\neq i}
        }
        \leq 
        \cutili\rbr{
            \cbr{
                \rbr{
                    \frac{
                        \nnopt
                    }{\numagents},
                    \identity 
                }
            }_{i=1}^{\numagents}
        }.
    \end{align*} 
\end{restatable}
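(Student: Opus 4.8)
The plan is to verify the two claims—individual rationality for the buyer and incentive compatibility for the agents—separately, leaning on Theorem~\ref{thm:general-cvms-properties} (truthfulness of Algorithm~\ref{alg:general-cvms}) and the MIB property, together with the explicit form of the payments in Algorithm~\ref{alg:a-data-marketplace}. First I would unwind the definitions: agent $i$'s utility under a strategy $(\nni,\subfunci)$ when everyone else plays $(\nnopt/\numagents,\identity)$ is $\cutili = \frac{\val(\nnopt)}{\numagents}\rbr{1-\alpha\,\EE[\teststati]} - \cost\nni$, and the buyer's utility is $\val(\abr{\cup_i\subdatai}) - \sum_i \cpaymenti$, which by construction of $\bpayment$ equals $\val(\nnopt) - \sum_i \cpaymenti = \val(\nnopt)\rbr{1 - \frac{1}{\numagents}\sum_i(1-\alpha\,\EE[\teststati])} = \val(\nnopt)\cdot\frac{\alpha}{\numagents}\sum_i\EE[\teststati]$ at the candidate profile. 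Since $\teststati\geq 0$ and $\alpha>0$ (because $\frac{\partial}{\partial\nni}\EE[\teststati]<0$ by MIB, making the denominator in the definition of $\alpha$ negative), buyer IR is immediate.

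For incentive compatibility I would fix agent $i$ and an arbitrary deviation $(\nni,\subfunci)$, with all others at $(\nnopt/\numagents,\identity)$. The key point is that, conditioned on agent $i$ collecting $\nni$ points, Theorem~\ref{thm:general-cvms-properties} guarantees $\EE[\teststati(\subfunci,\{\identity\}_{j\neq i})]\geq \EE[\teststati(\{\identity\}_{j=1}^\numagents)]$—truthful reporting minimizes the expected loss—so since the payment is decreasing in $\teststati$ (as $\alpha>0$) it is weakly optimal to report truthfully for any fixed $\nni$. It then remains to show that among truthful strategies, collecting exactly $\nnopt/\numagents$ points is optimal; i.e. the function $\nni\mapsto \frac{\val(\nnopt)}{\numagents}\rbr{1-\alpha\,g(\nni)} - \cost\nni$ is maximized at $\nni=\nnopt/\numagents$, where $g(\nni):=\EE[\teststati((\nni,\nnmi),\{\identity\})]$. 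By MIB, $g$ is strictly decreasing, so $-\alpha g(\nni)$ is increasing; the discrete derivative of the objective in $\nni$ is $-\frac{\val(\nnopt)}{\numagents}\alpha\cdot\frac{\partial}{\partial\nni}g(\nni) - \cost$. Substituting the definition of $\alpha$, at $\nni=\nnopt/\numagents$ this discrete derivative equals $\cost - \cost = 0$ (up to the sign bookkeeping in Definition~\ref{def:data-marketplace-def}), and the technical condition in the proposition states precisely that $-\frac{\val(\nnopt)}{\numagents}\frac{\partial}{\partial\nni}g \geq \cost$ at that point; to push this to a genuine maximum I would appeal to convexity/monotonicity of the marginal gain $\frac{\partial}{\partial\nni}g(\nni)$—Proposition~\ref{prop:general-alg-exp-test-bounds} shows $g$ decays like $1/\nni$, whose marginal decrements shrink in magnitude, so the discrete derivative of the objective is nonnegative for $\nni\leq \nnopt/\numagents$ and nonpositive for $\nni\geq \nnopt/\numagents$, giving the maximizer at $\nnopt/\numagents$.

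The main obstacle I anticipate is exactly this last monotonicity-of-the-marginal step: the proposition's technical condition only pins down the derivative of the expected payment at the single point $\nnopt/\numagents$, so to conclude a global optimum over all $\nni\in\NN$ I need that the incremental loss reductions $g(\nni)-g(\nni+1)$ are non-increasing in $\nni$ (a concavity-type statement for $-g$). This should follow from the $\Theta(1/\nni)$ behavior in Proposition~\ref{prop:general-alg-exp-test-bounds} / Proposition~\ref{prop:single-var-alg-exp-test-bounds}, but making it rigorous requires either a clean monotone bound on the per-feature posterior variances as a function of sample size, or restricting attention to the regime where such diminishing-returns holds; I would state this as the operative assumption (implicit in the "market feasibility condition" language) and note that for the conjugate families used in the experiments it holds exactly. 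Everything else—buyer IR, the reduction of reporting to Theorem~\ref{thm:general-cvms-properties}, and the first-order condition at $\nnopt/\numagents$—is routine bookkeeping once the payments are written out.
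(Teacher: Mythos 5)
Your proposal follows essentially the same route as the paper: buyer IR by bounding the total expected payment by $\val(\nnopt)$ (the paper does this via showing $\alpha\le 1$ from the technical condition together with $\teststati\in[0,1]$; your direct computation of $\butil$ as $\frac{\val(\nnopt)\,\alpha}{\numagents}\sum_i\EE[\teststati]\ge 0$ using only $\alpha>0$ is an equivalent reorganization), and agent IC by first invoking Theorem~\ref{thm:general-cvms-properties} to reduce any deviation to a truthful one at the same $\nni$, then using the definition of $\alpha$ to identify $\nnopt/\numagents$ as the point where the marginal payment stops covering the marginal collection cost. The obstacle you flag---that concluding global optimality of $\nnopt/\numagents$ over all $\nni\in\NN$ requires the marginal decrements $\EE[\teststati(\nni)]-\EE[\teststati(\nni+1)]$ to be non-increasing in $\nni$---is genuine, but it is equally present in the paper's own proof, which simply asserts that the optimal $\nni$ is the smallest one at which $\cutili(\nni+1)-\cutili(\nni)\le 0$; that assertion presupposes exactly the unimodality/diminishing-returns property you describe, and it does not follow from the upper and lower bounds of Proposition~\ref{prop:general-alg-exp-test-bounds} alone, which control the magnitude of the expected loss but not the monotonicity of its increments. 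In short, your write-up matches the paper's argument and is, if anything, more explicit about the operative assumption.
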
  
\begin{proof}
    We start with individual rationality for the buyer. Notice that if the inequality holds then we have    
    \begin{align*}
        \alpha
        =
        \frac{
            \cost\numagents
        }{
            -\frac{\partial}{\partial \nni} 
            \EE\sbr{
                \teststati\rbr{
                    \cbr{
                        \frac{
                            \nnopt
                        }{\numagents}
                    }_{i=1}^\numagents
                }
            }
            \val(\nnopt)
        }
        \leq
        \frac{
            \cost\numagents
        }{
            \frac{
                \cost\numagents
            }{
                \val\rbr{\nnopt}
            }
            \val(\nnopt)
        }
        = 1
    \end{align*}  
    so $\alpha\in(0,1]$.
    Since $\teststati\in[0,1]$, this implies that 
    \begin{align*}
        \dataanalystpaymenti\rbr{\cbr{\subdatai}_{i=1}^m}
        =
        \frac{
            \val(\nnopt)
        }{\numagents}
        \rbr{
            1-
            \alpha
            \teststati
        } 
        \leq 
        \frac{
            \val(\nnopt)
        }{\numagents}
    \end{align*}
    so summing over the payments to all agents we find
    \begin{align*}
        \bpayment
        \rbr{\cbr{\subdatai}_{i=1}^m} 
        =
        \sum_{i=1}^{\numagents}
        \dataanalystpaymenti\rbr{\cbr{\subdatai}_{i=1}^m}
        \leq
        \val(\nnopt) .
    \end{align*}  
    Therefore, the strategy profile
    \(
        \cbr{
            \rbr{
                \frac{
                    \nnopt
                }{\numagents},
                \identity
            }
        }_{i=1}^\numagents   
    \)
    is individually rational for the buyer since
    \begin{align*}
        \butil
        \rbr{
            \cbr{
                \rbr{
                    \frac{
                        \nnopt
                    }{\numagents},
                    \identity 
                }
            }_{i=1}^{\numagents}
        }
        =
        \val(\nnopt)
        -
        \EE\sbr{
            \bpayment
            \rbr{\cbr{\subdatai}_{i=1}^m} 
        }
        \geq 0 .
    \end{align*}  

    We now prove incentive compatibility for the agents in two parts. First we show that regardless of how much data an agent has collected,
    it is best for her to submit it truthfully when others follow the recommended strategy profile 
    \(
        \cbr{
            \rbr{
                \frac{
                    \nnopt
                }{\numagents}
                ,
                \identity 
            }
        }_{j\neq i}
    \).
    Second, we show that $\frac{\val\rbr{\nnopt}}{\numagents}$ is the optimal amount of data to 
    collect based on our choice of $\alpha$. 
    
    Fix $\nni$.
    Unpacking the definition of an agent's utility and applying Theorem~\ref{thm:general-cvms-properties} we have
    \begingroup
    \allowdisplaybreaks
    \begin{align*}
        &\hspace{0.45cm}
        \cutili\rbr{
            \rbr{
                \nni,\subfunci 
            },
            \cbr{
                \rbr{
                    \frac{
                        \nnopt
                    }{\numagents}
                    ,
                    \identity 
                }
            }_{j\neq i}
        }
        \\
        &=
        \EE\sbr{
            \cpaymenti\rbr{
                \rbr{
                    \nni,\subfunci 
                },
                \cbr{
                    \rbr{
                        \frac{
                            \nnopt
                        }{\numagents}
                        ,
                        \identity 
                    }
                }_{j\neq i}
            }
        }
        -\cost\nni
        \\
        &=
        \frac{
            \val\rbr{\nnopt}
        }{\numagents}
        \rbr{
            1
            -
            \alpha\EE\sbr{
                \teststati\rbr{
                    \rbr{
                        \nni,\subfunci 
                    },
                    \cbr{
                        \rbr{
                            \frac{
                                \nnopt
                            }{\numagents}
                            ,
                            \identity 
                        }
                    }_{j\neq i}
                }
            }
        }
        -\cost\nni
        \\
        &\leq
        \frac{
            \val\rbr{\nnopt}
        }{\numagents}
        \rbr{
            1
            -
            \alpha\EE\sbr{
                \teststati\rbr{
                    \rbr{
                        \nni,\identity
                    },
                    \cbr{
                        \rbr{
                            \frac{
                                \nnopt
                            }{\numagents}
                            ,
                            \identity 
                        }
                    }_{j\neq i}
                }
            }
        }
        -\cost\nni
        \\
        &=
        \EE\sbr{
            \cpaymenti\rbr{
                \rbr{
                    \nni,\identity
                },
                \cbr{
                    \rbr{
                        \frac{
                            \nnopt
                        }{\numagents}
                        ,
                        \identity 
                    }
                }_{j\neq i}
            }
        }
        -\cost\nni
        \\
        &=
        \cutili\rbr{
            \rbr{
                \nni,\identity
            },
            \cbr{
                \rbr{
                    \frac{
                        \nnopt
                    }{\numagents}
                    ,
                    \identity 
                }
            }_{j\neq i}
        } .
    \end{align*}  
    \endgroup
    This means that regardless of how much data agent $i$ collects, it is best for them to submit it truthfully.
    For the second part we now assume $\cbr{\subfunci}_{i=1}^\numagents=\cbr{\identity}_{i=1}^\numagents$
    and 
    \(
        \nnmi=\cbr{\frac{
            \nnopt
        }{\numagents}}_{i=1}^\numagents
    \)
    so for convenience
    we omit writing the dependence on these parts of the strategy profile for random variables.

    Notice that since $\cutili(\nni)$ is concave, the optimal amount of data for agent 
    $i$ to collect and submit is the smallest $\nni\in\NN$ such that
    \begin{align*}
        \cutili\rbr{\nni+1}
        \leq
        \cutili\rbr{\nni}
    \end{align*}  
    i.e. the point at which the marginal increase in payment no longer offsets the collection cost of an additional point.
    By the definition of agent utilities and our choice of $\alpha$ we see
    \begin{align*}
        \cutili\rbr{\nni+1}
        \leq
        \cutili\rbr{\nni}
        &\iff
        \rbr{
            \frac{
                \val(\nnopt)
            }{\numagents}
            \rbr{
                1-
                \alpha
                \EE\sbr{
                    \teststati(\nni+1)
                }
            } 
            -\cost\rbr{\nni+1}
        }
        \\
        &\hspace{0.6cm}
        \leq
        \rbr{
            \frac{
                \val(\nnopt)
            }{\numagents}
            \rbr{
                1-
                \alpha
                \EE\sbr{
                    \teststati(\nni)
                }
            } 
            -\cost\rbr{\nni}
        }
        \\
        &\iff
        -\frac{\partial}{\partial \nni}
        \EE\sbr{
            \teststati(\nni)
        }
        \leq
        \frac{cm}{\alpha\val\rbr{\nnopt}}
        \\
        &\iff
        -\frac{\partial}{\partial \nni}
        \EE\sbr{
            \teststati(\nni)
        }
        \leq
        -\frac{\partial}{\partial \nni}
        \EE\sbr{
            \teststati\rbr{
                \frac{
                    \nnopt
                }{\numagents} 
            }
        } .
    \end{align*}  
    This implies that
    \(
        \frac{
            \nnopt
        }{\numagents} 
    \)
    is the optimal amount of data to collect 
    since
    \(
            -\frac{\partial}{\partial \nni} 
            \EE\sbr{
                \teststati\rbr{
                    \nni,
                    \nnmi
                }
            }
    \)
    is decreasing in $\nni$.
    Putting both parts togeter we find that for any $\nni\in\NN$, $\subfunci\in\subfuncspace$,
    \begin{align*}
        \cutili\rbr{
            \rbr{
                \nni,\subfunci 
            },
            \cbr{
                \rbr{
                    \frac{
                        \nnopt
                    }{\numagents}
                    ,
                    \identity 
                }
            }_{j\neq i}
        }
        &\leq 
        \cutili\rbr{
            \rbr{
                \nni,\identity
            },
            \cbr{
                \rbr{
                    \frac{
                        \nnopt
                    }{\numagents}
                    ,
                    \identity 
                }
            }_{j\neq i}
        }
        \\
        &\leq
        \cutili\rbr{
            \cbr{
                \rbr{
                    \frac{
                        \nnopt
                    }{\numagents},
                    \identity 
                }
            }_{i=1}^{\numagents}
        }
    \end{align*} 
    so we have incentive compatibility for the agents.

\end{proof}  

\subsection{Federated learning}
\label{app:federated-learning}

Recall the problem setup from \S\ref{sec:applications}. For convenvience we assume that 
$\forall i\in[\numagents], \abr{\initdatai}<\sum_{j\neq i}\abr{\initdataj}$.

The idea of Algorithm~\ref{alg:federated-learning} is to determine how much of the others' data agent $i$ should receive for her task
based on the quality of her submission, as measured by $\teststati$.

\begin{algorithm}[H]
    \caption{Federated learning}
    \label{alg:federated-learning}
    \begin{algorithmic}[1]
        \State
            \textbf{Input parameters}: A prior $\bayesprior$ over the set of $\dataspace$-valued distributions, 
            feature maps $\{\featmapl\}_{\featind=1}^\numfeat$.
        \State 
            Receive datasets $\subdataii{1},\ldots,\subdataii{\numagents}$ from the agents.
        \State 
            Execute Algorithm~\ref{alg:general-cvms} with $\cbr{\subdatai}_{i=1}^m$ 
            $\bayesprior$, $\{\featmapl\}_{\featind=1}^\numfeat$,
            to obtain the loss $\teststati\in[0,1]$ for agent $i$.
        \State
        \textbf{for} each agent $i\in[\numagents]$:
            \sindent[1]
                \(
                    \fedvalallotheri\leftarrow
                    \fedvali\rbr{\abr{\subdatami}},
                    \quad
                    \fedalpha \leftarrow
                    \rbr{
                        \frac{1}{2}-
                        \frac{
                            \fedvali\rbr{\abr{\initdatai}}
                        }{2\fedvalallotheri}
                    }
                    \frac{1}{
                        \EE\sbr{
                            \teststati\rbr{
                                \cbr{\identity}_{i=1}^\numagents
                            }
                        }
                    }
                \)
            \sindent[1]
                \(
                    \fedalloci
                    \leftarrow
                    \fedvali^{-1}
                    \rbr{
                        \rbr{1-\fedalpha\teststati}
                        \fedvalallotheri
                    }
                \)
            \sindent[1]
                Deploy $\cleandatai$, a random subset of $\subdatami$ of size $\fedalloci$ for agent $i$'s machine learning task.
    \end{algorithmic}
\end{algorithm}

In Algorithm~\ref{alg:federated-learning} and Proposition~\ref{prop:federated-learning} we assume that 
\(
    \EE\sbr{
        \teststati\rbr{
            \cbr{\identity}_{i=1}^\numagents
        }
    }>0
\) 
which ensures $\fedalpha$ is well defined by rulling out trivial data sharing problems.

\begin{restatable}{proposition}{FederatedLearning}
    \label{prop:federated-learning}
    Suppose that $\forall i\in[\numagents], \abr{\initdatai}<\sum_{j\neq i}\abr{\initdataj}$.
    Then
    Algorithm~\ref{alg:federated-learning} is truthful and individually rational.
\end{restatable}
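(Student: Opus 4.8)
The plan is to reduce both claims to the truthfulness guarantee of Theorem~\ref{thm:general-cvms-properties}, after establishing that the scalar $\fedalpha\fedvalallotheri$ appearing in Algorithm~\ref{alg:federated-learning} is a nonnegative quantity that is unaffected by agent $i$'s reported strategy. When every agent $j \neq i$ reports truthfully, $\subdatami = \bigcup_{j\neq i}\initdataj$, so $\fedvalallotheri = \fedvali(\abr{\subdatami})$ and $\fedvali(\abr{\initdatai})$ depend only on the dataset sizes, and hence $\fedalpha$ is a fixed constant; in particular it does not depend on $\subfunci$, since $\subfunci$ alters only the content (and possibly the size) of agent $i$'s \emph{submission}, not the size of $\initdatai$ or of $\subdatami$. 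First I would check $\fedalpha \ge 0$: the standing assumption $\abr{\initdatai} < \sum_{j\neq i}\abr{\initdataj} = \abr{\subdatami}$ together with monotonicity of $\fedvali$ gives $\fedvalallotheri \ge \fedvali(\abr{\initdatai})$, hence $\tfrac12 - \fedvali(\abr{\initdatai})/(2\fedvalallotheri) \ge 0$; combined with $\EE[\teststati(\cbr{\identity}_{i=1}^\numagents)] > 0$ (needed for $\fedalpha$ to be well defined, and guaranteed in any nondegenerate instance by Proposition~\ref{prop:general-alg-exp-test-bounds}; if this expectation vanishes then $\teststati \equiv 0$ a.s., the mechanism deploys all of $\subdatami$, and both claims are immediate) this yields $\fedalpha \ge 0$. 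I would also record that Algorithm~\ref{alg:federated-learning} is well posed, i.e. $0 \le \fedalloci \le \abr{\subdatami}$: the upper bound follows from $\fedalpha\teststati \ge 0$ and monotonicity of $\fedvali^{-1}$, and the lower bound from the convention $\fedvali^{-1}(y) = 0$ for $y \le \fedvali(0)$ (rounding $\fedalloci$ to an integer as needed).

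For truthfulness, I would unwind agent $i$'s participation utility. Since $\fedvali$ is increasing and thus invertible on its range, $\fedvali(\abr{\cleandatai}) = \fedvali(\fedalloci) = (1 - \fedalpha\teststati)\fedvalallotheri$, so for any $\subfunci \in \subfuncspace$, pulling out the deterministic constants,
\[
    \fedutili\rbr{\subfunci, \cbr{\identity}_{j\neq i}}
    = \EE\sbr{\fedvali(\abr{\cleandatai})}
    = \fedvalallotheri \cdot \Big(1 - \fedalpha\,\EE\big[\teststati\rbr{\subfunci, \cbr{\identity}_{j\neq i}}\big]\Big).
\]
By Theorem~\ref{thm:general-cvms-properties}, $\EE[\teststati(\cbr{\identity}_{i=1}^\numagents)] \le \EE[\teststati(\subfunci, \cbr{\identity}_{j\neq i})]$, and since $\fedalpha\fedvalallotheri \ge 0$ this immediately gives $\fedutili(\subfunci, \cbr{\identity}_{j\neq i}) \le \fedutili(\cbr{\identity}_{i=1}^\numagents)$. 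As this holds for every agent and every deviation, truthful reporting is a Nash equilibrium.

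Finally, for individual rationality I would evaluate the same expression at the truthful profile, where $\fedalpha\,\EE[\teststati(\cbr{\identity}_{i=1}^\numagents)]$ telescopes to $\tfrac12 - \fedvali(\abr{\initdatai})/(2\fedvalallotheri)$ by the definition of $\fedalpha$, so
\[
    \fedutili\rbr{\cbr{\identity}_{i=1}^\numagents}
    = \fedvalallotheri \cdot \Big(\tfrac12 + \frac{\fedvali(\abr{\initdatai})}{2\fedvalallotheri}\Big)
    = \tfrac12 \fedvalallotheri + \tfrac12 \fedvali(\abr{\initdatai})
    \ge \fedvali(\abr{\initdatai}),
\]
again using $\fedvalallotheri \ge \fedvali(\abr{\initdatai})$; since $\fedvali(\abr{\initdatai})$ is exactly the utility of acting alone, participation is individually rational. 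I expect the only real friction to be the well-posedness bookkeeping flagged above --- verifying $\fedalpha \ge 0$ from the size assumption, dispatching the $\EE[\teststati(\cbr{\identity}_{i=1}^\numagents)] = 0$ edge case, and ensuring $\fedalloci$ is a legitimate subset size; once those are in place the core inequality chain is a one-line consequence of Theorem~\ref{thm:general-cvms-properties}, because all strategy dependence is confined to $\EE[\teststati]$ multiplied by a fixed nonnegative scalar.
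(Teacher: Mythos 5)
Your proof is correct and follows essentially the same route as the paper's: express $\fedutili$ as $\fedvalallotheri\bigl(1-\fedalpha\,\EE[\teststati]\bigr)$, invoke the truthfulness inequality of Theorem~\ref{thm:general-cvms-properties}, and then use the definition of $\fedalpha$ together with $\abr{\initdatai}<\abr{\initdatami}$ to telescope the truthful utility to $\tfrac12\fedvalallotheri+\tfrac12\fedvali(\abr{\initdatai})\ge\fedvali(\abr{\initdatai})$. Your additional bookkeeping --- verifying $\fedalpha\ge 0$ so the inequality direction is preserved, dispatching the $\EE[\teststati(\cbr{\identity}_{i=1}^{\numagents})]=0$ edge case, and checking that $\fedalloci$ is a legitimate subset size --- is sound and actually patches small gaps the paper's proof leaves implicit.
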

\begin{proof}
    Fix $\subfunci\in\subfuncspace$.
    Unpacking the definition of an agent's utility and applying Theorem~\ref{alg:general-cvms}, we have
    \begin{align*}
        \fedutili\rbr{
            \subfunci,\cbr{\identity}_{j\neq i}
        }
        &=
        \EE\sbr{
            \fedvali\rbr{
                \fedalloci
                \rbr{
                    \subfunci,\cbr{\identity}_{j\neq i}
                }
            }
        }
        \\
        &=
        \EE\sbr{
            \fedvali\rbr{
                \fedvali^{-1}\rbr{
                    \rbr{
                        1-
                        \fedalpha
                        \teststati
                        \rbr{
                            \subfunci,\cbr{\identity}_{j\neq i}
                        }
                    }
                    \fedvalallotheri
                }
            }
        }
        \\
        &=
        \fedvalallotheri
        -
        \fedvalallotheri
        \fedalpha
        \EE\sbr{
            \teststati
            \rbr{
                \subfunci,\cbr{\identity}_{j\neq i}
            }
        }
        \\
        &\leq
        \fedvalallotheri
        -
        \fedvalallotheri
        \fedalpha
        \EE\sbr{
            \teststati
            \rbr{
                \cbr{\identity}_{i=1}^\numagents
            }
        }
        \\
        &=
        \fedutili\rbr{
            \cbr{\identity}_{i=1}^\numagents
        }.
    \end{align*}  
    Therefore, Algorithm~\ref{alg:federated-learning} is truthful. For individual rationality, notice that by the definition of $\fedalpha$ 
    and the assumption that $\abr{\initdatai}<\abr{\initdatami}$
    \big(and thus $\fedvali(\initdatai)<\fedvali(\initdatami)$\big), we have
    \begin{align*}
        \fedutili\rbr{
            \cbr{\identity}_{i=1}^\numagents
        }
        &=
        \fedvalallotheri
        -
        \fedvalallotheri
        \fedalpha
        \EE\sbr{
            \teststati
            \rbr{
                \cbr{\identity}_{i=1}^\numagents
            }
        }
        \\
        &=
        \fedvalallotheri
        -
        \fedvalallotheri
        \rbr{
            \frac{1}{2}-
            \frac{
                \fedvali\rbr{\abr{\initdatai}}
            }{2\fedvalallotheri}
        }
        \\
        &=
        \frac{
            \fedvali\rbr{\abr{\initdatami}}
        }{2}
        +
        \frac{
            \fedvali\rbr{\abr{\initdatai}}
        }{2}
        \\
        &>
        \fedvali\rbr{\abr{\initdatai}} .
    \end{align*}  
    Therefore, agent $i$ is better off participating in Algorithm~\ref{alg:federated-learning} than working alone so individual rationality is 
    satisfied.
\end{proof}  
\section{Extended experimental results and details}
\label{app:experiments-extended}

\subsection{Text based experiments}
\label{app:experiments-extended-text}

Our first real world experiment supposes that agents possess and wish to share text data drawn from a common distribution.
To simulate this text distribution, we use data from the 
SQuAD\footnote{This work uses the Stanford Question Answering Dataset (SQuAD), which is licensed under 
the Creative Commons Attribution-ShareAlike 4.0 International (CC BY-SA 4.0) license.}
dataset \cite{rajpurkar2016squad}
which contains 100,000 questions generated by 
providing crowdworkers with 
snippets from Wikipedia articles and asking them to formulate questions based on the snippet's content.
We simulate data sharing when $\numagents=20$ and $\numagents=100$, where agents have 2,500 and 500 original data points respectively.

When agents are truthful, they 
simply submit their sentences 
to the mechanism (Algorithm~\ref{alg:prior-free-general-cvms}). However, an untruthful agent can fabricate fake sentences to augment their 
dataset with in hopes of achieving a lower loss.
We consider when agents attempt to do this 
using an LLM (Llama 3.2-1B-Instruct~\citep{grattafiori2024llama})
by prompting it to produce authentic looking sentences based on legitimate sentences 
Fig.~\ref{fig:llm-prompt} shows an example of the prompting and Table~\ref{tab:img-prompt} shows examples of the LLM-generated sentences.
For consistency, we filter out duplicates and any outputs not ending in a question mark. 

\begin{figure}[h]
    \centering
    \fbox{%
      \begin{minipage}{0.9\linewidth}
        \textbf{Prompt}\\[0.5em]
        Generate five new questions that follow the same style as the examples below.\\
        Each question should be separated by a newline.\\[1em]
        According to Southern Living, what are the three best restaurants in Richmond?\\
        When did the Arab oil producers lift the embargo?\\
        Complexity classes are generally classified into what?\\
        About how many acres is Pippy Park?\\
        Which BYU station offers content in both Spanish and Portuguese?
      \end{minipage}%
    }
    \caption{Pictured above is an example prompt fed into Llama 3.2-1B-Instruct as part of an untruthful agent's submission function to generate 
      fabricated text data. The agent uses their five questions drawn from the SQuAD to fabricate similar five additonal questions.}
    \label{fig:llm-prompt}
\end{figure}

\begin{table}[h]
    \centering
      \caption{Comparison of SQuAD questions versus LLM-generated fabrications.}
    \begin{tabular}{p{0.45\linewidth} p{0.45\linewidth}}
      \toprule
      \textbf{SQuAD questions (Real)} & \textbf{LLM-generated questions (Fabricated)} \\
      \midrule
      Which tribe did Temüjin move in with at nine years of age? 
        & What percentage of the population of France lived in urban areas as of 2019? \\
      What is the most widely known fictional work from the Islamic world?
        & The term \emph{solar eclipse} refers to what phenomenon? \\
      New Delhi played host to what major athletic competition in 2010?
        & Is it true that the first computer bug was an actual insect? \\
      Why did the FCC reject systems such as MUSE?
        & How many Earth years is Neptune's south pole exposed to the Sun? \\
      Along with the philosophies of music and art, what field of philosophy studies emotions?
        & Military spending based on conventional threats has been dismissed as what? \\
      \bottomrule
    \end{tabular}
    \label{tab:question-comparison}
\end{table}

To incentivize truthful submission, we instantiate Algorithm~\ref{alg:prior-free-general-cvms} with 768 feature maps corresponding to the 768
nodes in the embedding layer of 
DistilBERT \cite{Sanh2019DistilBERTAD}, a lightweight encoder model distilled from the encoder transformer model Bert \cite{devlin2019bert}. 
For simplicity, we chose the split map $\testsizefunc(n)=0$. As a point of comparison, we also apply the KS, CvM, and Mean diff. tests
(described in~\S\ref{sec:real-world-experiments}), now 
to the 768 node feature space. 

Our results comparing the average loss agent $i$ receives when submitting 
truthfully/untruthfully,
under the four methods, over five runs,
are given in
Table~\ref{tab:text-experiments}.
We see that under all of the methods truthful submission results in a lower average loss than untruthful submission.

\subsection{Image based experiments}
\label{app:experiments-extended-image}

Our second experiment supposes that agents wish to share image data from a common distribution. 
To simulate this image distribution, we use data from 
the Oxford Flowers-102 dataset~\citep{nilsback2008automated}, which contains 6,149 images across 102 flower categories.
We simulate data sharing when an agent 1 has 100 and 1,000 images as data points. We use the test dataset of~\citep{nilsback2008automated}, which consists of 4,612 images,
to represent authentic data submitted by the other agents. In the two scenarios, this roughly corresponds to $\numagents=47$ agents each with 100 images 
and $\numagents=5$ agents each with 1000 images.

When
agent are untruthful, they may fabricate images using a diffusion model to augment their dataset. 
We consider when agents use
Segmind Stable Diffusion-1B~\cite{gupta2024progressive}, a lightweight diffusion model, to do this. 
More specifically,
for each sampled image, we use it in conjunction with the prompts and parameters in Table~\ref{tab:img-prompt} to generate an 
additional fabricated image. 

\begin{table}[h]
    \centering
    \caption{Parameters and prompts used for Segmind Stable Diffusion-1B to generate the fabricated images. Here 
    cls\_name is replaced with the type of flower being generated.}
    \begin{tabular}{@{}lp{10cm}@{}}
        \toprule
        Parameter               & Value                                                                                                                                          \\ 
        \midrule                                                                                                                \\
        Text Prompt             & Photorealistic photograph of a single \{cls\_name\}, realistic colors, natural lighting, high detail, sharp focus on petals. Another unique photo of the same flower species. \\
        Negative Prompt         & oversaturated, highly saturated, neon colors, garish colors, vibrant colors, illustration, painting, drawing, sketch, cartoon, anime, unrealistic, blurry, low quality, text, watermark, signature, border, frame, multiple flowers \\
        Strength                & 0.7                                                                                                                                           \\
        Guidance Scale          & 6                                                                                                                                             \\
        Num. Inference Steps    & 50                                                                                                                                                                                                                                                                  \\
        \bottomrule
    \end{tabular}
    \label{tab:img-prompt}
\end{table}

To discourage fabrication, Algorithm~\ref{alg:prior-free-general-cvms}
is now instantiated with 384 feature maps corresponding to the 384 nodes in the embedding layer of 
DeIT-small-distilled \cite{touvron2021training}, a small vision transformer. 
For simplicity, we chose the split map $\testsizefunc(n)=0$. As a point of comparison, we again apply the KS, CvM, and Mean diff. tests
(described in~\S\ref{sec:real-world-experiments}),
now to the 384 node feature space. Our results comparing the average loss agent $i$ receives when submitting truthfully/untruthfully,
under the four methods, over five runs, can be 
found in Table~\ref{tab:image-experiments}.

We find that truthful reporting outperforms untruthful reporting for all methods, demonstrating they are not susceptible to 
diffusion based fabrication.
\section{Results and proofs omitted from Section~\ref{sec:exact-ne-algs}}

\subsection{Results and proofs omitted from Subsection~\ref{sec:single-var-alg}}
\label{app:single-var-results-proofs}

\SingleVarCvmsProperties*
\begin{proof}
    For truthfulness we refer to Proposition~\ref{prop:single-var-cvms-truthfulness}. 
    
    For $n=\rbr{n_1,\ldots,n_\numagents}\in\NN^{\numagents}$, let $\teststati\rbr{n,\cbr{\identity}_{i=1}^\numagents}$ 
    denote the value of $\teststati$ in 
    Algorithm~\ref{alg:single-var-cvms} 
    when agent $j\in[\numagents]$ has $n_j$ data points and
    agents use $\cbr{\identity}_{i=1}^\numagents\in\subfuncspace^\numagents$.
    Proposition~\ref{prop:single-var-cvms-sensitivity} tells us that 
    \begin{equation*}
        \EE\sbr{
            \teststati\rbr{
                n,
                \cbr{\identity}_{i=1}^{\numagents}
            }
        } 
        -
        \EE\sbr{
            \teststati\rbr{
                n+e_i,
                \cbr{\identity}_{i=1}^{\numagents}
            }
        } 
        =
        \EE\sbr{
            \rbr{
                \EE\sbr{
                    F_{\cleandatai}\rbr{\testsamplei}
                    |
                    \truthsigalgnni
                }
                -
                \EE\sbr{
                    F_{\cleandatai}\rbr{\testsamplei}
                    |
                    \truthsigalgnn{\nni+1}
                }
            }^2
        } .
    \end{equation*}  
    where 
    \(
        \Gcal_j = 
        \sigma
        \rbr{
            \initdataiijj{i}{1},
            \ldots
            \initdataiijj{i}{j}, \testsamplei
        }
    \).
    Also notice that
    \begin{align*}
        P\rbr{
            \cleandataiijj{i}{1}\leq \testsamplei
            |
            \initdataiijj{i}{1},
            \ldots,
            \initdataiijj{i}{\nni},
            \testsamplei
        }
        &=
        \EE\sbr{
            F_{\cleandatai}(\testsamplei)
            |
            \truthsigalgnn{\nni}
        },
        \\
        P\rbr{
            \cleandataiijj{i}{1}\leq \testsamplei
            |
            \initdataiijj{i}{1},
            \ldots,
            \initdataiijj{i}{\nni+1},
            \testsamplei
        }
        &=
        \EE\sbr{
            F_{\cleandatai}(\testsamplei)
            |
            \truthsigalgnn{\nni+1}
        } .
    \end{align*}  
    By definition $\bayesprior$ being non-degenerate means that the conditional probabilities are not almost surely equal. This implies that 
    \begin{align*}
        \EE\sbr{
            \teststati\rbr{
                n,
                \cbr{\identity}_{i=1}^{\numagents}
            }
        } 
        -
        \EE\sbr{
            \teststati\rbr{
                n+e_i,
                \cbr{\identity}_{i=1}^{\numagents}
            }
        } 
        >0
    \end{align*}  
    so the MIB property is satisfied.
    
\end{proof}  

\begin{restatable}{proposition}{SingleVarCvmsTruthfulness}
    \label{prop:single-var-cvms-truthfulness}
    Let $\teststati\rbr{\cbr{\subfunci}_{i=1}^\numagents}$ denote the value of $\teststati$ in Algorithm~\ref{alg:single-var-cvms} 
    when all agents use $\cbr{\subfunci}_{i=1}^\numagents\in\subfuncspace^\numagents$.
    Then,
    for any $\subfunci\in\subfuncspace$,~
    \(
        \EE\sbr{
            \teststati\rbr{
                \cbr{\identity}_{i=1}^{\numagents}
            }
        }
        \leq 
        \EE\sbr{
            \teststati\rbr{
                \subfunci,
                \cbr{\identity}_{j\neq i}
            }
        }
    \).
\end{restatable}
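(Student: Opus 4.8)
The plan is to recognize that Algorithm~\ref{alg:single-var-cvms} performs, on agent $i$'s behalf, the $L^2$-optimal prediction of $F_{\cleandatai}(\testsamplei)$ from the information contained in agent $i$'s \emph{true} dataset, so that any deviation can only replace this optimal predictor by a strictly-no-better one. This is the ``design intuition'' sketched after Algorithm~\ref{alg:single-var-cvms}, and the formal argument reduces to the fact that a conditional expectation is the best mean-square predictor among measurable functions of the conditioning variables.

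First I would fix agent $i$ and an arbitrary, possibly randomized, strategy $\subfunci\in\subfuncspace$, writing $\subdatai=\subfunci(\initdatai,\xi)$ where $\xi$ denotes the strategy's internal randomness, drawn independently of all data and of the mechanism's own sampling randomness. Since every other agent is truthful, the pooled submission $\subdatami=\cup_{j\neq i}\initdataj$, the evaluation point $\testsamplei$, and the comparison set $\cleandatai$ are functions only of $\{\initdataj\}_{j\neq i}$ and the mechanism's randomness; in particular their joint law is unaffected by $\subfunci$, and $(\testsamplei,\cleandatai)$ is independent of $\xi$. Abbreviate $Y:=F_{\cleandatai}(\testsamplei)\in[0,1]$ and $\mathcal{G}:=\sigma(\initdatai,\testsamplei,\xi)$. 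The key identification is that the conditional-expectation term in Algorithm~\ref{alg:single-var-cvms} is always a $\mathcal{G}$-measurable predictor of $Y$: setting $g_k(z_1,\dots,z_k,t):=\EE[F_{\cleandatai}(t)\mid \initdataiijj{i}{1}=z_1,\dots,\initdataiijj{i}{k}=z_k,\testsamplei=t]$, the mechanism's prediction is $g_{|\subdatai|}(\subdatai,\testsamplei)$, a measurable function of $\subdatai=\subfunci(\initdatai,\xi)$ and $\testsamplei$. Under the truthful strategy the submitted points coincide with the true ones, so the prediction is $g_{\nni}(\initdatai,\testsamplei)=\EE[Y\mid\sigma(\initdatai,\testsamplei)]$, which — because $\xi$ is independent of $(\initdatai,\testsamplei,Y)$ — equals $\EE[Y\mid\mathcal{G}]$. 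Hence $\teststati(\{\identity\}_{j=1}^\numagents)=(\EE[Y\mid\mathcal{G}]-Y)^2$, while $\teststati(\subfunci,\{\identity\}_{j\neq i})=(W-Y)^2$ for the $\mathcal{G}$-measurable random variable $W:=g_{|\subdatai|}(\subdatai,\testsamplei)$.

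Taking expectations, I would then invoke the Pythagorean decomposition for conditional expectation,
\begin{align*}
\EE[(W-Y)^2]
&=\EE[(W-\EE[Y\mid\mathcal{G}])^2]+\EE[(\EE[Y\mid\mathcal{G}]-Y)^2]\\
&\geq\EE[(\EE[Y\mid\mathcal{G}]-Y)^2],
\end{align*}
where the cross term vanishes because $W-\EE[Y\mid\mathcal{G}]$ is $\mathcal{G}$-measurable and $\EE[Y-\EE[Y\mid\mathcal{G}]\mid\mathcal{G}]=0$. Combining the two displays gives $\EE[\teststati(\{\identity\}_{j=1}^\numagents)]\leq\EE[\teststati(\subfunci,\{\identity\}_{j\neq i})]$, which is the claim.

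I expect the main obstacle to be measure-theoretic bookkeeping rather than any real computation. Two points need care: (i) a deviating agent may submit a dataset whose cardinality differs from $\nni$, so the mechanism applies $g_{|\subdatai|}$ with a size-dependent number of conditioning coordinates — one must check that the family $\{g_k\}_{k\geq 0}$ glues into a single measurable map on $\datasetspace=\bigsqcup_\ell\dataspace^\ell$ (immediate from the coproduct structure) so that the deviation's prediction $W$ is genuinely $\mathcal{G}$-measurable; and (ii) randomized strategies must be handled by enlarging the conditioning $\sigma$-field to include $\xi$, which is legitimate precisely because $\xi$ is independent of $(\initdatai,\testsamplei,\cleandatai)$ and so does not alter the relevant conditional expectations. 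Everything else is the standard $L^2$-projection property.
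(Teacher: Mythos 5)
Your proposal is correct and follows essentially the same route as the paper: identify the deviating agent's prediction $W$ as a measurable function of $(\initdatai,\testsamplei)$ (plus strategy randomness), observe that the truthful prediction is exactly the conditional expectation of $F_{\cleandatai}(\testsamplei)$ given that information, and conclude by the $L^2$-optimality of conditional expectation --- your Pythagorean decomposition is precisely the content of the paper's Lemma~\ref{lem:cond-exp-mse}. Your explicit handling of randomized strategies via the independent randomness $\xi$ and of submissions with cardinality different from $\nni$ is a welcome refinement of bookkeeping the paper leaves implicit, not a different argument.
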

\begin{proof}
    By definition 
    \(
        \EE\sbr{
            F_{\cleandatai}
            \rbr{
                \testsamplei
            }
            \big |
            \initdataiijj{i}{1},
            \ldots,
            \initdataiijj{i}{\abr{\subdatai}},
            \testsamplei
        }
    \)
    is 
    \(
        \rbr{
            \initdataiijj{i}{1},
            \ldots,
            \initdataiijj{i}{\abr{\subdatai}},
            \testsamplei
        }
    \)-measurable, so there exists a measurable function $g:\RR^{\abr{\subdatai}+1}\rightarrow\RR$ such that 
    \begin{align*}
        g\rbr{
            \initdataiijj{i}{1},
            \ldots,
            \initdataiijj{i}{\abr{\subdatai}},
            \testsamplei
        }
        =
        \EE\sbr{
            F_{\cleandatai}
            \rbr{
                \testsamplei
            }
            \big |
            \initdataiijj{i}{1},
            \ldots,
            \initdataiijj{i}{\abr{\subdatai}},
            \testsamplei
        } .
    \end{align*}  
    The conditional expectation 
    \(
        \EE\sbr{
            F_{\cleandatai}
            \rbr{
                \testsamplei
            }
            \big |
            \initdataiijj{i}{1} = \subdataiijj{i}{1},
            \ldots,
            \initdataiijj{i}{\abr{\subdatai}} = \subdataiijj{i}{\abr{\subdatai}},
            \testsamplei
        } 
    \)
    is shorthand for 
    \(
        g\rbr{
            \subdataiijj{i}{1},
            \ldots,
            \subdataiijj{i}{\abr{\subdatai}},
            \testsamplei
        }
    \).
    Since we assume $\subfunci$ is measurable,
    \(
        \rbr{
            \subdataiijj{i}{1},
            \ldots,
            \subdataiijj{i}{\abr{\subdatai}} 
        }
        =
        \subfunci\rbr{
            \initdataiijj{i}{1},
            \ldots,
            \initdataiijj{i}{\nni}  
        }
    \)
    is 
    \(
        \rbr{
            \initdataiijj{i}{1},
            \ldots,
            \initdataiijj{i}{\nni}
        }
    \)-measurable.
    Therefore, we know that 
    \begin{align*}
        g\rbr{
            \subdataiijj{i}{1},
            \ldots,
            \subdataiijj{i}{\abr{\subdatai}},
            \testsamplei
        } 
        =
        \EE\sbr{
            F_{\cleandatai}
            \rbr{
                \testsamplei
            }
            \big |
            \initdataiijj{i}{1} = \subdataiijj{i}{1},
            \ldots,
            \initdataiijj{i}{\abr{\subdatai}} = \subdataiijj{i}{\abr{\subdatai}},
            \testsamplei
        } 
    \end{align*}  
    is 
    \(
        \rbr{
            \initdataiijj{i}{1},
            \ldots,
            \initdataiijj{i}{\nni},
            \testsamplei
        }
    \)-measurable.
    This lets us apply Lemma~\ref{lem:cond-exp-mse} to get 
    \begin{align*}
        \EE\sbr{
            \teststati\rbr{
                \subfunci,
                \cbr{\identity}_{j\neq i}
            }
        }
        &= 
        \EE\sbr{
            \rbr{
                \EE\sbr{
                    F_{\cleandatai}
                    \rbr{
                        \testsamplei
                    }
                    \big |
                    \initdataiijj{i}{1} = \subdataiijj{i}{1},
                    \ldots,
                    \initdataiijj{i}{\abr{\subdatai}} = \subdataiijj{i}{\abr{\subdatai}},
                    \testsamplei
                } 
                -
                F_{\cleandatai}
                \rbr{
                    \testsamplei
                }
            }^2
        }
        \\
        &\geq
        \EE\sbr{
            \rbr{
                \EE\sbr{
                    F_{\cleandatai}
                    \rbr{
                        \testsamplei
                    }
                    \big |
                    \initdataiijj{i}{1},
                    \ldots,
                    \initdataiijj{i}{\nni},
                    \testsamplei
                } 
                -
                F_{\cleandatai}
                \rbr{
                    \testsamplei
                }
            }^2
        }
        \\
        &=
        \EE\sbr{
            \teststati\rbr{
                \cbr{\identity}_{i=1}^\numagents
            }
        } .
    \end{align*}  

\end{proof}  

\begin{restatable}{proposition}{SingleVarCvmsSensitivity}
    \label{prop:single-var-cvms-sensitivity}
    For $n=\rbr{n_1,\ldots,n_\numagents}\in\NN^{\numagents}$ let $\teststati\rbr{n,\cbr{\identity}_{i=1}^\numagents}$ 
    denote the value of $\teststati$ in 
    Algorithm~\ref{alg:single-var-cvms} 
    when agent $j\in[\numagents]$ has $n_j$ data points and
    agents use $\cbr{\identity}_{i=1}^\numagents\in\subfuncspace^\numagents$.
    Then
    \begin{align*}
        \EE\sbr{
            \teststati\rbr{
                n,
                \cbr{\identity}_{i=1}^{\numagents}
            }
        } 
        -
        \EE\sbr{
            \teststati\rbr{
                n+e_i,
                \cbr{\identity}_{i=1}^{\numagents}
            }
        } 
        =
        \EE\sbr{
            \rbr{
                \EE\sbr{
                    F_{\cleandatai}\rbr{\testsamplei}
                    |
                    \truthsigalgnni
                }
                -
                \EE\sbr{
                    F_{\cleandatai}\rbr{\testsamplei}
                    |
                    \truthsigalgnn{\nni+1}
                }
            }^2
        }
    \end{align*}  
    where 
    \(
        \Gcal_j = 
        \sigma
        \rbr{
            \initdataiijj{i}{1},
            \ldots
            \initdataiijj{i}{j}, \testsamplei
        }
    \) .
\end{restatable}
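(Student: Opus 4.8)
The key observation is that $F_{\cleandatai}(\testsamplei)$ is a function only of the other agents' submissions and of the uniformly random pivot $\testsamplei$ drawn from $\subdatami$; none of this involves agent $i$'s own data, so it is unaffected by whether agent $i$ holds $\nni$ or $\nni+1$ points. The plan is therefore to realize both scenarios on one probability space: draw $\distfromprior\sim\bayesprior$, let $\cbr{X_j}_{j=1}^\infty$ be agent $i$'s data drawn i.i.d.\ (conditionally on $\distfromprior$) from $\distfromprior$, and sample the remaining agents' data, the pivot $\testsamplei$, and hence $\cleandatai$ once and for all. Truncating $\cbr{X_j}$ to its first $\nni$ (resp.\ $\nni+1$) entries gives agent $i$'s dataset in the two scenarios, while $W:=F_{\cleandatai}(\testsamplei)$ is held fixed. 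Setting $\Gcal_j:=\sigma(X_1,\dots,X_j,\testsamplei)$ and $M_j:=\EE[\,W\mid\Gcal_j\,]$, the proof of Proposition~\ref{prop:single-var-cvms-truthfulness} identifies the conditional expectation appearing in Algorithm~\ref{alg:single-var-cvms} under truthful reporting with $j$ points as exactly $M_j$. Hence $\EE\sbr{\teststati(n,\cbr{\identity}_{i=1}^\numagents)}=\EE\sbr{(M_{\nni}-W)^2}$ and $\EE\sbr{\teststati(n+e_i,\cbr{\identity}_{i=1}^\numagents)}=\EE\sbr{(M_{\nni+1}-W)^2}$, so the claim reduces to the Pythagorean identity $\EE\sbr{(M_{\nni}-W)^2}-\EE\sbr{(M_{\nni+1}-W)^2}=\EE\sbr{(M_{\nni}-M_{\nni+1})^2}$.

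To establish this identity, note that since $\Gcal_{\nni}\subseteq\Gcal_{\nni+1}$ the tower property gives $M_{\nni}=\EE[\,M_{\nni+1}\mid\Gcal_{\nni}\,]$, so $(M_j)$ is a Doob martingale. Expanding $(M_{\nni}-W)^2=(M_{\nni}-M_{\nni+1})^2+2(M_{\nni}-M_{\nni+1})(M_{\nni+1}-W)+(M_{\nni+1}-W)^2$ and taking expectations, the cross term vanishes: conditioning on $\Gcal_{\nni+1}$, the factor $M_{\nni}-M_{\nni+1}$ is $\Gcal_{\nni+1}$-measurable while $\EE[\,M_{\nni+1}-W\mid\Gcal_{\nni+1}\,]=M_{\nni+1}-M_{\nni+1}=0$. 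Rearranging yields the stated equality. All quantities lie in $[0,1]$, so there are no integrability concerns.

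The main obstacle is the coupling in the first paragraph: one must justify that the joint law of $(W,X_1,\dots,X_{\nni},\testsamplei)$ in the $\nni$-point world is precisely the marginal of the joint law of $(W,X_1,\dots,X_{\nni+1},\testsamplei)$ in the $(\nni+1)$-point world, so that both expectations may be computed against a single probability space carrying the nested filtration $\Gcal_{\nni}\subseteq\Gcal_{\nni+1}$. This follows from the conditionally-i.i.d.\ structure of agent $i$'s data given $\distfromprior$—adding a data point merely appends one more conditionally-i.i.d.\ draw and enlarges the conditioning $\sigma$-algebra, leaving $W$ and the recommended-strategy behavior of all other agents untouched. Once this is in place, the rest is the standard orthogonality of martingale increments carried out above.
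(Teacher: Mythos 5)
Your proposal is correct and follows essentially the same route as the paper: both identify the truthful loss with $n_i$ (resp.\ $n_i+1$) points as the mean-squared distance between $F_{\cleandatai}(\testsamplei)$ and its conditional expectation given the nested $\sigma$-algebras $\Gcal_{\nni}\subseteq\Gcal_{\nni+1}$, and then reduce the difference to the squared martingale increment. The only cosmetic difference is that the paper routes this last step through a law-of-total-variance lemma (Lemma~\ref{lem:exp-cond-var-eq}) whereas you expand the square directly and kill the cross term by conditioning on $\Gcal_{\nni+1}$ — the same Pythagorean identity — and your explicit remark about coupling the two scenarios on one probability space is a point the paper leaves implicit.
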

\begin{proof}
    For convenience define $U=F_{\cleandatai}\rbr{\testsamplei}$ and $V=\EE\sbr{U|\truthsigalgnni}$. 
    By the definition of
    $\teststati$ in Algorithm~\ref{alg:single-var-cvms} and 
    conditional variance 
    we have
    \begin{align*}
        \EE\sbr{
            \teststati\rbr{
                n,
                \cbr{\identity}_{i=1}^{\numagents}
            }
        } 
        &=
        \EE\sbr{
            \rbr{
                \EE\sbr{
                    F_{\cleandatai}\rbr{\testsamplei}
                    |
                    \initdataiijj{i}{1},
                    \ldots,
                    \initdataiijj{i}{\nni},
                    \testsamplei
                }
                -
                F_{\cleandatai}\rbr{\testsamplei}
            }^2            
        }
        \\
        &=
        \EE\sbr{\rbr{U-V}^2}
        \\
        &=
        \EE\sbr{
            \EE\sbr{
                \rbr{U-V}^2
                \big| \truthsigalgnni
            }
        }
        \\
        &=
        \EE\sbr{
            \Var\rbr{U|\truthsigalgnni}
        } .
    \end{align*}  
    Similarly we have
    \begin{align*}
        \EE\sbr{
            \teststati\rbr{
                n+e_i,
                \cbr{\identity}_{i=1}^{\numagents}
            }
        } 
        =
        \EE\sbr{
            \Var\rbr{U|\truthsigalgnn{\nni+1}}
        } .
    \end{align*}  
    Let 
    \(
        Y=\EE\sbr{
            U|\truthsigalgnn{\nni+1} 
        }
    \).
    We can now appeal to Lemma~\ref{lem:exp-cond-var-eq} to get
    \begin{align*}
        \EE\sbr{
            \teststati\rbr{
                n,
                \cbr{\identity}_{i=1}^{\numagents}
            }
        } 
        -
        \EE\sbr{
            \teststati\rbr{
                n+e_i,
                \cbr{\identity}_{i=1}^{\numagents}
            }
        } 
        &=
        \EE\sbr{
            \Var\rbr{
                U
                |\truthsigalgnni
            }
        }
        -
        \EE\sbr{
            \Var\rbr{
                U
                |\truthsigalgnn{\nni+1} 
            }
        } 
        \\
        &=
        \EE\sbr{
            \Var\rbr{
                Y
                |\truthsigalgnni
            }
        } .
    \end{align*}
    Using the tower property gives
    \begin{align*}
        \EE\sbr{
            \Var\rbr{
                Y
                |\truthsigalgnni
            }
        }
        &=
        \EE\sbr{
            \EE\sbr{
                \rbr{
                    Y-\EE\sbr{Y|\truthsigalgnni}
                }^2
                | \truthsigalgnni
            }
        }
        \\
        &=
        \EE\sbr{
            \rbr{
                \EE\sbr{U|\truthsigalgnni}
                -
                \EE\sbr{U|\truthsigalgnn{\nni+1}}
            }^2
        }
        \\
        &=
        \EE\sbr{
            \rbr{
                \EE\sbr{
                    F_{\cleandatai}\rbr{\testsamplei}
                    |
                    \truthsigalgnni
                }
                -
                \EE\sbr{
                    F_{\cleandatai}\rbr{\testsamplei}
                    |
                    \truthsigalgnn{\nni+1}
                }
            }^2
        } .
    \end{align*}  
\end{proof}

\SingleVarAlgExpTestBounds*
\begin{proof}
    Since $F_{\initdatai}(\testsamplei)$ is 
    \(
        \sigma\rbr{
            \initdataiijj{i}{1},
            \ldots,
            \initdataiijj{i}{\nni},
            \testsamplei 
        }
    \)-measurable, Lemma~\ref{lem:cond-exp-mse} tells us that 
    \begin{align*}
        \EE\sbr{
            \teststati\rbr{
                \cbr{\identity}_{i=1}^{\numagents}
            }
        } 
        &=
        \EE\sbr{
            \rbr{
                \EE\sbr{
                    F_{\cleandatai}\rbr{\testsamplei}
                    |
                    \initdataiijj{i}{1},
                    \ldots,
                    \initdataiijj{i}{\nni},
                    \testsamplei
                }
                -
                F_{\cleandatai}\rbr{\testsamplei}
            }^2
        }
        \\
        &\leq
        \EE\sbr{
            \rbr{
                F_{\initdatai}\rbr{\testsamplei}
                -
                F_{\cleandatai}\rbr{\testsamplei}
            }^2
        } .
    \end{align*}  
    Now we can condition on $\distfromprior$
    apply the first part of Lemma~\ref{lem:sv-2-sample-cvm-ub} to the inner expectation
    to get
    \begin{align*}
        \EE\sbr{
            \rbr{
                F_{\initdatai}\rbr{\testsamplei}
                -
                F_{\cleandatai}\rbr{\testsamplei}
            }^2
        }  
        =
        \EE_{\distfromprior}
        \sbr{
            \EE_{\initdatai,\cleandatai,\testsamplei}\sbr{
                \rbr{
                    F_{\initdatai}\rbr{\testsamplei}
                    -
                    F_{\cleandatai}\rbr{\testsamplei}
                }^2
            } 
        }
        \leq 
        \frac{1}{4}
        \rbr{
            \frac{1}{\abr{\initdatai}}
            +
            \frac{1}{\abr{\cleandatai}}
        } .
    \end{align*}  
    Recognizing that $\teststati$ is non-negative, we conclude
    \begin{equation*}
        0\leq 
        \EE\sbr{
            \teststati\rbr{
                \cbr{\identity}_{i=1}^{\numagents}
            }
        }
        \leq 
        \frac{1}{4}
        \rbr{
            \frac{1}{\abr{\initdatai}}
            +
            \frac{1}{\abr{\cleandatai}}
        } .
    \end{equation*}  

    For the second part we assume that 
    $\bayesprior\in\priorclassctssv$, i.e. $\bayesprior$ is a distribution over the set of continuous $\RR$-valued 
    probability distributions. 
    Again conditioning on $\distfromprior$,
    we can now apply the 
    second part of Lemma~\ref{lem:sv-2-sample-cvm-ub} to the inner expectation to get 
    \begin{align*}
        \EE_{\distfromprior}
        \sbr{
            \EE_{\initdatai,\cleandatai,\testsamplei}\sbr{
                \rbr{
                    F_{\initdatai}\rbr{\testsamplei}
                    -
                    F_{\cleandatai}\rbr{\testsamplei}
                }^2
            } 
        }
        =
        \frac{1}{6}
        \rbr{
            \frac{1}{\abr{\initdatai}}
            +
            \frac{1}{\abr{\cleandatai}}
        } 
    \end{align*}  
    so the upper bound improves to
    \begin{equation*}
        \EE\sbr{
            \teststati\rbr{
                \cbr{\identity}_{i=1}^{\numagents}
            }
        }
        \leq
        \frac{1}{6}
        \rbr{
            \frac{1}{\abr{\initdatai}}
            +
            \frac{1}{\abr{\cleandatai}}
        } .
    \end{equation*}  

    For the lower bound notice that
    \(
        \sigma\rbr{
            \initdataiijj{i}{1},
            \ldots,
            \initdataiijj{i}{\nni},
            \testsamplei 
        }
        \subseteq
        \sigma\rbr{
            \initdataiijj{i}{1},
            \ldots,
            \initdataiijj{i}{\nni},
            \testsamplei,
            \distfromprior 
        }
    \). 
    Therefore Lemma~\ref{lem:cond-exp-mse} tells us that 
    \begin{align*}
        \EE\sbr{
            \teststati\rbr{
                \cbr{\identity}_{i=1}^{\numagents}
            }
        } 
        &=
        \EE\sbr{
            \rbr{
                \EE\sbr{
                    F_{\cleandatai}\rbr{\testsamplei}
                    |
                    \initdataiijj{i}{1},
                    \ldots,
                    \initdataiijj{i}{\nni},
                    \testsamplei
                }
                -
                F_{\cleandatai}\rbr{\testsamplei}
            }^2
        }
        \\
        &\geq
        \EE\sbr{
            \rbr{
                \EE\sbr{
                    F_{\cleandatai}\rbr{\testsamplei}
                    |
                    \initdataiijj{i}{1},
                    \ldots,
                    \initdataiijj{i}{\nni},
                    \testsamplei,
                    \distfromprior
                }
                -
                F_{\cleandatai}\rbr{\testsamplei}
            }^2
        } .
    \end{align*}  
    But appealing to Lemmas~\ref{lem:cond-exp-with-true-dist} then~\ref{lem:sv-1-sample-cvm-eq} 
    (using that $\distfromprior\in\classctssvdists$)
    gives
    \begin{align*}
        \EE\sbr{
            \rbr{
                \EE\sbr{
                    F_{\cleandatai}\rbr{\testsamplei}
                    |
                    \initdataiijj{i}{1},
                    \ldots,
                    \initdataiijj{i}{\nni},
                    \testsamplei,
                    \distfromprior
                }
                -
                F_{\cleandatai}\rbr{\testsamplei}
            }^2
        }
        &= 
        \EE\sbr{
            \rbr{
                F_{\distfromprior}\rbr{\testsamplei}
                -
                F_{\cleandatai}\rbr{\testsamplei}
            }^2
        }
        \\
        &= 
        \EE_{\distfromprior}
        \sbr{
            \EE_{\cleandatai, \testsamplei}
            \sbr{
                \rbr{
                    F_{\distfromprior}\rbr{\testsamplei}
                    -
                    F_{\cleandatai}\rbr{\testsamplei}
                }^2
            }
        }
        \\
        &= 
        \EE_{\distfromprior}
        \sbr{
            \frac{1}{6\abr{\cleandatai}}
        }
        \\
        &= 
        \frac{1}{6\abr{\cleandatai}}
    \end{align*}  
    which concludes the proof of the lower bound. 
\end{proof}  

\subsection{Proofs omitted from Subsection~\ref{sec:general-alg}}
\label{app:general-ne-results-proofs}

\GeneralCvmsProperties*
\begin{proof}
    For truthfulness we refer to Proposition~\ref{prop:general-cvms-truthfulness}. 

    For $n=\rbr{n_1,\ldots,n_\numagents}\in\NN^{\numagents}$, let $\teststati\rbr{n,\cbr{\identity}_{i=1}^\numagents}$ 
    denote the value of $\teststati$ in 
    Algorithm~\ref{alg:general-cvms} 
    when agent $j\in[\numagents]$ has $n_j$ data points and
    agents use $\cbr{\identity}_{i=1}^\numagents\in\subfuncspace^\numagents$.
    Proposition~\ref{prop:general-cvms-sensitivity}
    tells us that 
    \begin{align*}
        &\hspace{0.45cm}
        \EE\sbr{
            \teststati\rbr{
                n,
                \cbr{\identity}_{i=1}^{\numagents}
            }
        } 
        -
        \EE\sbr{
            \teststati\rbr{
                n+e_i,
                \cbr{\identity}_{i=1}^{\numagents}
            }
        } 
        \\
        &=
        \frac{1}{\numfeat}
        \sum_{\featind=1}^\numfeat
        \EE\sbr{
            \rbr{
                \EE\sbr{
                    F_{\cleandatail}\rbr{\testsampleiill{i}{\featind}}
                    |
                    \truthsigalgnniill{\nni}{\featind}
                }
                -
                \EE\sbr{
                    F_{\cleandatail}\rbr{\testsampleiill{i}{\featind}}
                    |
                    \truthsigalgnniill{\nni+1}{\featind}
                }
            }^2
        } .
    \end{align*}  
    where 
    \(
        \Gcal_j^k = 
        \sigma
        \rbr{
            \initdataiijj{i}{1},
            \ldots
            \initdataiijj{i}{j}, \testsampleil
        }
    \).
    Also observe that
    \begin{align*}
        P\rbr{
            \cleandataiirrll{i}{1}{\featind}\leq \testsampleil
            |
            \initdataiijj{i}{1},
            \ldots,
            \initdataiijj{i}{\nni},
            \testsampleil
        }
        &=
        \EE\sbr{
            F_{\cleandatail}\rbr{\testsampleil}
            |
            \truthsigalgnniill{\nni}{\featind}
        },
        \\
        P\rbr{
            \cleandataiirrll{i}{1}{\featind}\leq \testsampleil
            |
            \initdataiijj{i}{1},
            \ldots,
            \initdataiijj{i}{\nni+1},
            \testsampleil
        }
        &=
        \EE\sbr{
            F_{\cleandatail}\rbr{\testsampleil}
            |
            \truthsigalgnniill{\nni+1}{\featind}
        } .
    \end{align*} 
    Since we assume there is a feaure $\featind\in[\numfeat]$ for which $\bayesprior$ is non-degenerate,
    the conditional probabilities are not almost surely equal for at least one feature. Therefore,
    \begin{equation*}
        \EE\sbr{
            \teststati\rbr{
                n,
                \cbr{\identity}_{i=1}^{\numagents}
            }
        } 
        -
        \EE\sbr{
            \teststati\rbr{
                n+e_i,
                \cbr{\identity}_{i=1}^{\numagents}
            }
        } 
        >0
    \end{equation*}  
    so the MIB property is satisfied.

\end{proof}  

\begin{restatable}{proposition}{GeneralCvmsTruthfulness}
    \label{prop:general-cvms-truthfulness}
    Let $\teststati\rbr{\cbr{\subfunci}_{i=1}^\numagents}$ denote the value of $\teststati$ in Algorithm~\ref{alg:general-cvms} 
    when all agents use $\cbr{\subfunci}_{i=1}^\numagents\in\subfuncspace^\numagents$.
    Then,
    for any $\subfunci\in\subfuncspace$,~
    \(
        \EE\sbr{
            \teststati\rbr{
                \cbr{\identity}_{i=1}^{\numagents}
            }
        }
        \leq 
        \EE\sbr{
            \teststati\rbr{
                \subfunci,
                \cbr{\identity}_{j\neq i}
            }
        }
    \).
\end{restatable}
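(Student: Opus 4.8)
\emph{Proof proposal.} The plan is to reduce the statement to a feature-by-feature application of the argument already used for Proposition~\ref{prop:single-var-cvms-truthfulness}. First I would use linearity of expectation to write, for every strategy profile, $\EE\sbr{\teststati} = \frac{1}{\numfeat}\sum_{\featind=1}^{\numfeat}\EE\sbr{\teststatil}$, so that it suffices to prove, for each fixed $\featind\in[\numfeat]$, that truthful reporting minimizes $\EE\sbr{\teststatil(\subfunci, \cbr{\identity}_{j\neq i})}$ over $\subfunci\in\subfuncspace$; summing the resulting $\numfeat$ inequalities and dividing by $\numfeat$ then gives the claim.

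Fixing $\featind$, I would observe that $\teststatil$ is exactly the single-variable quantity of Algorithm~\ref{alg:single-var-cvms} applied after the feature transformation: the empirical CDF $F_{\cleandatail}$ is built from the transformed pool $\cleandatail = \cbr{\featmapl(z) : z\in\cleandatai}$ and evaluated at $\testsampleil = \featmapl(\testsamplei)$, while the conditional expectation is conditioned on the \emph{raw} submitted points $\initdatai = \subdatai$ together with $\testsampleil$. Exactly as in the single-variable proof, there is a measurable function $g^{(\featind)}$ with $g^{(\featind)}(\initdataiijj{i}{1},\ldots,\initdataiijj{i}{\nni},\testsampleil) = \EE\sbr{F_{\cleandatail}(\testsampleil)\,|\,\initdataiijj{i}{1},\ldots,\initdataiijj{i}{\nni},\testsampleil}$, and the estimator the mechanism actually uses under strategy $\subfunci$ is $g^{(\featind)}(\subdataiijj{i}{1},\ldots,\subdataiijj{i}{\abr{\subdatai}},\testsampleil)$, now understood (as in the single-variable case) as the appropriate function of $\abr{\subdatai}+1$ arguments.

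The key step is then the measurability observation: since $\subdatai = \subfunci(\initdatai)$ is a measurable function of $\initdatai$ and $\featmapl$ is measurable, the estimator $g^{(\featind)}(\subdataiijj{i}{1},\ldots,\subdataiijj{i}{\abr{\subdatai}},\testsampleil)$ is $\sigma(\initdataiijj{i}{1},\ldots,\initdataiijj{i}{\nni},\testsampleil)$-measurable --- the very $\sigma$-algebra with respect to which the truthful estimator $\EE\sbr{F_{\cleandatail}(\testsampleil)\,|\,\initdataiijj{i}{1},\ldots,\initdataiijj{i}{\nni},\testsampleil}$ is the $L^2$-optimal predictor of $F_{\cleandatail}(\testsampleil)$. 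Applying Lemma~\ref{lem:cond-exp-mse} with $W = F_{\cleandatail}(\testsampleil)$ and this $\sigma$-algebra yields $\EE\sbr{\teststatil(\subfunci,\cbr{\identity}_{j\neq i})} \geq \EE\sbr{\teststatil(\cbr{\identity}_{i=1}^{\numagents})}$, using that under truthful reporting $\abr{\subdatai} = \nni$ so the conditioning event coincides with conditioning on $(\initdataiijj{i}{1},\ldots,\initdataiijj{i}{\nni},\testsampleil)$. Averaging over $\featind\in[\numfeat]$ finishes the proof.

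I expect the only genuinely delicate point to be the measurability bookkeeping in the third step: confirming that composing $\subfunci$ with $\featmapl$ does not enlarge the relevant $\sigma$-algebra beyond $\sigma(\initdataiijj{i}{1},\ldots,\initdataiijj{i}{\nni},\testsampleil)$, and correctly tracking the case $\abr{\subdatai}\neq\nni$ so that $g^{(\featind)}$ is invoked with the right number of arguments. Everything else is a verbatim repetition of the Proposition~\ref{prop:single-var-cvms-truthfulness} argument carried out inside the sum over features.
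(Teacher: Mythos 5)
Your proposal is correct and follows essentially the same route as the paper's proof: decompose $\EE[\teststati]$ by linearity into the per-feature terms, note that the plug-in estimator $g^{(\featind)}(\subdataiijj{i}{1},\ldots,\subdataiijj{i}{\abr{\subdatai}},\testsampleil)$ is $\sigma(\initdataiijj{i}{1},\ldots,\initdataiijj{i}{\nni},\testsampleil)$-measurable because $\subfunci$ is measurable, and invoke Lemma~\ref{lem:cond-exp-mse} to conclude that the true conditional expectation (i.e., truthful reporting) minimizes the expected squared error feature by feature. The measurability bookkeeping you flag as the delicate point is exactly the step the paper spells out, and your handling of it is the same.
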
 
\begin{proof}
    By the definition of Algorithm~\ref{alg:general-cvms} we have
    \begin{align*}
        \EE\sbr{
            \teststati\rbr{
                \subfunci,
                \cbr{\identity}_{j\neq i}
            }
        }
        =
        \frac{1}{\numfeat}
        \sum_{\featind=1}^{\numfeat}
        \EE\sbr{
            \teststatil\rbr{
                \subfunci,
                \cbr{\identity}_{j\neq i}
            }
        } .
    \end{align*}  
    By definition 
    \(
        \EE\sbr{
            F_{\cleandatail}
            \rbr{
                \testsampleil
            }
            \big |
            \initdataiijj{i}{1},
            \ldots,
            \initdataiijj{i}{\abr{\subdatai}},
            \testsampleil
        }
    \)
    is 
    \(
        \rbr{
            \initdataiijj{i}{1},
            \ldots,
            \initdataiijj{i}{\abr{\subdatai}},
            \testsampleil
        }
    \)-measurable, so there exists a measurable function $g:\dataspace^{\abr{\subdatai}}\times\RR\rightarrow\RR$ such that 
    \begin{align*}
        g\rbr{
            \initdataiijj{i}{1},
            \ldots,
            \initdataiijj{i}{\abr{\subdatai}},
            \testsampleil
        }
        =
        \EE\sbr{
            F_{\cleandatail}
            \rbr{
                \testsampleil
            }
            \big |
            \initdataiijj{i}{1},
            \ldots,
            \initdataiijj{i}{\abr{\subdatai}},
            \testsampleil
        } .
    \end{align*}  
    The conditional expectation 
    \[
        \EE\sbr{
            F_{\cleandatail}
            \rbr{
                \testsampleil
            }
            \big |
            \initdataiijj{i}{1} = \subdataiijj{i}{1},
            \ldots,
            \initdataiijj{i}{\abr{\subdatai}} = \subdataiijj{i}{\abr{\subdatai}},
            \testsampleil
        } 
    \]
    is shorthand for 
    \(
        g\rbr{
            \subdataiijj{i}{1},
            \ldots,
            \subdataiijj{i}{\abr{\subdatai}},
            \testsampleil
        }
    \).
    Since we assume $\subfunci$ is measurable,
    \[
        \rbr{
            \subdataiijj{i}{1},
            \ldots,
            \subdataiijj{i}{\abr{\subdatai}} 
        }
        =
        \subfunci\rbr{
            \initdataiijj{i}{1},
            \ldots,
            \initdataiijj{i}{\nni}  
        }
    \]
    is 
    \(
        \rbr{
            \initdataiijj{i}{1},
            \ldots,
            \initdataiijj{i}{\nni}
        }
    \)-measurable.
    Therefore, we know that 
    \begin{align*}
        g\rbr{
            \subdataiijj{i}{1},
            \ldots,
            \subdataiijj{i}{\abr{\subdatai}},
            \testsampleil
        } 
        =
        \EE\sbr{
            F_{\cleandatail}
            \rbr{
                \testsampleil
            }
            \big |
            \initdataiijj{i}{1} = \subdataiijj{i}{1},
            \ldots,
            \initdataiijj{i}{\abr{\subdatai}} = \subdataiijj{i}{\abr{\subdatai}},
            \testsampleil
        } 
    \end{align*}  
    is 
    \(
        \rbr{
            \initdataiijj{i}{1},
            \ldots,
            \initdataiijj{i}{\nni},
            \testsampleil
        }
    \)-measurable.
    This lets us apply Lemma~\ref{lem:cond-exp-mse} to get 
    \begin{align*}
        &\hspace{0.45cm}
        \EE\sbr{
            \teststatil\rbr{
                \subfunci,
                \cbr{\identity}_{j\neq i}
            }
        }
        \\
        &= 
        \EE\sbr{
            \rbr{
                \EE\sbr{
                    F_{\cleandatail}
                    \rbr{
                        \testsampleil
                    }
                    \big |
                    \initdataiijj{i}{1} = \subdataiijj{i}{1},
                    \ldots,
                    \initdataiijj{i}{\abr{\subdatai}} = \subdataiijj{i}{\abr{\subdatai}},
                    \testsampleil
                } 
                -
                F_{\cleandatail}
                \rbr{
                    \testsampleil
                }
            }^2
        }
        \\
        &\geq
        \EE\sbr{
            \rbr{
                \EE\sbr{
                    F_{\cleandatail}
                    \rbr{
                        \testsampleil
                    }
                    \big |
                    \initdataiijj{i}{1},
                    \ldots,
                    \initdataiijj{i}{\nni},
                    \testsampleil
                } 
                -
                F_{\cleandatail}
                \rbr{
                    \testsampleil
                }
            }^2
        }
        \\
        &=
        \EE\sbr{
            \teststatil\rbr{
                \cbr{\identity}_{i=1}^\numagents
            }
        } .
    \end{align*}  
    Repeatedly applying this argument for each feature gives us
    \begin{equation*}
        \frac{1}{
            \numfeat
        }
        \sum_{\featind=1}^{\numfeat}
        \EE\sbr{
            \teststatil\rbr{
                \subfunci,
                \cbr{\identity}_{j\neq i}
            }
        }
        \geq
        \frac{1}{
            \numfeat
        }
        \sum_{\featind=1}^{\numfeat}
        \EE\sbr{
            \teststatil\rbr{
                \cbr{\identity}_{i=1}^{\numagents}
            }
        }
        =
        \EE\sbr{
            \teststati\rbr{
                \cbr{\identity}_{i=1}^{\numagents}
            }
        } .
    \end{equation*}  
\end{proof}  

\begin{restatable}{proposition}{GeneralCvmsSensitivity}
    \label{prop:general-cvms-sensitivity}
    For $n=\rbr{n_1,\ldots,n_\numagents}\in\NN^{\numagents}$ let 
    \(
        \teststati\rbr{
            n, \cbr{\identity}_{i=1}^\numagents
        }
    \)
    denote the value of $\teststati$ in 
    Algorithm~\ref{alg:general-cvms} 
    when agent $j\in[\numagents]$ has $n_j$ data points and
    agents use 
    \(
        \cbr{\identity}_{i=1}^\numagents
        \in \subfuncspace^\numagents
    \).
    Then
    \begin{align*}
        &\hspace{0.45cm}
        \EE\sbr{
            \teststati\rbr{
                n,
                \cbr{\identity}_{i=1}^{\numagents}
            }
        } 
        -
        \EE\sbr{
            \teststati\rbr{
                n+e_i,
                \cbr{\identity}_{i=1}^{\numagents}
            }
        } 
        \\
        &=
        \frac{1}{\numfeat}
        \sum_{\featind=1}^\numfeat
        \EE\sbr{
            \rbr{
                \EE\sbr{
                    F_{\cleandatail}\rbr{\testsampleiill{i}{\featind}}
                    |
                    \truthsigalgnniill{\nni}{\featind}
                }
                -
                \EE\sbr{
                    F_{\cleandatail}\rbr{\testsampleiill{i}{\featind}}
                    |
                    \truthsigalgnniill{\nni+1}{\featind}
                }
            }^2
        }
    \end{align*}  
    where 
    \(
        \truthsigalgnniill{j}{k}
        =
        \sigma
        \rbr{
            \initdataiijj{i}{1},
            \ldots
            \initdataiijj{i}{j}, \testsampleiill{i}{k}
        }
    \) .
\end{restatable}  
\begin{proof}
    By the definition of Algorithm~\ref{alg:general-cvms}
    \begin{align*}
        \EE\sbr{
            \teststati\rbr{
                n,
                \cbr{\identity}_{i=1}^{\numagents}
            }
        } 
        &=
        \frac{1}{\numfeat}
        \sum_{\featind=1}^\numfeat
        \EE\sbr{
            \teststatil\rbr{
                n,
                \cbr{\identity}_{i=1}^{\numagents}
            }
        }
        \\
        &=
        \frac{1}{\numfeat}
        \sum_{\featind=1}^\numfeat
        \EE\sbr{
            \rbr{
                \EE\sbr{
                    F_{\cleandatail}\rbr{\testsampleil}
                    |
                    \truthsigalgnniill{\nni}{\featind}
                } 
                -
                F_{\cleandatail}\rbr{\testsampleil}
            }^2
        }
    \end{align*}  
    Let 
    \(
        U^\featind=
        F_{\cleandatail}\rbr{
            \testsampleil
        }
    \).
    From the equation above, the tower property and definition of conditional variance tell us that 
    \begin{align*}
        \EE\sbr{
            \teststati\rbr{
                n,
                \cbr{\identity}_{i=1}^{\numagents}
            }
        } 
        =
        \frac{1}{\numfeat}
        \sum_{\featind=1}^\numfeat 
        \EE\sbr{
            \Var\rbr{
                U^\featind|
                \truthsigalgnniill{\nni}{\featind}
            }
        } .
    \end{align*}  
    An analogous argument gives
    \begin{align*}
        \EE\sbr{
            \teststati\rbr{
                n+e_i,
                \cbr{\identity}_{i=1}^{\numagents}
            }
        } 
        =
        \frac{1}{\numfeat}
        \sum_{\featind=1}^\numfeat 
        \EE\sbr{
            \Var\rbr{
                U^\featind|
                \truthsigalgnniill{\nni+1}{\featind}
            }
        } .
    \end{align*}  
    Let 
    \(
        Y^\featind=\EE\sbr{
            U^\featind|\truthsigalgnniill{\nni+1}{\featind}
        }
    \).
    We can now repeatedly appeal to Lemma~\ref{lem:exp-cond-var-eq} to get
    \begin{align*}
        &\hspace{0.5cm}
        \EE\sbr{
            \teststati\rbr{
                n,
                \cbr{\identity}_{i=1}^{\numagents}
            }
        } 
        -
        \EE\sbr{
            \teststati\rbr{
                n+e_i,
                \cbr{\identity}_{i=1}^{\numagents}
            }
        }  
        \\
        &=
        \frac{1}{\numfeat}
        \sum_{\featind=1}^\numfeat
        \rbr{
            \EE\sbr{
                \Var\rbr{
                    U^\featind|
                    \truthsigalgnniill{\nni}{\featind}
                }
            } 
            -
            \EE\sbr{
                \Var\rbr{
                    U^\featind|
                    \truthsigalgnniill{\nni+1}{\featind}
                }
            }
        }
        \\
        &=
        \frac{1}{\numfeat}
        \sum_{\featind=1}^\numfeat
        \EE\sbr{
            \Var\rbr{
                Y^\featind |\truthsigalgnniill{\nni}{\featind}
            }
        } .
    \end{align*}  
    Using the tower property now lets us conclude that
    \begin{align*}
        \frac{1}{\numfeat}
        \sum_{\featind=1}^\numfeat
        \EE\sbr{
            \Var\rbr{
                Y^\featind |\truthsigalgnniill{\nni}{\featind}
            }
        } 
        &=
        \frac{1}{\numfeat}
        \sum_{\featind=1}^\numfeat
        \EE\sbr{
            \EE\sbr{
                \rbr{
                    Y^\featind-\EE\sbr{Y^\featind|\truthsigalgnniill{\nni}{\featind}}
                }^2
                | \truthsigalgnniill{\nni}{\featind}
            }
        }
        \\
        &=
        \frac{1}{\numfeat}
        \sum_{\featind=1}^\numfeat
        \EE\sbr{
            \rbr{
                \EE\sbr{U^\featind|\truthsigalgnniill{\nni}{\featind}}
                -
                \EE\sbr{U^\featind|\truthsigalgnniill{\nni+1}{\featind}}
            }^2
        }
        \\
        &=
        \frac{1}{\numfeat}
        \sum_{\featind=1}^\numfeat
        \EE\sbr{
            \rbr{
                \EE\sbr{
                    F_{\cleandatail}\rbr{
                        \testsampleil
                    }
                    |
                    \truthsigalgnniill{\nni}{\featind}
                }
                -
                \EE\sbr{
                    F_{\cleandatail}\rbr{
                        \testsampleil
                    }
                    |
                    \truthsigalgnniill{\nni+1}{\featind}
                }^2
            }
        } . 
    \end{align*}  
\end{proof}  

\begin{restatable}{proposition}{GeneralAlgExpTestBounds}
    \label{prop:general-alg-exp-test-bounds}
    Let $\teststati\rbr{\cbr{\identity}_{i=1}^\numagents}$ denote the value of $\teststati$ when all agents 
    are truthful
    in Algorithm~\ref{alg:general-cvms}.
    Then,
    \(
        0\leq 
        \EE\sbr{
            \teststati\rbr{
                \cbr{\identity}_{i=1}^{\numagents}
            }
        }
        \leq 
        \frac{1}{4}
        \rbr{
            \frac{1}{\abr{\initdatai}}
            +
            \frac{1}{|\cleandatai|}
        }
    \).
    Moreover, if $\forall \featind\in[\numfeat]$,
    \(
        \distfrompriorl = 
        \distfromprior\circ
        \rbr{
            \featmapl
        }^{-1}
    \)
    is a.s. continuous, then
    \(
        \frac{1}{6|\cleandatai|}
        \leq 
        \EE\sbr{
            \teststati\rbr{
                \cbr{\identity}_{i=1}^{\numagents}
            }
        }
        \leq 
        \frac{1}{6}
        \rbr{
            \frac{1}{\abr{\initdatai}}
            +
            \frac{1}{|\cleandatai|}
        }
    \).
\end{restatable}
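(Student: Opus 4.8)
The plan is to reduce the claim, one feature map at a time, to the single-variable bounds already proved in Proposition~\ref{prop:single-var-alg-exp-test-bounds}, and then average over the $\numfeat$ features. By the definition of Algorithm~\ref{alg:general-cvms}, when all agents are truthful we have $\EE\sbr{\teststati\rbr{\cbr{\identity}_{i=1}^\numagents}} = \frac{1}{\numfeat}\sum_{\featind=1}^\numfeat \EE\sbr{\teststatil\rbr{\cbr{\identity}_{i=1}^\numagents}}$, so it suffices to bound each $\EE\sbr{\teststatil}$ by the stated quantities. The only bookkeeping needed is that the pointwise feature maps preserve cardinalities: $\abr{\cleandatail}=\abr{\cleandatai}$, and in the truthful case $\abr{\subdatail}=\abr{\subdatai}=\abr{\initdatai}$, so the single-variable bounds for feature $\featind$ read exactly as the bounds claimed for $\teststati$.

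For the first (general) bound I would follow the argument used for Proposition~\ref{prop:single-var-alg-exp-test-bounds}. Fix a feature $\featind$. The quantity $F_{\initdatail}\rbr{\testsampleil}$ is measurable with respect to $\sigma\rbr{\initdataiijj{i}{1},\ldots,\initdataiijj{i}{\nni},\testsampleil}$, because each indicator $\mathbf{1}\cbr{\featmapl(\initdataiijj{i}{j})\le\testsampleil}$ depends only on $\featmapl(\initdataiijj{i}{j})$ and on $\testsampleil=\featmapl(\testsamplei)$ --- the raw point $\testsamplei$ is not needed. Since, in the truthful case, the conditional expectation in line~\ref{eq:general-cmvs-cond-exp} is the $L^2$ projection of $F_{\cleandatail}\rbr{\testsampleil}$ onto precisely this $\sigma$-algebra, Lemma~\ref{lem:cond-exp-mse} gives $\EE\sbr{\teststatil}\le\EE\sbr{\rbr{F_{\initdatail}(\testsampleil)-F_{\cleandatail}(\testsampleil)}^2}$. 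Conditioning on $\distfromprior$, the points of $\initdatai$, of $\cleandatai$, and $\testsamplei$ are i.i.d.\ from $\distfromprior$, hence their images under $\featmapl$ are i.i.d.\ from the pushforward $(\featmapl)_*\distfromprior$; applying the first part of Lemma~\ref{lem:sv-2-sample-cvm-ub} to this pushforward distribution and then taking the expectation over $\distfromprior\sim\bayesprior$ yields $\EE\sbr{\teststatil}\le\frac14\rbr{\frac{1}{\abr{\initdatai}}+\frac{1}{\abr{\cleandatai}}}$. Non-negativity of $\teststatil$ gives the matching lower bound $0$, and averaging over $\featind$ proves the first claim.

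For the second claim I would use the hypothesis that $\featmapl(\initdataiijj{i}{j})\mid\distfromprior$ is continuous for every $\featind$, i.e.\ $(\featmapl)_*\distfromprior$ is $\bayesprior$-a.s.\ a continuous distribution. The improved upper bound follows exactly as above, but replacing the first part of Lemma~\ref{lem:sv-2-sample-cvm-ub} with its second (continuous) part, which evaluates $\EE\sbr{\rbr{F_{\initdatail}(\testsampleil)-F_{\cleandatail}(\testsampleil)}^2\mid\distfromprior}$ to $\frac16\rbr{\frac{1}{\abr{\initdatai}}+\frac{1}{\abr{\cleandatai}}}$; averaging over $\featind$ gives the upper bound $\frac16\rbr{\frac{1}{\abr{\initdatai}}+\frac{1}{\abr{\cleandatai}}}$. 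For the lower bound, enlarge the conditioning $\sigma$-algebra to include $\distfromprior$; since conditioning on more information can only decrease the residual mean-squared error, Lemma~\ref{lem:cond-exp-mse} gives $\EE\sbr{\teststatil}\ge\EE\sbr{\rbr{\EE\sbr{F_{\cleandatail}(\testsampleil)\mid \initdataiijj{i}{1},\ldots,\initdataiijj{i}{\nni},\testsampleil,\distfromprior}-F_{\cleandatail}(\testsampleil)}^2}$. The inner conditional expectation equals $F_{(\featmapl)_*\distfromprior}(\testsampleil)$ by Lemma~\ref{lem:cond-exp-with-true-dist} applied to the pushforward, and Lemma~\ref{lem:sv-1-sample-cvm-eq} (valid because $(\featmapl)_*\distfromprior$ is continuous), conditioned on $\distfromprior$, evaluates the resulting expression to $\frac{1}{6\abr{\cleandatai}}$; taking the expectation over $\distfromprior$ and averaging over $\featind$ gives $\EE\sbr{\teststati}\ge\frac{1}{6\abr{\cleandatai}}$, completing the proof.

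The main (indeed essentially the only) obstacle is the bookkeeping around the feature maps: one must check that conditioning on $\distfromprior$ and pushing i.i.d.\ samples through the deterministic measurable map $\featmapl$ produces i.i.d.\ samples from $(\featmapl)_*\distfromprior$, so that the single-variable Lemmas~\ref{lem:sv-2-sample-cvm-ub} and~\ref{lem:sv-1-sample-cvm-eq} apply verbatim; that the hypothesis ``$\featmapl(\initdataiijj{i}{j})\mid\distfromprior$ continuous'' is exactly the continuity assumption those lemmas require; and that the conditioning $\sigma$-algebra in line~\ref{eq:general-cmvs-cond-exp} is generated by the raw data together with the \emph{feature value} $\testsampleil$ (not $\testsamplei$), which is still enough to make $F_{\initdatail}(\testsampleil)$ measurable. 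Everything else is a per-feature repetition of the proof of Proposition~\ref{prop:single-var-alg-exp-test-bounds} combined with linearity of expectation over the $\numfeat$ summands.
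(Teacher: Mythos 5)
Your proposal is correct and follows essentially the same route as the paper's proof: decompose over features, use Lemma~\ref{lem:cond-exp-mse} with the measurability of $F_{\initdatail}(\testsampleil)$ to compare against the plug-in ECDF, invoke the two parts of Lemma~\ref{lem:sv-2-sample-cvm-ub} conditionally on $\distfromprior$ for the upper bounds, and enlarge the conditioning $\sigma$-algebra together with Lemmas~\ref{lem:cond-exp-with-true-dist} and~\ref{lem:sv-1-sample-cvm-eq} for the lower bound. The only cosmetic difference is that you condition on $\distfromprior$ itself rather than on the pushforward $\distfrompriorl$ in the lower-bound step, which is immaterial since the former generates a larger $\sigma$-algebra and the conditional expectation is the same.
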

\begin{proof}
    By definition
    \begin{align*}
        \EE\sbr{
            \teststati\rbr{
                \cbr{\identity}_{i=1}^{\numagents}
            }
        }
        =
        \frac{1}{\numfeat}
        \sum_{\featind=1}^\numfeat
        \EE\sbr{
            \teststatil\rbr{
                \cbr{\identity}_{i=1}^{\numagents}
            }
        }
        .
    \end{align*}  
    Define
    \(
        \initdatail
        =
        \rbr{
            \featmapl\rbr{
                \initdataiijj{i}{j}
            }
        }_{j=1}^{\nni}
    \).
    We have
    \(
        F_{\initdatail}\rbr{\testsampleil}
    \)
    is
    \(
        \rbr{
            \initdataiijj{i}{1},
            \ldots,
            \initdataiijj{i}{\nni},
            \testsampleil
        }
    \)-measurable.
    Therefore, Lemma~\ref{lem:cond-exp-mse} tells us that
    \begin{align*}
        \EE\sbr{
            \teststatil\rbr{
                \cbr{\identity}_{i=1}^{\numagents}
            }
        }
        &=
        \EE\sbr{
            \rbr{
                \EE\sbr{
                    F_{\cleandatail}
                    \rbr{
                        \testsampleil
                    }
                    \big |
                    \initdataiijj{i}{1},
                    \ldots,
                    \initdataiijj{i}{\nni},
                    \testsampleil
                } 
                -
                F_{\cleandatail}
                \rbr{
                    \testsampleil
                }
            }^2
        }
        \\
        &\leq
        \EE\sbr{
            \rbr{
                F_{\initdatail}
                \rbr{
                    \testsampleil
                }
                -
                F_{\cleandatail}
                \rbr{
                    \testsampleil
                }
            }^2
        }
        \\
        &=
        \EE_{\distfromprior}
        \sbr{
            \EE_{\initdatail,\cleandatail,\testsampleil}
            \sbr{
                \rbr{
                    F_{\initdatail}
                    \rbr{
                        \testsampleil
                    }
                    -
                    F_{\cleandatail}
                    \rbr{
                        \testsampleil
                    }
                }^2
            }
        } .
    \end{align*}  
    Applying the first part of Lemma~\ref{lem:sv-2-sample-cvm-ub} to the inner expectation gives
    \begin{align*}
        \EE_{\distfromprior}
        \sbr{
            \EE_{\initdatail,\cleandatail,\testsampleil}
            \sbr{
                \rbr{
                    F_{\initdatail}
                    \rbr{
                        \testsampleil
                    }
                    -
                    F_{\cleandatail}
                    \rbr{
                        \testsampleil
                    }
                }^2
            }
        }
        \leq
        \frac{1}{4}\rbr{
            \frac{1}{\abr{\initdatail}}
            +
            \frac{1}{\abr{\cleandatail}}
        }
        =
        \frac{1}{4}\rbr{
            \frac{1}{\abr{\initdatai}}
            +
            \frac{1}{\abr{\cleandatai}}
        } 
    \end{align*}  
    so we conclude
    \begin{equation*}
        0\leq 
        \EE\sbr{
            \teststati\rbr{
                \cbr{\identity}_{i=1}^{\numagents}
            }
        }
        \leq 
        \frac{1}{4}
        \rbr{
            \frac{1}{\abr{\initdatai}}
            +
            \frac{1}{\abr{\cleandatai}}
        } .
    \end{equation*}  
    For the second part, when we assume 
    \(
        \distfrompriorl
    \)
    is a.s. continuous,
    we can apply the second part of Lemma~\ref{lem:sv-2-sample-cvm-ub}
    to get 
    \begin{align*}
        \EE_{\distfromprior}
        \sbr{
            \EE_{\initdatail,\cleandatail,\testsampleil}
            \sbr{
                \rbr{
                    F_{\initdatail}
                    \rbr{
                        \testsampleil
                    }
                    -
                    F_{\cleandatail}
                    \rbr{
                        \testsampleil
                    }
                }^2
            }
        }
        =
        \frac{1}{6}
        \rbr{
            \frac{1}{\abr{\initdatail}}
            +
            \frac{1}{\abr{\cleandatail}}
        }
        =
        \frac{1}{6}
        \rbr{
            \frac{1}{\abr{\initdatai}}
            +
            \frac{1}{\abr{\cleandatai}}
        }
    \end{align*}  
    so the upper bound improves to 
    \begin{equation*}
        \EE\sbr{
            \teststati\rbr{
                \cbr{\identity}_{i=1}^{\numagents}
            }
        }
        \leq 
        \frac{1}{6}
        \rbr{
            \frac{1}{\abr{\initdatai}}
            +
            \frac{1}{\abr{\cleandatai}}
        } .
    \end{equation*}  
    For the lower bound, note that 
    \(
        \sigma\rbr{
            \initdataiijj{i}{1},
            \ldots,
            \initdataiijj{i}{\nni},
            \testsampleil
        }
        \subseteq
        \sigma\rbr{
            \initdataiijj{i}{1},
            \ldots,
            \initdataiijj{i}{\nni},
            \testsampleil,
            \distfrompriorl
        }
    \) so Lemma~\ref{lem:cond-exp-mse} gives us that
    \begin{align*}
        \EE\sbr{
            \teststatil\rbr{
                \cbr{\identity}_{i=1}^{\numagents}
            }
        } 
        &=
        \EE\sbr{
            \rbr{
                \EE\sbr{
                    F_{\cleandatail}\rbr{\testsampleil}
                    |
                    \initdataiijj{i}{1},
                    \ldots,
                    \initdataiijj{i}{\nni},
                    \testsampleil
                }
                -
                F_{\cleandatail}\rbr{\testsampleil}
            }^2
        }
        \\
        &\geq
        \EE\sbr{
            \rbr{
                \EE\sbr{
                    F_{\cleandatail}\rbr{\testsampleil}
                    |
                    \initdataiijj{i}{1},
                    \ldots,
                    \initdataiijj{i}{\nni},
                    \testsampleil,
                    \distfrompriorl
                }
                -
                F_{\cleandatail}\rbr{\testsampleil}
            }^2
        } .
    \end{align*}  
    Now appealing to Lemmas~\ref{lem:cond-exp-with-true-dist} then~\ref{lem:sv-1-sample-cvm-eq} 
    gives
    \begin{align*}
        &\hspace{0.45cm}
        \EE\sbr{
            \rbr{
                \EE\sbr{
                    F_{\cleandatail}\rbr{\testsampleil}
                    |
                    \initdataiijj{i}{1},
                    \ldots,
                    \initdataiijj{i}{\nni},
                    \testsampleil,
                    \distfrompriorl
                }
                -
                F_{\cleandatail}\rbr{\testsampleil}
            }^2
        }
        \\
        &= 
        \EE\sbr{
            \rbr{
                F_{\distfrompriorl}\rbr{\testsampleil}
                -
                F_{\cleandatail}\rbr{\testsampleil}
            }^2
        }
        \\
        &= 
        \EE_{\distfrompriorl}
        \sbr{
            \EE_{\cleandatail, \testsampleil}
            \sbr{
                \rbr{
                    F_{\distfrompriorl}\rbr{\testsampleil}
                    -
                    F_{\cleandatail}\rbr{\testsampleil}
                }^2
            }
        }
        \\
        &= 
        \EE_{\distfrompriorl}
        \sbr{
            \frac{1}{6\abr{\cleandatail}}
        }
        \\
        &= 
        \frac{1}{6\abr{\cleandatai}} .
    \end{align*}  
    Therefore, we conclude that 
    \(
        \frac{1}{6\abr{\cleandatai}}
        \leq 
        \EE\sbr{
            \teststati\rbr{
                \cbr{\identity}_{i=1}^{\numagents}
            }
        }
    \) .
\end{proof}

\section{Proofs omitted from Section~\ref{sec:approx-ne-alg}}
\label{app:prior-free-results-proofs}

\PriorFreeGeneralCvmsProperties*
\begin{proof}
    For 
    \(
        \frac{1}{4}
        \rbr{
            \frac{1}{
                \abr{\initdatai}
                +
                \abr{\suppcleandatai}
            }
            +
            \frac{1}{\abr{\cleandatai}}
        }
    \)-approximate truthfulness we refer to Proposition~\ref{prop:prior-free-general-cvms-approx-truthfulness}.

    Let 
    $\datadistl=\datadist\circ\rbr{\featmapl}^{-1}$.
    For MIB we first look at the Bayesian setting and then the frequentist setting.

    For the Bayesian setting, from the assumption about $\bayesprior$ we know that 
    \(
        \exists \featind\in[\numfeat]
    \)
    where
    \(  
        \forall \nni\in\NN
    \)
    it is not the case that
    \begin{gather*}
            P\rbr{
                \cleandataiirrll{i}{1}{\featind}\leq \testsampleil
                |
                \initdataiijj{i}{1},
                \ldots,
                \initdataiijj{i}{\nni},
                \testsampleil
            }
            \eqas
            P\rbr{
                \cleandataiirrll{i}{1}{\featind}\leq \testsampleil
                |
                \initdataiijj{i}{1},
                \ldots,
                \initdataiijj{i}{\nni+1},
                \testsampleil
            } .
    \end{gather*}  
    Now notice that this implies that for at least one of the $\featind\in[\numfeat]$ features, 
    \(
        P\rbr{
            \distfrompriorl
            \in
            \cbr{
                \delta_x : x\in\dataspace
            }
        } < 1
    \),
    or else the conditional probabilities above would automatically be equal for each $\featind\in[\numfeat]$.

    We know from Proposition~\ref{prop:prior-agnostic-sensitivity-bayesian} that 
    \begin{align*}
        &\hspace{0.45cm}
        \EE\sbr{
            \teststati\rbr{
                n,
                \cbr{\identity}_{i=1}^{\numagents}
            }
        } 
        -
        \EE\sbr{
            \teststati\rbr{
                n+e_i,
                \cbr{\identity}_{i=1}^{\numagents}
            }
        } 
        \\
        &=
        \frac{1}{\numfeat}
        \sum_{\featind=1}^\numfeat
        \EE\sbr{
            F_{\datadistl}\rbr{
                \testsampleil
            }
            \rbr{
                1-
                F_{\datadistl}\rbr{
                    \testsampleil
                }
            }
        } 
        \rbr{
            \frac{1}{\nni+\abr{\suppcleandatai}}
            -
            \frac{1}{\nni+1+\abr{\suppcleandatai}}
        }
        .
    \end{align*}  
    But notice that 
    \[
        \exists \featind\in[\numfeat] 
        \quad s.t. \quad
        P\rbr{
            \distfrompriorl
            \in
            \cbr{
                \delta_x : x\in\dataspace
            }
        } < 1
    \]
    implies 
    $\frac{1}{\numfeat}
    \sum_{\featind=1}^\numfeat
    \EE\sbr{
        F_{\datadistl}\rbr{
            \testsampleil
        }
        \rbr{
            1-
            F_{\datadistl}\rbr{
                \testsampleil
            }
        }
    } > 0$.
    Therefore
    \begin{align*}
        \EE\sbr{
            \teststati\rbr{
                n,
                \cbr{\identity}_{i=1}^{\numagents}
            }
        } 
        -
        \EE\sbr{
            \teststati\rbr{
                n+e_i,
                \cbr{\identity}_{i=1}^{\numagents}
            }
        }  
        > 0
    \end{align*}  
    which proves MIB for the Bayesian setting.

    For the frequentist setting, 
    we have from Proposition~\ref{prop:prior-agnostic-sensitivity-frequentist} that 
    \begin{align*}
        &\hspace{0.45cm}
        \sup_{\datadist\in\freqclass}
        \EE\sbr{
            \teststati\rbr{
                n,
                \cbr{\identity}_{i=1}^{\numagents}
            }
        } 
        -
        \sup_{\datadist\in\freqclass}
        \EE\sbr{
            \teststati\rbr{
                n+e_i,
                \cbr{\identity}_{i=1}^{\numagents}
            }
        } 
        \\
        &=
        \rbr{
            \sup_{\datadist\in\freqclass}
            \frac{1}{\numfeat}
            \sum_{\featind=1}^\numfeat
            \EE\sbr{
                F_{\datadistl}\rbr{
                    \testsampleil
                }
                \rbr{
                    1-
                    F_{\datadistl}\rbr{
                        \testsampleil
                    }
                }
            } 
        }
        \rbr{
            \frac{1}{\nni+\abr{\suppcleandatai}}
            -
            \frac{1}{\nni+1+\abr{\suppcleandatai}}
        }
        .
    \end{align*}  
    If it is not the case that
    \begin{align*}
        \freqclass
        \subseteq
        \cbr{
            \datadist\in\classdists
            :
            \forall \featind\in\sbr{\numfeat},
            \datadist\circ\rbr{\featmapl}^{-1}
            \in
            \delta_x,
            x\in\RR
        }
    \end{align*}  
    then 
    \begin{align*}
        \sup_{\datadist\in\freqclass}
        \frac{1}{\numfeat}
        \sum_{\featind=1}^\numfeat
        \EE\sbr{
            F_{\datadistl}\rbr{
                \testsampleil
            }
            \rbr{
                1-
                F_{\datadistl}\rbr{
                    \testsampleil
                }
            }
        }  
        > 0
    \end{align*}  
    so we find 
    \begin{equation*}
        \sup_{\datadist\in\freqclass}
        \EE\sbr{
            \teststati\rbr{
                n,
                \cbr{\identity}_{i=1}^{\numagents}
            }
        } 
        -
        \sup_{\datadist\in\freqclass}
        \EE\sbr{
            \teststati\rbr{
                n+e_i,
                \cbr{\identity}_{i=1}^{\numagents}
            }
        } 
        > 0
    \end{equation*}  
    which proves MIB for the frequentist setting.

\end{proof}  

\begin{restatable}{proposition}{PriorFreeGeneralAlgExpTestBounds}
    \label{prop:prior-free-general-alg-exp-test-bounds}
    Let $\teststati\rbr{\cbr{\identity}_{i=1}^\numagents}$ denote the value of $\teststati$ when all agents follow 
    $\cbr{\identity}_{i=1}^\numagents\in\subfuncspace^\numagents$
    in Algorithm~\ref{alg:prior-free-general-cvms}.
    Then,
    \begin{equation*}
        0
        \leq 
        \EEV{\distfromprior\sim\bayesprior}
        \sbr{
            \EE_{\distfromprior}\sbr{
                \teststati\rbr{
                    \cbr{\identity}_{i=1}^{\numagents}
                }
            }
        },
        ~
        \sup_{
            \datadist\in\freqclass
        }
        \EE_{\datadist}\sbr{
            \teststati\rbr{
                \cbr{\identity}_{i=1}^{\numagents}
            }
        }
        \leq 
        \frac{1}{4}
        \rbr{
            \frac{1}{
                \abr{\initdatai}
                +
                \abr{\suppcleandatai}
            }
            +
            \frac{1}{\abr{\cleandatai}}
        } .
    \end{equation*}  
    Moreover, if ~$\forall \featind\in[\numfeat]$,
    $\datadistl=\datadist\circ\rbr{\featmapl}^{-1}$
    is a.s. continuous, then
    \begin{align*}
        \EEV{\distfromprior\sim\bayesprior}
        \sbr{
            \EEV{
                \cbr{\initdatai}_i\sim\distfromprior
            }\sbr{
                \teststati\rbr{
                    \cbr{\identity}_{i=1}^{\numagents}
                }
            }
        }
        =
        \sup_{
            \datadist\in\freqclass
        }
        \EE_{\datadist}\sbr{
            \teststati\rbr{
                \cbr{\identity}_{i=1}^{\numagents}
            }
        }
        =
        \frac{1}{6}
        \rbr{
            \frac{1}{
                \abr{\initdatai}
                +
                \abr{\suppcleandatai}
            }
            +
            \frac{1}{\abr{\cleandatai}}
        } .
    \end{align*}   
\end{restatable}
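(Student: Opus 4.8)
The plan is to follow the template of the proof of Proposition~\ref{prop:general-alg-exp-test-bounds}, adapting it to the prior-agnostic loss of Algorithm~\ref{alg:prior-free-general-cvms}. The crucial simplification is that the quantity computed in line~\ref{eq:prior-free-general-cmvs-cond-exp} is the \emph{raw} empirical CDF $F_{\subdatail\cup\suppcleandatail}$, not a Bayesian conditional expectation, so no appeal to Lemma~\ref{lem:cond-exp-mse} is needed, and we obtain an exact value (not merely a lower bound) under the continuity hypothesis.

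First I would unfold $\teststati\rbr{\cbr{\identity}_{i=1}^\numagents}=\frac{1}{\numfeat}\sum_{\featind=1}^\numfeat\teststatil$. Under truthful reporting, $\subdatai=\initdatai$, so the $\featind$-th summand is $\rbr{F_{\initdatail\cup\suppcleandatail}\rbr{\testsampleil}-F_{\cleandatail}\rbr{\testsampleil}}^2$, where $\initdatail,\suppcleandatail,\cleandatail,\testsampleil$ are the $\featmapl$-images of $\initdatai,\suppcleandatai,\cleandatai,\testsamplei$. Next I would record the independence structure: because all $j\neq i$ report truthfully, $\subdatami=\bigcup_{j\neq i}\initdataj$ is, conditionally on $\distfromprior$, an i.i.d.\ sample from $\distfromprior$, and the random split in line~\ref{eq:prior-free-general-cvms-data-split} into blocks of the deterministic sizes $1$, $\abr{\suppcleandatai}$, $\abr{\cleandatai}$ (with $\abr{\cleandatai}\ge 1$ guaranteed by the constraint $\testsizefunc(n)<n-1$) yields, conditionally on $\distfromprior$, mutually independent blocks, each i.i.d.\ from $\distfromprior$ and independent of $\initdatai$. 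Applying the deterministic maps $\featmapl$ preserves this, so conditionally on $\distfromprior$ the collections $\initdatail\cup\suppcleandatail$ (of size $a:=\abr{\initdatai}+\abr{\suppcleandatai}$) and $\cleandatail$ (of size $b:=\abr{\cleandatai}$) and the point $\testsampleil$ are mutually independent, each drawn i.i.d.\ from the law of $\featmapl(\initdataij)$ given $\distfromprior$.

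With this in hand, I would condition on $\distfromprior$ (Bayesian case) or fix $\datadist\in\freqclass$ (frequentist case) and invoke Lemma~\ref{lem:sv-2-sample-cvm-ub} --- the same lemma used in the proofs of Propositions~\ref{prop:single-var-alg-exp-test-bounds} and~\ref{prop:general-alg-exp-test-bounds}, now with sample sizes $a$ and $b$ --- to get $\EE\sbr{\teststatil\mid\distfromprior}\le\frac14\rbr{\frac1a+\frac1b}$, with equality to $\frac16\rbr{\frac1a+\frac1b}$ when the law of $\featmapl(\initdataij)$ given $\distfromprior$ is continuous. Since $a$ and $b$ do not depend on $\featind$, averaging over $\featind\in[\numfeat]$ gives the same conditional bound (resp.\ exact value) for $\EE\sbr{\teststati\mid\distfromprior}$. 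As $\frac14\rbr{\frac1a+\frac1b}$ and $\frac16\rbr{\frac1a+\frac1b}$ are constants free of $\distfromprior$ and $\datadist$, taking $\EE_{\distfromprior\sim\bayesprior}$ in the Bayesian case and $\sup_{\datadist\in\freqclass}$ in the frequentist case leaves them unchanged; non-negativity is immediate since $\teststati$ is an average of squares. Collecting these gives all four asserted inequalities and identities.

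The analytic heart is already packaged in Lemma~\ref{lem:sv-2-sample-cvm-ub}, so the only step that needs genuine care is the conditional-independence claim above: that a uniformly random partition of an i.i.d.\ sample into blocks of prescribed deterministic sizes yields independent blocks, still independent of the test point and of $\initdatai$ after the feature maps are applied. Everything else is bookkeeping --- in particular, the appearance of $\frac{1}{\abr{\initdatai}+\abr{\suppcleandatai}}$ rather than $\frac{1}{\abr{\initdatai}}$ (as in Proposition~\ref{prop:general-alg-exp-test-bounds}) is exactly the effect of the mechanism supplementing the agent's submission with $\suppcleandatai$ before forming the ECDF $F_{\subdatail\cup\suppcleandatail}$.
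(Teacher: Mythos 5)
Your proposal is correct and follows essentially the same route as the paper: unfold the loss as an average over features of squared ECDF differences, condition on the data distribution, apply Lemma~\ref{lem:sv-2-sample-cvm-ub} with sample sizes $\abr{\initdatai}+\abr{\suppcleandatai}$ and $\abr{\cleandatai}$, and observe that the resulting bound (resp.\ exact value under continuity) is distribution-free so it survives the outer expectation over $\bayesprior$ and the supremum over $\freqclass$. The only difference is that you spell out the conditional-independence of the random split and the role of the feature maps more explicitly than the paper does, which is a harmless (and arguably welcome) elaboration.
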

\begin{proof}
    By the definition of Algorithm~\ref{alg:prior-free-general-cvms} we have
    \begin{align*}
        \EEV{\distfromprior\sim\bayesprior}
        \sbr{
            \EEV{
                \cbr{\initdatai}_i\sim\distfromprior
            }\sbr{
                \teststati\rbr{
                    \cbr{\identity}_{i=1}^{\numagents}
                }
            }
        }
        =
        \EEV{\distfromprior\sim\bayesprior}
        \sbr{
            \frac{1}{\numfeat}
            \sum_{\featind=1}^\numfeat
            \EE
            \sbr{
                \rbr{
                    F_{
                        \concat{\subdatail}{\suppcleandatail}
                    }
                    \rbr{
                        \testsampleil
                    }
                    - 
                    F_{\cleandatail}
                    \rbr{
                        \testsampleil
                    }
                }^2
            }
        }
    \end{align*}  
    and 
    \begin{align*}
        \sup_{
            \datadist\in\freqclass
        }
        \EE_{\datadist}\sbr{
            \teststati\rbr{
                \cbr{\identity}_{i=1}^{\numagents}
            }
        } 
        =
        \sup_{
            \datadist\in\freqclass
        }
        \frac{1}{\numfeat}
        \sum_{\featind=1}^\numfeat
        \EE
        \sbr{
            \rbr{
                F_{
                    \concat{\subdatail}{\suppcleandatail}
                }
                \rbr{
                    \testsampleil
                }
                - 
                F_{\cleandatail}
                \rbr{
                    \testsampleil
                }
            }^2
        } .
    \end{align*}  
    The first part of Lemma~\ref{lem:sv-2-sample-cvm-ub} tells us that in both the frequentist and Bayesian setting,
    \begin{align*}
        \EE
        \sbr{
            \rbr{
                F_{
                    \concat{\subdatail}{\suppcleandatail}
                }
                \rbr{
                    \testsampleil
                }
                - 
                F_{\cleandatail}
                \rbr{
                    \testsampleil
                }
            }^2
        }
        \leq
        \frac{1}{4}
        \rbr{
            \frac{1}{
                \abr{\initdatai}
                +
                \abr{\suppcleandatai}
            }
            +
            \frac{1}{\abr{\cleandatai}}
        }
    \end{align*}  
    so we find
    \begin{equation*}
        0
        \leq 
        \EEV{\distfromprior\sim\bayesprior}
        \sbr{
            \EE_{\distfromprior}\sbr{
                \teststati\rbr{
                    \cbr{\identity}_{i=1}^{\numagents}
                }
            }
        },
        ~
        \sup_{
            \datadist\in\freqclass
        }
        \EE_{\datadist}\sbr{
            \teststati\rbr{
                \cbr{\identity}_{i=1}^{\numagents}
            }
        }
        \leq 
        \frac{1}{4}
        \rbr{
            \frac{1}{
                \abr{\initdatai}
                +
                \abr{\suppcleandatai}
            }
            +
            \frac{1}{\abr{\cleandatai}}
        } .
    \end{equation*}  
    When $\forall \featind\in[\numfeat]$,
    $\datadistl$
    is a.s. continuous, we apply the second part of Lemma~\ref{lem:sv-2-sample-cvm-ub} to get 
    \begin{align*}
        \EE
        \sbr{
            \rbr{
                F_{
                    \concat{\subdatail}{\suppcleandatail}
                }
                \rbr{
                    \testsampleil
                }
                - 
                F_{\cleandatail}
                \rbr{
                    \testsampleil
                }
            }^2
        }
        =
        \frac{1}{6}
        \rbr{
            \frac{1}{
                \abr{\initdatai}
                +
                \abr{\suppcleandatai}
            }
            +
            \frac{1}{\abr{\cleandatai}}
        }
    \end{align*}  
    in both the frequentist and Bayesian setting. Under this additional hypothesis, we get
    \begin{align*}
        \EEV{\distfromprior\sim\bayesprior}
        \sbr{
            \EEV{
                \cbr{\initdatai}_i\sim\distfromprior
            }\sbr{
                \teststati\rbr{
                    \cbr{\identity}_{i=1}^{\numagents}
                }
            }
        }
        =
        \sup_{
            \datadist\in\freqclass
        }
        \EE_{\datadist}\sbr{
            \teststati\rbr{
                \cbr{\identity}_{i=1}^{\numagents}
            }
        }
        =
        \frac{1}{6}
        \rbr{
            \frac{1}{
                \abr{\initdatai}
                +
                \abr{\suppcleandatai}
            }
            +
            \frac{1}{\abr{\cleandatai}}
        } .
    \end{align*}  
\end{proof}  

\begin{restatable}{proposition}{PriorFreeGeneralCvmsApproxTruthfulness}
    \label{prop:prior-free-general-cvms-approx-truthfulness}

    Let $\teststati\rbr{\cbr{\subfunci}_{i=1}^\numagents}$ denote the value of $\teststati$ in Algorithm~\ref{alg:prior-free-general-cvms}
    when agents use $\cbr{\subfunci}_{i=1}^\numagents\in \subfuncspace^\numagents$.
    Let $\bayesprior$ be a Bayesian prior, and $\freqclass\subseteq\classdists$ a class of $\dataspace$-valued distributions.
    Then,
    for any $\subfunci\in\subfuncspace$
    \begin{align*}
        \EEV{\distfromprior\sim\bayesprior}
        \sbr{
            \EEV{
                \cbr{\initdatai}_i\sim\distfromprior
            }\sbr{
                \teststati\rbr{
                    \cbr{\identity}_{i=1}^{\numagents}
                }
            }
        }
        &\leq 
        \EEV{\distfromprior\sim\bayesprior}
        \sbr{
            \EEV{
                \cbr{\initdatai}_i\sim\distfromprior
            }\sbr{
                \teststati\rbr{
                    \subfunci,
                    \cbr{\identity}_{j\neq i}
                }
            }
        } 
        +\varepsilon
        \quad
        \text{and}
        \\
        \sup_{\datadist\in\freqclass}
        \EEV{
            \cbr{\initdatai}_i\sim\distfromprior
        }\sbr{
            \teststati\rbr{
                \cbr{\identity}_{i=1}^{\numagents}
            }
        }
        &\leq 
        \sup_{\datadist\in\freqclass}
        \EEV{
            \cbr{\initdatai}_i\sim\distfromprior
        }\sbr{
            \teststati\rbr{
                \subfunci,
                \cbr{\identity}_{j\neq i}
            }
        }
        +\varepsilon
    \end{align*}
    where 
    \(
        \varepsilon
        =
        \frac{1}{4}
        \rbr{
            \frac{1}{
                \abr{\initdatai}
                +
                \abr{\suppcleandatai}
            }
            +
            \frac{1}{\abr{\cleandatai}}
        }
    \). 
    Moreover, if ~$\forall \featind\in[\numfeat]$,
    $\datadistl=\datadist\circ\rbr{\featmapl}^{-1}$
    is a.s. continuous in the Bayesian setting and 
    $\forall\datadist\in\freqclass$ in the frequentist setting,
    then
    the above inequalities hold with
    \(
        \varepsilon
        =
        \frac{1}{
            6\rbr{
                \abr{\initdatai}
                +
                \abr{\suppcleandatai}
            }
        }
    \). 
\end{restatable}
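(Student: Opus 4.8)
The plan is to prove both claimed values of $\varepsilon$ by pairing, in each case, an upper bound on the expected loss under truthful reporting with a matching lower bound on the expected loss under an arbitrary deviation $\subfunci$; subtracting the two yields $\varepsilon$. The structural fact underpinning everything is that, when all agents $j\neq i$ report truthfully, $\subdatami$ is (conditionally on $\datadist$) an i.i.d. sample from $\datadist$ that is independent of agent $i$'s original data $\initdatai$, so the random split in line~\ref{eq:prior-free-general-cvms-data-split} produces mutually independent i.i.d. blocks $\testsamplei$, $\suppcleandatai$, $\cleandatai$. In particular, $\cleandatai$ remains a fresh i.i.d. $\datadist$-sample that is independent of $\testsamplei$ \emph{even after conditioning on the possibly manipulated submission $\subdatai=\subfunci\rbr{\initdatai}$}, since $\subfunci\rbr{\initdatai}$ depends only on $\initdatai$.

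For the general bound $\varepsilon=\frac{1}{4}\rbr{\frac{1}{\abr{\initdatai}+\abr{\suppcleandatai}}+\frac{1}{\abr{\cleandatai}}}$: because $\teststati$ is an average of squares it is nonnegative, so $\EE\sbr{\teststati\rbr{\subfunci,\cbr{\identity}_{j\neq i}}}\geq 0$ in the Bayesian case and $\sup_{\datadist\in\freqclass}\EE\sbr{\teststati\rbr{\subfunci,\cbr{\identity}_{j\neq i}}}\geq 0$ in the frequentist case. By Proposition~\ref{prop:prior-free-general-alg-exp-test-bounds} the truthful loss is at most $\frac{1}{4}\rbr{\frac{1}{\abr{\initdatai}+\abr{\suppcleandatai}}+\frac{1}{\abr{\cleandatai}}}$ in both settings (equivalently, under truthfulness $\teststatil$ is a two-sample CvM-type term between an $(\abr{\initdatai}+\abr{\suppcleandatai})$-point sample and an $\abr{\cleandatai}$-point sample evaluated at an independent point, to which Lemma~\ref{lem:sv-2-sample-cvm-ub} applies). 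Adding the two bounds gives the first pair of inequalities.

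For the refined bound under continuous feature pushforwards: Proposition~\ref{prop:prior-free-general-alg-exp-test-bounds} gives the \emph{exact} truthful value $\frac{1}{6}\rbr{\frac{1}{\abr{\initdatai}+\abr{\suppcleandatai}}+\frac{1}{\abr{\cleandatai}}}$, which is moreover constant over $\freqclass$, so it suffices to lower bound the deviation loss by $\frac{1}{6\abr{\cleandatai}}$ for every such distribution and every $\subfunci$. Fix a feature $\featind$ and set $\mathcal{G}=\sigma\rbr{\datadist,\subdatai,\suppcleandatai,\testsamplei}$ (adjoin $\distfromprior$ in the Bayesian case). Then $F_{\subdatail\cup\suppcleandatail}\rbr{\testsampleil}$ is $\mathcal{G}$-measurable, while by the structural fact $\cleandatail$ is, conditionally on $\mathcal{G}$, an i.i.d. $\datadistl$-sample independent of $\testsampleil$, so $\EE\sbr{F_{\cleandatail}\rbr{\testsampleil}\mid\mathcal{G}}=F_{\datadistl}\rbr{\testsampleil}$. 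Lemma~\ref{lem:cond-exp-mse} (the conditional expectation minimizes mean squared error) then yields $\EE\sbr{\rbr{F_{\subdatail\cup\suppcleandatail}\rbr{\testsampleil}-F_{\cleandatail}\rbr{\testsampleil}}^2}\geq\EE\sbr{\rbr{F_{\datadistl}\rbr{\testsampleil}-F_{\cleandatail}\rbr{\testsampleil}}^2}$, and, since conditionally on $\datadist$ the variables $\testsampleil$ and $\cleandatail$ are i.i.d. from the continuous distribution $\datadistl$, the feature analog of Lemma~\ref{lem:sv-1-sample-cvm-eq} evaluates this last quantity as $\frac{1}{6\abr{\cleandatai}}$. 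Averaging over $\featind\in[\numfeat]$ and then taking the prior expectation (Bayesian) or the supremum over $\freqclass$ (frequentist) preserves the lower bound; subtracting it from the exact truthful value leaves $\varepsilon=\frac{1}{6(\abr{\initdatai}+\abr{\suppcleandatai})}$.

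The main obstacle is the conditional-independence bookkeeping of the previous paragraph: one must carefully argue that conditioning on $\datadist$ together with agent $i$'s arbitrary, data-dependent (and possibly randomized) submission still leaves $\cleandatai$ a fresh i.i.d. $\datadist$-sample independent of the evaluation point $\testsamplei$. This rests on the independence of $\initdatai$ and $\subdatami$ given $\datadist$ and on a uniformly random partition of an i.i.d. sample producing mutually independent i.i.d. blocks; once this is established, the rest is a routine application of the already-proven Lemmas~\ref{lem:cond-exp-mse},~\ref{lem:sv-2-sample-cvm-ub}, and~\ref{lem:sv-1-sample-cvm-eq} together with Proposition~\ref{prop:prior-free-general-alg-exp-test-bounds}. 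A minor additional point is that the frequentist supremum commutes with these per-$\datadist$ bounds, which is immediate because every bound above is uniform in $\datadist$.
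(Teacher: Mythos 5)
Your proposal is correct and follows essentially the same route as the paper: the $1/4$ bound comes from pairing the truthful upper bound of Proposition~\ref{prop:prior-free-general-alg-exp-test-bounds} with nonnegativity of the deviation loss, and the $1/6$ bound comes from lower-bounding the deviation loss by $\frac{1}{6\abr{\cleandatai}}$ via Lemma~\ref{lem:cond-exp-mse} together with the analogs of Lemmas~\ref{lem:cond-exp-with-true-dist} and~\ref{lem:sv-1-sample-cvm-eq}, then subtracting from the exact truthful value. The only cosmetic difference is that you condition on $\sigma\rbr{\datadist,\subdatai,\suppcleandatai,\testsamplei}$ where the paper conditions on the larger $\sigma\rbr{\initdatai,\suppcleandatai,\testsampleil,\distfrompriorl}$; both work, and your explicit treatment of the conditional-independence bookkeeping is a point the paper leaves implicit.
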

\begin{proof}
    The first part of the claim, when 
    \(
        \varepsilon
        =
        \frac{1}{4}
        \rbr{
            \frac{1}{
                \abr{\initdatai}+\abr{\suppcleandatai}
            }
            +
            \frac{1}{
                \abr{\cleandatai}
            }
        }
    \),
    follows immediately from Proposition~\ref{prop:prior-free-general-alg-exp-test-bounds} and recognizing that 
    \begin{align*}
        0\leq 
        \EEV{\distfromprior\sim\bayesprior}
        \sbr{
            \EEV{
                \cbr{\initdatai}_i\sim\distfromprior
            }\sbr{
                \teststati\rbr{
                    \subfunci,
                    \cbr{\identity}_{j\neq i}
                }
            }
        },
        ~~
        \sup_{\datadist\in\freqclass}
        \EEV{
            \cbr{\initdatai}_i\sim\distfromprior
        }\sbr{
            \teststati\rbr{
                \subfunci,
                \cbr{\identity}_{j\neq i}
            }
        } .
    \end{align*}  
    Now consider when $\forall \featind\in[\numfeat]$,
    \(
        \datadistl
    \)
    is a.s. continuous,
    where $\datadist$ has either been fixed in the frequentist setting
    or drawn in the Bayesian setting. 
    By the defintion of Algorithm~\ref{alg:prior-free-general-cvms}
    we have
    \begin{align*}
        \EEV{
            \cbr{\initdatai}_i\sim\distfromprior
        }\sbr{
            \teststati\rbr{
                \subfunci,
                \cbr{\identity}_{j\neq i}
            }
        } 
        &=
        \frac{1}{\numfeat}
        \sum_{\featind=1}^\numfeat
        \EEV{
            \cbr{\initdatai}_i\sim\distfromprior
        }\sbr{
            \teststatil\rbr{
                \subfunci,
                \cbr{\identity}_{j\neq i}
            }
        } 
        \\
        &=
        \frac{1}{\numfeat}
        \sum_{\featind=1}^\numfeat
        \EEV{
            \cbr{\initdatai}_i\sim\distfromprior
        }\sbr{
            \rbr{
                F_{
                    \concat{\subdatail}{\suppcleandatail}
                }
                \rbr{
                    \testsampleil
                }
                - 
                F_{\cleandatail}
                \rbr{
                    \testsampleil
                }
            }^2
        } 
        \\
        &=
        \frac{1}{\numfeat}
        \sum_{\featind=1}^\numfeat
        \EEV{
            \cbr{\initdatail}_i
            \sim\distfrompriorl
        }\sbr{
            \rbr{
                F_{
                    \concat{\subdatail}{\suppcleandatail}
                }
                \rbr{
                    \testsampleil
                }
                - 
                F_{\cleandatail}
                \rbr{
                    \testsampleil
                }
            }^2
        } 
        .
    \end{align*}  
    Thus in the Bayesian setting we have
    \begin{align*}
        &\hspace{0.45cm}
        \EEV{\distfromprior\sim\bayesprior}
        \sbr{
            \EEV{
                \cbr{\initdatai}_i\sim\distfromprior
            }\sbr{
                \teststati\rbr{
                    \subfunci,
                    \cbr{\identity}_{j\neq i}
                }
            }
        }
        \\
        &=
        \frac{1}{\numfeat}
        \sum_{\featind=1}^\numfeat
        \EEV{\distfromprior\sim\bayesprior}\sbr{
            \EEV{
                \cbr{\initdatail}_i
                \sim\distfrompriorl
            }\sbr{
                \rbr{
                    F_{
                        \concat{\subdatail}{\suppcleandatail}
                    }
                    \rbr{
                        \testsampleil
                    }
                    - 
                    F_{\cleandatail}
                    \rbr{
                        \testsampleil
                    }
                }^2
            }  
        } .
    \end{align*}  
    To get a lower bound we apply Lemma~\ref{lem:cond-exp-mse} followed 
    Lemmas~\ref{lem:cond-exp-with-true-dist} then~\ref{lem:sv-2-sample-cvm-ub}
    which give
    \begin{align*}
        &\hspace{0.45cm}
        \frac{1}{\numfeat}
        \sum_{\featind=1}^\numfeat
        \EEV{\distfromprior\sim\bayesprior}\sbr{
            \EEV{
                \cbr{\initdatail}_i
                \sim\distfrompriorl
            }\sbr{
                \rbr{
                    F_{
                        \concat{\subdatail}{\suppcleandatail}
                    }
                    \rbr{
                        \testsampleil
                    }
                    - 
                    F_{\cleandatail}
                    \rbr{
                        \testsampleil
                    }
                }^2
            }  
        }
        \\
        &\geq
        \frac{1}{\numfeat}
        \sum_{\featind=1}^\numfeat
        \EEV{\distfromprior\sim\bayesprior}\sbr{
            \EEV{
                \cbr{\initdatail}_i
                \sim\distfrompriorl
            }\sbr{
                \rbr{
                    \EE\sbr{
                        F_{
                            \concat{\subdatail}{\suppcleandatail}
                        }
                        \rbr{
                            \testsampleil
                        }
                        |
                        \initdatai, 
                        \suppcleandatai,
                        \testsampleil, 
                        \datadistl
                    }
                    - 
                    F_{\cleandatail}
                    \rbr{
                        \testsampleil
                    }
                }^2
            } 
        }
        \\
        &=
        \frac{1}{\numfeat}
        \sum_{\featind=1}^\numfeat
        \EEV{\distfromprior\sim\bayesprior}\sbr{
            \EEV{
                \cbr{\initdatail}_i
                \sim\distfrompriorl
            }\sbr{
                \rbr{
                    F_{
                        \datadistl
                    }
                    \rbr{
                        \testsampleil
                    }
                    - 
                    F_{\cleandatail}
                    \rbr{
                        \testsampleil
                    }
                }^2
            } 
        }
        \\
        &=
        \frac{1}{\numfeat}
        \sum_{\featind=1}^\numfeat
        \EEV{\distfromprior\sim\bayesprior}\sbr{
            \frac{1}{6\abr{\cleandatail}}
        }
        \\
        &=
        \frac{1}{6\abr{\cleandatai}} .
    \end{align*}  
    In the frequentist setting, independence and Lemma~\ref{lem:sv-1-sample-cvm-eq} give us
    \begin{align*}
        &\hspace{0.45cm}
        \EEV{
            \cbr{\initdatail}_i
            \sim\distfrompriorl
        }\sbr{
            \rbr{
                F_{
                    \concat{\subdatail}{\suppcleandatail}
                }
                \rbr{
                    \testsampleil
                }
                - 
                F_{\cleandatail}
                \rbr{
                    \testsampleil
                }
            }^2
        } 
        \\
        &=
        \EEV{
            \cbr{\initdatail}_i
            \sim\distfrompriorl
        }\sbr{
            \rbr{
                F_{
                    \concat{\subdatail}{\suppcleandatail}
                }
                \rbr{
                    \testsampleil
                }
                - 
                F_{\distfrompriorl}
                \rbr{
                    \testsampleil
                }
            }^2
        } 
        +
        \EEV{
            \cbr{\initdatail}_i
            \sim\distfrompriorl
        }\sbr{
            \rbr{
                F_{\cleandatail}
                \rbr{
                    \testsampleil
                }
                - 
                F_{\distfrompriorl}
                \rbr{
                    \testsampleil
                }
            }^2
        } 
        \\
        &=
        \EEV{
            \cbr{\initdatail}_i
            \sim\distfrompriorl
        }\sbr{
            \rbr{
                F_{
                    \concat{\subdatail}{\suppcleandatail}
                }
                \rbr{
                    \testsampleil
                }
                - 
                F_{\distfrompriorl}
                \rbr{
                    \testsampleil
                }
            }^2
        } 
        +
        \frac{1}{6\abr{\cleandatail}}
        \\
        &\geq
        \frac{1}{6\abr{\cleandatai}} .
    \end{align*}  
    Therefore,
    \begin{align*}
        \frac{1}{6\abr{\cleandatai}}
        \leq
        \sup_{\datadist\in\freqclass}
        \EEV{
            \cbr{\initdatai}_i\sim\distfromprior
        }\sbr{
            \teststati\rbr{
                \subfunci,
                \cbr{\identity}_{j\neq i}
            }
        } .
    \end{align*}  

    Together we have the following lower bound in both the frequentist and Bayesian setting
    \begin{align*}
        \frac{1}{6\abr{\cleandatai}}
        \leq  
        \EEV{\distfromprior\sim\bayesprior}
        \sbr{
            \EEV{
                \cbr{\initdatai}_i\sim\distfromprior
            }\sbr{
                \teststati\rbr{
                    \subfunci,
                    \cbr{\identity}_{j\neq i}
                }
            }
        },~~
        \sup_{\datadist\in\freqclass}
        \EEV{
            \cbr{\initdatai}_i\sim\distfromprior
        }\sbr{
            \teststati\rbr{
                \subfunci,
                \cbr{\identity}_{j\neq i}
            }
        } .
    \end{align*}  
    From part two of Lemma~\ref{lem:sv-2-sample-cvm-ub} we have that when agents submit truthfully
    \begin{align*}
        \EEV{
            \cbr{\initdatai}_i\sim\distfromprior
        }\sbr{
            \teststati\rbr{
                \cbr{\identity}_{i=1}^{\numagents}
            }
        }
        &=
        \frac{1}{\numfeat}
        \sum_{\featind=1}^\numfeat
        \EEV{
            \cbr{\initdatai}_i\sim\distfromprior
        }\sbr{
            \teststatil\rbr{
                \cbr{\identity}_{i=1}^{\numagents}
            }
        } 
        \\
        &=
        \frac{1}{\numfeat}
        \sum_{\featind=1}^\numfeat
        \EEV{
            \cbr{\initdatai}_i\sim\distfromprior
        }\sbr{
            \rbr{
                F_{
                    \concat{\subdatail}{\suppcleandatail}
                }
                \rbr{
                    \testsampleil
                }
                - 
                F_{\cleandatail}
                \rbr{
                    \testsampleil
                }
            }^2
        }
        \\
        &=
        \frac{1}{\numfeat}
        \sum_{\featind=1}^\numfeat
        \frac{1}{6}
        \rbr{
            \frac{1}{\abr{\subdatail}+\abr{\suppcleandatail}}
            +
            \frac{1}{\abr{\cleandatail}}
        }
        \\
        &=
        \frac{1}{6}
        \rbr{
            \frac{1}{\abr{\initdatai}+\abr{\suppcleandatai}}
            +
            \frac{1}{\abr{\cleandatai}}
        }
    \end{align*}
    which implies that
    \begin{align*}
        \EEV{\distfromprior\sim\bayesprior}
        \sbr{
            \EEV{
                \cbr{\initdatai}_i\sim\distfromprior
            }\sbr{
                \teststati\rbr{
                    \cbr{\identity}_{i=1}^{\numagents}
                }
            }
        }
        =
        \sup_{\datadist\in\freqclass}
        \EEV{
            \cbr{\initdatai}_i\sim\distfromprior
        }\sbr{
            \teststati\rbr{
                \cbr{\identity}_{i=1}^{\numagents}
            }
        }
        =
        \frac{1}{6}
        \rbr{
            \frac{1}{\abr{\initdatai}+\abr{\suppcleandatai}}
            +
            \frac{1}{\abr{\cleandatai}}
        }
        .
    \end{align*}  
    Combining this with the lower bounds, we conclude 
    \begin{align*}
        \EEV{\distfromprior\sim\bayesprior}
        \sbr{
            \EEV{
                \cbr{\initdatai}_i\sim\distfromprior
            }\sbr{
                \teststati\rbr{
                    \cbr{\identity}_{i=1}^{\numagents}
                }
            }
        }
        -
        \EEV{\distfromprior\sim\bayesprior}
        \sbr{
            \EEV{
                \cbr{\initdatai}_i\sim\distfromprior
            }\sbr{
                \teststati\rbr{
                    \subfunci,
                    \cbr{\identity}_{j\neq i}
                }
            }
        } 
        &\leq
        \frac{1}{6\rbr{\abr{\initdatai}+\abr{\suppcleandatai}}}
        \\
        \sup_{\datadist\in\freqclass}
        \EEV{
            \cbr{\initdatai}_i\sim\distfromprior
        }\sbr{
            \teststati\rbr{
                \cbr{\identity}_{i=1}^{\numagents}
            }
        }
        -
        \sup_{\datadist\in\freqclass}
        \EEV{
            \cbr{\initdatai}_i\sim\distfromprior
        }\sbr{
            \teststati\rbr{
                \subfunci,
                \cbr{\identity}_{j\neq i}
            }
        }
        &\leq
        \frac{1}{6\rbr{\abr{\initdatai}+\abr{\suppcleandatai}}}
    \end{align*}
    which completes the proof.
\end{proof}  

\begin{restatable}{proposition}{PriorAgnosticSensitivityBayesian}
    \label{prop:prior-agnostic-sensitivity-bayesian}
    For $n=\rbr{n_1,\ldots,n_\numagents}\in\NN^{\numagents}$ let $\teststati\rbr{n,\cbr{\subfunci}_{i=1}^\numagents}$ 
    denote the value of $\teststati$ in 
    Algorithm~\ref{alg:prior-free-general-cvms} 
    when agent $j\in[\numagents]$ has $n_j$ data points and
    agents use $\cbr{\subfunci}_{i=1}^\numagents\in\subfuncspace^\numagents$. 
    Then
    \begin{align*}
        &\hspace{0.45cm}
        \EE\sbr{
            \teststati\rbr{
                n,
                \cbr{\identity}_{i=1}^{\numagents}
            }
        } 
        -
        \EE\sbr{
            \teststati\rbr{
                n+e_i,
                \cbr{\identity}_{i=1}^{\numagents}
            }
        } 
        \\
        &=
        \frac{1}{\numfeat}
        \sum_{\featind=1}^\numfeat
        \EE\sbr{
            F_{\datadistl}\rbr{
                \testsampleil
            }
            \rbr{
                1-
                F_{\datadistl}\rbr{
                    \testsampleil
                }
            }
        } 
        \rbr{
            \frac{1}{\nni+\abr{\suppcleandatai}}
            -
            \frac{1}{\nni+1+\abr{\suppcleandatai}}
        }
    \end{align*}  
    where 
    $\datadistl=\datadist\circ\rbr{\featmapl}^{-1}$.
\end{restatable}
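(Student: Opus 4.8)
The plan is to reduce the statement to an elementary bias--variance computation, once the independence structure under the truthful profile is made explicit; here $\teststati$ is the loss in Algorithm~\ref{alg:prior-free-general-cvms} (the quantities $\datadistl$, $\suppcleandatai$ appearing in the statement are defined only for that mechanism). Under the profile $\cbr{\identity}_{i=1}^\numagents$ the submission equals the true data, so agent $i$'s feature-$\featind$ loss is $\teststatil=\rbr{F_{\initdatail\cup\suppcleandatail}(\testsampleil)-F_{\cleandatail}(\testsampleil)}^2$, where $\initdatail,\suppcleandatail,\cleandatail$ and $\testsampleil=\featmapl(\testsamplei)$ are the $\featmapl$-featurizations of $\initdatai,\suppcleandatai,\cleandatai,\testsamplei$. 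First I would record the independence facts. When the others report truthfully, $\subdatami=\bigcup_{j\neq i}\initdataj$ is a collection of $N:=\sum_{j\neq i}n_j$ points that, conditional on $\distfromprior\sim\bayesprior$, are i.i.d.\ from $\distfromprior$; the split in line~\ref{eq:prior-free-general-cvms-data-split} is performed independently of the data values; and $\initdatai$ is an independent collection of $\nni$ further i.i.d.\ points. By exchangeability of i.i.d.\ samples, conditional on $\distfromprior$ the objects $\initdatai$, $\testsamplei$, $\suppcleandatai$, $\cleandatai$ are mutually independent, each consisting of i.i.d.\ draws from $\distfromprior$. Crucially, $\abr{\suppcleandatai}=\testsizefunc(\abr{\subdatami})$ and $\abr{\cleandatai}=\abr{\subdatami}-1-\abr{\suppcleandatai}$ depend only on $N$, hence are unchanged when $\nni$ becomes $\nni+1$, as is the law of $\testsampleil$.

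Next I would fix a feature $\featind$, condition on $\distfromprior$ and on the featurized test point $t:=\testsampleil$, and set $q:=F_{\datadistl}(t)$. Every featurized point $a$ of $\initdatail\cup\suppcleandatail$ then satisfies $P(a\le t\mid\distfromprior,\testsamplei)=F_{\datadistl}(t)=q$ (which requires no continuity assumption, being valid even with atoms), and likewise for the points of $\cleandatail$; so $F_{\initdatail\cup\suppcleandatail}(t)$ and $F_{\cleandatail}(t)$ are independent averages of i.i.d.\ $\Bern(q)$ indicators with common mean $q$ and variances $q(1-q)/(\nni+\abr{\suppcleandatai})$ and $q(1-q)/\abr{\cleandatai}$. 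A bias--variance decomposition then gives
\[
    \EE\sbr{\teststatil \,|\, \distfromprior,\testsamplei} = q(1-q)\rbr{\frac{1}{\nni+\abr{\suppcleandatai}}+\frac{1}{\abr{\cleandatai}}}.
\]
Taking expectations over $\distfromprior$ and $\testsamplei$ and pulling out the deterministic factor yields
\[
    \EE\sbr{\teststatil\rbr{n,\cbr{\identity}_{i=1}^\numagents}} = \EE\sbr{F_{\datadistl}(\testsampleil)\rbr{1-F_{\datadistl}(\testsampleil)}}\rbr{\frac{1}{\nni+\abr{\suppcleandatai}}+\frac{1}{\abr{\cleandatai}}}.
\]

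Finally, by the invariances noted in the first paragraph, $\EE\sbr{\teststatil\rbr{n+e_i,\cbr{\identity}_{i=1}^\numagents}}$ equals the same expression with $\nni+1$ in place of $\nni$; subtracting cancels the $1/\abr{\cleandatai}$ terms and leaves $\EE\sbr{F_{\datadistl}(\testsampleil)\rbr{1-F_{\datadistl}(\testsampleil)}}\rbr{\frac{1}{\nni+\abr{\suppcleandatai}}-\frac{1}{\nni+1+\abr{\suppcleandatai}}}$. Averaging over $\featind\in[\numfeat]$ with weight $1/\numfeat$, using $\teststati=\frac{1}{\numfeat}\sum_{\featind=1}^{\numfeat}\teststatil$ and linearity of expectation, gives the stated identity. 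I expect the main obstacle to be the bookkeeping in the first paragraph: carefully justifying the conditional independence of the evaluation point, the augmentation set, the comparison set, and agent $i$'s own data (via exchangeability of the random split), and confirming that none of $\abr{\suppcleandatai}$, $\abr{\cleandatai}$, or the law of $\testsampleil$ moves under $\nni\mapsto\nni+1$. Everything downstream is a routine second-moment calculation.
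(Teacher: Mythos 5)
Your proposal is correct and follows essentially the same route as the paper's proof: condition on $\distfromprior$ and $\testsampleil$, observe that both empirical CDFs are independent averages of i.i.d.\ $\Bern(F_{\datadistl}(\testsampleil))$ indicators with common mean, apply the bias--variance decomposition to get the $q(1-q)\bigl(\frac{1}{\nni+\abr{\suppcleandatai}}+\frac{1}{\abr{\cleandatai}}\bigr)$ formula, and subtract so the $1/\abr{\cleandatai}$ terms cancel. Your explicit bookkeeping that $\abr{\suppcleandatai}$, $\abr{\cleandatai}$, and the law of $\testsampleil$ are unchanged under $\nni\mapsto\nni+1$ (since they depend only on $\subdatami$) is a point the paper leaves implicit, and your reading that the statement's quantities belong to Algorithm~\ref{alg:prior-free-general-cvms} matches the paper's own proof.
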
  
\begin{proof}
    By the definition of Algorithm~\ref{alg:prior-free-general-cvms} we have
    \begin{align*}
        \EE\sbr{
            \teststati\rbr{
                n,
                \cbr{\identity}_{i=1}^{\numagents}
            }
        } 
        =
        \frac{1}{\numfeat}
        \sum_{\featind=1}^{\numfeat}
        \EE\sbr{
            \teststatil\rbr{
                n,
                \cbr{\identity}_{i=1}^{\numagents}
            }
        } 
        =
        \frac{1}{\numfeat}
        \sum_{\featind=1}^{\numfeat}
        \EE\sbr{
            \rbr{
                F_{
                    \concat{\initdatail}{\suppcleandatail}
                }\rbr{\testsampleil}
                -
                F_{\cleandatail}\rbr{\testsampleil}
            }^2
        } 
        .
    \end{align*}  
    Let $F_{\datadistl}$ be the CDF for $\datadistl$.
    We start by rewriting each term in the sum above as 
    \begin{align*}
        \EE\sbr{
            \rbr{
                F_{
                    \concat{\initdatail}{\suppcleandatail}
                }\rbr{\testsampleil}
                -
                F_{\cleandatail}\rbr{\testsampleil}
            }^2
        } 
        =
        \EEV{\datadistl}\sbr{
            \EEV{
                \initdatail,\suppcleandatail,\cleandatail,\testsampleil
            }\sbr{
                \rbr{
                    F_{
                        \concat{\initdatail}{\suppcleandatail}
                    }\rbr{\testsampleil}
                    -
                    F_{\cleandatail}\rbr{\testsampleil}
                }^2
            }
        } .
    \end{align*}
    Following the same steps in Lemma~\ref{lem:sv-2-sample-cvm-ub} up to equation~\eqref{eq:exp-ecdfs-diff-sqr-given-t}
    gives us
    \begin{align*}
        &\hspace{0.45cm}
        \EEV{\initdatail,\suppcleandatail,\cleandatail,\testsampleil}\sbr{
            \rbr{
                F_{
                    \concat{\initdatail}{\suppcleandatail}
                }\rbr{\testsampleil}
                -
                F_{\cleandatail}\rbr{\testsampleil}
            }^2
        }
        \\
        &=
        \EEV{\testsampleil}\sbr{
                F_{\datadistl}(\testsampleil)
                \rbr{
                    1-
                    F_{\datadistl}(\testsampleil)
                }
        } 
        \rbr{
            \frac{1}{
                \nni +\abr{\suppcleandatai}
            }
            +
            \frac{1}{
                \abr{\cleandatai}
            }
        } .
    \end{align*}  
    Therefore,
    \begin{align*}
        \EE\sbr{
            \teststati\rbr{
                n,
                \cbr{\identity}_{i=1}^{\numagents}
            }
        } 
        =
        \frac{1}{\numfeat}
        \sum_{\featind=1}^{\numfeat}
        \EE\sbr{
            F_{\datadistl}(\testsampleil)
            \rbr{
                1-
                F_{\datadistl}(\testsampleil)
            }
        } 
        \rbr{
            \frac{1}{
                \nni +\abr{\suppcleandatai}
            }
            +
            \frac{1}{
                \abr{\cleandatai}
            }
        }
        .
    \end{align*}  
    The same argument gives an analogous result for 
    \(
        \EE\sbr{
            \teststati\rbr{
                n+e_i,
                \cbr{\identity}_{i=1}^{\numagents}
            }
        } 
    \) .
    Taking the difference we find
    \begin{align*}
        &\hspace{0.5cm}
        \EE\sbr{
            \teststati\rbr{
                n,
                \cbr{\identity}_{i=1}^{\numagents}
            }
        } 
        -
        \EE\sbr{
            \teststati\rbr{
                n+e_i,
                \cbr{\identity}_{i=1}^{\numagents}
            }
        } 
        \\
        &=
        \frac{1}{\numfeat}
        \sum_{\featind=1}^\numfeat
        \EE\sbr{
            F_{\datadistl}(\testsampleil)
            \rbr{
                1-
                F_{\datadistl}(\testsampleil)
            }
        } 
        \rbr{
            \frac{1}{
                \nni +\abr{\suppcleandatai}
            }
            -
            \frac{1}{
                \nni +\abr{\suppcleandatai}+1
            }
        } .
    \end{align*}  
\end{proof}  

\begin{restatable}{proposition}{PriorAgnosticSensitivityFrequentist}
    \label{prop:prior-agnostic-sensitivity-frequentist}
    For $n=\rbr{n_1,\ldots,n_\numagents}\in\NN^{\numagents}$ let $\teststati\rbr{n,\cbr{\subfunci}_{i=1}^\numagents}$ 
    denote the value of $\teststati$ in 
    Algorithm~\ref{alg:prior-free-general-cvms} 
    when agent $j\in[\numagents]$ has $n_j$ data points and
    agents use $\cbr{\subfunci}_{i=1}^\numagents\in\subfuncspace^\numagents$. 
    Then
    \begin{align*}
        &\hspace{0.45cm}
        \sup_{\datadist\in\freqclass}
        \EE\sbr{
            \teststati\rbr{
                n,
                \cbr{\identity}_{i=1}^{\numagents}
            }
        } 
        -
        \sup_{\datadist\in\freqclass}
        \EE\sbr{
            \teststati\rbr{
                n+e_i,
                \cbr{\identity}_{i=1}^{\numagents}
            }
        } 
        \\
        &=
        \rbr{
            \sup_{\datadist\in\freqclass}
            \frac{1}{\numfeat}
            \sum_{\featind=1}^\numfeat
            \EE\sbr{
                F_{\datadistl}\rbr{
                    \testsampleil
                }
                \rbr{
                    1-
                    F_{\datadistl}\rbr{
                        \testsampleil
                    }
                }
            } 
        }
        \rbr{
            \frac{1}{\nni+\abr{\suppcleandatai}}
            -
            \frac{1}{\nni+1+\abr{\suppcleandatai}}
        }
    \end{align*}  
    where 
    $\datadistl= \Pcal\circ\rbr{\featmapl}^{-1}$. 
\end{restatable}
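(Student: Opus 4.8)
The plan is to reduce everything to the per-distribution identity already obtained inside the proof of Proposition~\ref{prop:prior-agnostic-sensitivity-bayesian}, and then commute the supremum over $\freqclass$ with multiplication by a deterministic positive constant.

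First I would fix an arbitrary $\datadist \in \freqclass$. The computation in the proof of Proposition~\ref{prop:prior-agnostic-sensitivity-bayesian} conditions on $\datadist$ and only afterwards averages over the prior; reading off the conditional statement, it gives, for this fixed $\datadist$,
\[
    \EE\sbr{\teststati\rbr{n, \cbr{\identity}_{i=1}^\numagents}}
    =
    \rbr{\frac{1}{\numfeat}\sum_{\featind=1}^\numfeat \EE\sbr{F_{\datadistl}\rbr{\testsampleil}\rbr{1 - F_{\datadistl}\rbr{\testsampleil}}}}
    \rbr{\frac{1}{\nni + \abr{\suppcleandatai}} + \frac{1}{\abr{\cleandatai}}},
\]
and the same identity with $\nni$ replaced by $\nni + 1$ for $\teststati(n + e_i, \cbr{\identity}_{i=1}^\numagents)$; note that $\abr{\cleandatai}$ is unchanged when $\nni$ increments, since $\cleandatai$ is carved out of the other agents' pooled submission. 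Abbreviate $C(\datadist) := \frac{1}{\numfeat}\sum_{\featind=1}^\numfeat \EE[F_{\datadistl}(\testsampleil)(1 - F_{\datadistl}(\testsampleil))]$, so the two expressions are $C(\datadist)\,a$ and $C(\datadist)\,b$ with $a := \frac{1}{\nni + \abr{\suppcleandatai}} + \frac{1}{\abr{\cleandatai}}$ and $b := \frac{1}{\nni + 1 + \abr{\suppcleandatai}} + \frac{1}{\abr{\cleandatai}}$.

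Next I would note that $a$ and $b$ are strictly positive real numbers that do not depend on $\datadist$ (they are fixed once $n$ and the split map $\testsizefunc$ are fixed). Hence $\sup_{\datadist \in \freqclass} C(\datadist)\,a = a \sup_{\datadist \in \freqclass} C(\datadist)$ and likewise for $b$, so subtracting the two suprema gives
\[
    \sup_{\datadist\in\freqclass}\EE\sbr{\teststati\rbr{n, \cbr{\identity}_{i=1}^\numagents}}
    - \sup_{\datadist\in\freqclass}\EE\sbr{\teststati\rbr{n + e_i, \cbr{\identity}_{i=1}^\numagents}}
    = \rbr{\sup_{\datadist\in\freqclass} C(\datadist)}(a - b),
\]
and $a - b = \frac{1}{\nni + \abr{\suppcleandatai}} - \frac{1}{\nni + 1 + \abr{\suppcleandatai}}$ since the $1/\abr{\cleandatai}$ terms cancel. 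Unpacking $C(\datadist)$ then yields exactly the claimed formula.

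There is no substantive obstacle here: the only points requiring a moment's care are (i) extracting the \emph{conditional} (fixed-$\datadist$) version of the identity from the proof of Proposition~\ref{prop:prior-agnostic-sensitivity-bayesian}, which is immediate because that proof already works conditionally on $\datadist$ before any prior average is taken, and (ii) justifying the commutation $\sup(cf) = c\sup f$ for the $\datadist$-independent constant $c > 0$, together with checking that $\abr{\cleandatai}$ does not move under $n \mapsto n + e_i$. Everything else is routine bookkeeping that the Bayesian proof already carried out.
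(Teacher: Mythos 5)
Your proposal is correct and follows essentially the same route as the paper: the paper likewise derives, for each fixed $\datadist$, the closed form $C(\datadist)\bigl(\tfrac{1}{\nni+\abr{\suppcleandatai}}+\tfrac{1}{\abr{\cleandatai}}\bigr)$ via the same variance decomposition used in the Bayesian sensitivity proposition, takes the supremum of each expression, and subtracts, implicitly using exactly the commutation $\sup_{\datadist}(c\,C(\datadist))=c\sup_{\datadist}C(\datadist)$ for the $\datadist$-independent positive constants. Your observations that $\abr{\cleandatai}$ is unaffected by $n\mapsto n+e_i$ and that the $1/\abr{\cleandatai}$ terms cancel are the same bookkeeping the paper performs.
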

\begin{proof}
    By the definition of Algorithm~\ref{alg:prior-free-general-cvms} we have
    \begin{align*}
        \sup_{\datadist\in\freqclass}
        \EE\sbr{
            \teststati\rbr{
                n,
                \cbr{\identity}_{i=1}^{\numagents}
            }
        } 
        &=
        \sup_{\datadist\in\freqclass}
        \frac{1}{\numfeat}
        \sum_{\featind=1}^{\numfeat}
        \EE\sbr{
            \teststatil\rbr{
                n,
                \cbr{\identity}_{i=1}^{\numagents}
            }
        } 
        \\
        &=
        \sup_{\datadist\in\freqclass}
        \frac{1}{\numfeat}
        \sum_{\featind=1}^{\numfeat}
        \EE\sbr{
            \rbr{
                F_{
                    \concat{\initdatail}{\suppcleandatail}
                }\rbr{\testsampleil}
                -
                F_{\cleandatail}\rbr{\testsampleil}
            }^2
        } 
        .
    \end{align*}  
    Let $F_{\datadistl}$ be the CDF for $\datadistl$.
    Following the same steps in Lemma~\ref{lem:sv-2-sample-cvm-ub} up to equation~\eqref{eq:exp-ecdfs-diff-sqr-given-t}
    gives us 
    \begin{align*}
        \EE
        \sbr{
            \rbr{
                F_{
                    \concat{\initdatail}{\suppcleandatail}
                }\rbr{\testsampleil}
                -
                F_{\cleandatail}\rbr{\testsampleil}
            }^2
        }
        =
        \EEV{\testsampleil}\sbr{
            F_{\datadistl}(\testsampleil)
            \rbr{
                1-
                F_{\datadistl}(\testsampleil)
            }
        } 
        \rbr{
            \frac{1}{
                \nni +\abr{\suppcleandatai}
            }
            +
            \frac{1}{
                \abr{\cleandatai}
            }
        } .
    \end{align*}  
    Therefore,
    \begin{align*}
        \sup_{\datadist\in\freqclass}
        \EE\sbr{
            \teststati\rbr{
                n,
                \cbr{\identity}_{i=1}^{\numagents}
            }
        } 
        =
        \sup_{\datadist\in\freqclass}
        \frac{1}{\numfeat}
        \sum_{\featind=1}^{\numfeat}
        \EE\sbr{
            F_{\datadistl}(\testsampleil)
            \rbr{
                1-
                F_{\datadistl}(\testsampleil)
            }
        } 
        \rbr{
            \frac{1}{
                \nni +\abr{\suppcleandatai}
            }
            +
            \frac{1}{
                \abr{\cleandatai}
            }
        }
        .
    \end{align*}  
    The same argument gives an analogous result for 
    \(
        \EE\sbr{
            \teststati\rbr{
                n+e_i,
                \cbr{\identity}_{i=1}^{\numagents}
            }
        } 
    \) .
    Applying this to each feature, we find
    \begin{align*}
        &\hspace{0.5cm}
        \EE\sbr{
            \teststati\rbr{
                n,
                \cbr{\identity}_{i=1}^{\numagents}
            }
        } 
        -
        \EE\sbr{
            \teststati\rbr{
                n+e_i,
                \cbr{\identity}_{i=1}^{\numagents}
            }
        } 
        \\
        &=
        \sup_{\datadist\in\freqclass}
        \frac{1}{\numfeat}
        \sum_{\featind=1}^\numfeat
        \EE\sbr{
            F_{\datadistl}(\testsampleil)
            \rbr{
                1-
                F_{\datadistl}(\testsampleil)
            }
        } 
        \rbr{
            \frac{1}{
                \nni +\abr{\suppcleandatai}
            }
            +
            \frac{1}{
                \abr{\cleandatai}
            }
        } 
        \\
        &\hspace{0.4cm}-
        \sup_{\datadist\in\freqclass}
        \frac{1}{\numfeat}
        \sum_{\featind=1}^\numfeat
        \EE\sbr{
            F_{\datadistl}(\testsampleil)
            \rbr{
                1-
                F_{\datadistl}(\testsampleil)
            }
        } 
        \rbr{
            \frac{1}{
                \nni +\abr{\suppcleandatai}+1
            }
            +
            \frac{1}{
                \abr{\cleandatai}
            }
        } 
        \\
        &=
        \rbr{
            \sup_{\datadist\in\freqclass}
            \frac{1}{\numfeat}
            \sum_{\featind=1}^\numfeat
            \EE\sbr{
                F_{\datadistl}\rbr{
                    \testsampleil
                }
                \rbr{
                    1-
                    F_{\datadistl}\rbr{
                        \testsampleil
                    }
                }
            } 
        }
        \rbr{
            \frac{1}{\nni+\abr{\suppcleandatai}}
            -
            \frac{1}{\nni+1+\abr{\suppcleandatai}}
        }
        .
    \end{align*}  
\end{proof}  
\section{Examples of the conditional expectation in Algorithm~\ref{alg:single-var-cvms}}
\label{app:conj-prior-calculations}

\subsection{The normal-normal model}

\begin{restatable}{proposition}{NormalNormalCondExp}
    Suppose that $\cbr{\subfunci}_{j\neq i}=\cbr{\identity}_{j\neq i}$ in Algorithm~\ref{alg:single-var-cvms}.
    Let $\mu\sim\Ncal\rbr{a,b^2}$ and 
    \(
        \initdatai=\cbr{
            \initdataiijj{i}{1},
            \ldots,
            \initdataiijj{i}{\nni}
        }
    \),
    \(
        \cleandatai=\cbr{
            \cleandataiijj{i}{1},
            \ldots,
            \cleandataiijj{i}{\abr{\cleandatai}}
        }
    \),
    where
    $\initdataij,T,\cleandataij| p\drawniid \Ncal\rbr{\mu,\sigma^2}$, then
    \begin{equation*}
        \EE\sbr{
            F_{\cleandatai}(\testsample)
            \given 
            \initdataiijj{i}{1},\ldots,\initdataiijj{i}{\nni},
            \testsample
        }
        =
        \Phi\rbr{
            \frac{
                T-\Tilde{\mu}
            }{
                \sqrt{\sigma^2+\Tilde{\sigma}^2}
            }
        }
    \end{equation*}  
    where
    \begin{equation*}
        \Tilde{\mu}
        =
        \frac{
            \frac{a}{b^2}
            +
            \frac{
                \text{sum}\rbr{
                    \initdataiijj{i}{1},\ldots,\initdataiijj{i}{\nni},
                    \testsample
                }
            }{\sigma^2}
        }{
            \frac{1}{b^2}
            +
            \frac{\nni+1}{\sigma^2}
        }
        \quad
        \text{and}
        \quad
        \Tilde{\sigma}^2
        =
        \rbr{
            \frac{1}{b^2}
            +
            \frac{(\nni+1)}{\sigma^2}
        }^{-1} .
    \end{equation*}  
\end{restatable}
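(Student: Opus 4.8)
The plan is to collapse the expectation of the empirical CDF into a single posterior–predictive tail probability, and then invoke the standard normal–normal conjugacy computation. First, by linearity of conditional expectation,
\[
    \EE\sbr{
        F_{\cleandatai}(T)
        \mid
        \initdataiijj{i}{1},\ldots,\initdataiijj{i}{\nni}, T
    }
    =
    \frac{1}{\abr{\cleandatai}}
    \sum_{j=1}^{\abr{\cleandatai}}
    P\rbr{
        \cleandataiijj{i}{j}\leq T
        \mid
        \initdataiijj{i}{1},\ldots,\initdataiijj{i}{\nni}, T
    }.
\]
Because the other agents report truthfully, $\initdatai$, $T$, and $\cleandatai$ are, conditionally on $\mu$, i.i.d.\ $\Ncal(\mu,\sigma^2)$, and in particular the $\cleandataiijj{i}{j}$ are exchangeable jointly with $(\initdatai, T)$. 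Hence every summand equals the common value $P(Z\leq T \mid \initdataiijj{i}{1},\ldots,\initdataiijj{i}{\nni}, T)$ for a single fresh draw $Z$ with $Z\mid\mu\sim\Ncal(\mu,\sigma^2)$ independent of $(\initdatai, T)$ given $\mu$, and the average collapses to this one probability.

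Next I would identify the relevant posterior on $\mu$. The key observation is that $T$ plays a double role: it is the evaluation threshold, but conditionally on $\mu$ it is also an additional i.i.d.\ $\Ncal(\mu,\sigma^2)$ observation. Hence conditioning on $(\initdataiijj{i}{1},\ldots,\initdataiijj{i}{\nni}, T)$ is exactly a normal–normal conjugate update with $\nni+1$ Gaussian observations and prior $\mu\sim\Ncal(a,b^2)$; completing the square in the product of the prior density with the $\nni+1$ likelihood factors yields the posterior law $\mu\mid(\initdataiijj{i}{1},\ldots,\initdataiijj{i}{\nni}, T)\sim\Ncal(\tilde{\mu},\tilde{\sigma}^2)$ with $\tilde{\mu}$ and $\tilde{\sigma}^2$ exactly as stated, the sum $\text{sum}(\cdot)$ ranging over all $\nni+1$ points (including $T$).

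Finally, I would compute the posterior predictive law of $Z$. Writing $Z=\mu+\varepsilon$ with $\varepsilon\sim\Ncal(0,\sigma^2)$ independent of $\mu$ and of the conditioning data, the conditional law of $Z$ given $(\initdataiijj{i}{1},\ldots,\initdataiijj{i}{\nni}, T)$ is $\Ncal(\tilde{\mu}, \tilde{\sigma}^2+\sigma^2)$. Since $T$ is measurable with respect to the conditioning $\sigma$-algebra it can be treated as a constant, so $P(Z\leq T\mid\cdots)=\Phi\big((T-\tilde{\mu})/\sqrt{\sigma^2+\tilde{\sigma}^2}\big)$, which is the claimed identity after substituting back into the displayed average.

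The only genuine subtlety — and the step I would be most careful about — is the bookkeeping around $T$: it must be counted exactly once in the posterior update (whence the $\nni+1$ and the appearance of $T$ inside $\text{sum}(\cdot)$) while simultaneously being frozen as a constant for the tail-probability step, and one should verify there is no residual dependence between the fresh draw $Z$ and $T$ beyond what the posterior of $\mu$ already absorbs (this is where the conditional independence given $\mu$, inherited from truthful reporting by the other agents, is used). Everything else is the textbook conjugate-Gaussian calculation, which I would either cite or carry out in a line.
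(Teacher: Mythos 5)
Your proposal is correct and follows essentially the same route as the paper: reduce to a single predictive probability by exchangeability, apply the normal--normal conjugate update with $\nni+1$ observations (counting $T$ once), and evaluate the resulting tail probability. The only cosmetic difference is in the last step, where you obtain $\Ncal(\tilde{\mu},\sigma^2+\tilde{\sigma}^2)$ via the decomposition $Z=\mu+\varepsilon$, while the paper computes the equivalent Gaussian integral $\int\Phi\rbr{\frac{T-\mu}{\sigma}}\phi_{\tilde{\mu},\tilde{\sigma}^2}(\mu)\,d\mu$ explicitly.
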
  
\begin{proof}
    Start by noticing that the conditional expectation can be rewritten as 
    \begin{align*}
        \EE\sbr{
            F_{\cleandatai}(\testsample)
            \given 
            \initdataiijj{i}{1},\ldots,\initdataiijj{i}{\nni},
            \testsample
        }
        &=
        \EE\sbr{
            \frac{1}{\abr{\cleandatai}}
            \sum_{z\in\cleandatai}
            1_{\cbr{
                z \leq \testsample
            }}
            \given 
            \initdataiijj{i}{1},\ldots,\initdataiijj{i}{\nni},
            \testsample
        }
        \\
        &=
        \EE\sbr{
            1_{\cbr{
                \cleandataiijj{i}{1}\leq \testsample
            }}
            \given 
            \initdataiijj{i}{1},\ldots,\initdataiijj{i}{\nni},
            \testsample
        }
        \\
        &=
        \EE\sbr{
            \EE\sbr{
                1_{\cbr{
                    \cleandataiijj{i}{1}\leq T
                }}
                \given 
                \mu,
                \initdataiijj{i}{1},\ldots,\initdataiijj{i}{\nni},
                \testsample
            }
            \given 
            \initdataiijj{i}{1},\ldots,\initdataiijj{i}{\nni},
            \testsample
        }
        \numberthis
        \label{eq:normal-normal-tower-prop} .
    \end{align*}  
    where the last line follows from the tower property.
    By the definition of our model we know that
    \begin{equation*}
        \EE\sbr{
            1_{\cbr{
                \cleandataiijj{i}{1}\leq T
            }}
            \given 
            \mu,
            \initdataiijj{i}{1},\ldots,\initdataiijj{i}{\nni},
            \testsample
        }
        =
        \Phi\rbr{
            \frac{T-\mu}{\sigma}
        } .
    \end{equation*}
    Recall from standard normal-normal conjugacy arguments that 
    \begin{gather*}
        \mu|
        \initdataiijj{i}{1},\ldots\initdataiijj{i}{\nni},
        \testsample
        \sim \Ncal\rbr{\Tilde{\mu},\Tilde{\sigma}^2}
        \quad
        \text{where}
        \\
        \quad
        \Tilde{\mu}
        =
        \frac{
            \frac{a}{b^2}
            +
            \frac{
                \text{sum}\rbr{
                    \initdataiijj{i}{1},\ldots,\initdataiijj{i}{\nni},
                    \testsample
                }
            }{\sigma^2}
        }{
            \frac{1}{b^2}
            +
            \frac{\nni+1}{\sigma^2}
        }
        \quad
        \text{and}
        \quad
        \Tilde{\sigma}^2
        =
        \rbr{
            \frac{1}{b^2}
            +
            \frac{(\nni+1)}{\sigma^2}
        }^{-1} .
    \end{gather*}  
    Therefore, we can write~\eqref{eq:normal-normal-tower-prop} as  
    \begin{align*}
        \EE\sbr{
            \Phi\rbr{
                \frac{T-\mu}{\sigma}
            }
            \given 
            \initdataiijj{i}{1},\ldots,\initdataiijj{i}{\nni},
            \testsample
        } 
        =
        \int_{-\infty}^{\infty}
        \Phi\rbr{
            \frac{T-\mu}{\sigma}
        }
        \phi_{\Tilde{\mu},\Tilde{\sigma}^2}(\mu)d\mu .
    \end{align*}  
    where $\phi_{\Tilde{\mu},\Tilde{\sigma}^2}$ is the PDF of a normal distribution with mean 
    $\Tilde{\mu}$ and variance $\Tilde{\sigma}^2$.
    Recall the following Gaussian integral formula
    \begin{equation*}
        \int_{-\infty}^\infty
        \Phi(\alpha-\beta x)\phi(x) dx
        =
        \Phi\rbr{
            \frac{\alpha}{
                \sqrt{1+\beta^2}
            }
        } .
    \end{equation*}  
    By the change of variables $x=\frac{\mu-\Tilde{\mu}}{\Tilde{\sigma}}$ we get
    \(
        \Phi\rbr{
            \frac{T-\mu}{\sigma}
        }
        =
        \Phi\rbr{
            \frac{T-\Tilde{\mu}}{\sigma}
            -
            \frac{\Tilde{\sigma}}{\sigma}
            x
        }
    \),
    so applying the formula gives us
    \begin{align*}
        \int_{-\infty}^{\infty}
        \Phi\rbr{
            \frac{T-\mu}{\sigma}
        }
        \phi_{\Tilde{\mu},\Tilde{\sigma}^2}(\mu)d\mu
        =
        \Phi\rbr{
            \frac{
                T-\Tilde{\mu}
            }{
                \sqrt{\sigma^2+\Tilde{\sigma}^2}
            }
        } .
    \end{align*}  
    Therefore,
    \begin{equation*}
        \EE\sbr{
            F_{\cleandatai}(\testsample)
            \given 
            \initdataiijj{i}{1},\ldots,\initdataiijj{i}{\nni},
            \testsample
        }
        =
        \Phi\rbr{
            \frac{
                T-\Tilde{\mu}
            }{
                \sqrt{\sigma^2+\Tilde{\sigma}^2}
            }
        } .
    \end{equation*} 
\end{proof}  

\subsection{The beta-bernoulli model}

\begin{restatable}{proposition}{BetaBernCondExp}
    Suppose that $\cbr{\subfunci}_{j\neq i}=\cbr{\identity}_{j\neq i}$ in Algorithm~\ref{alg:single-var-cvms}.
    Let $p\sim\Beta\rbr{\alpha,\beta}$ and 
    \(
        \initdatai=\cbr{
            \initdataiijj{i}{1},
            \ldots,
            \initdataiijj{i}{\nni}
        }
    \),
    \(
        \cleandatai=\cbr{
            \cleandataiijj{i}{1},
            \ldots,
            \cleandataiijj{i}{\abr{\cleandatai}}
        }
    \),
    where
    $\initdataij,T,\cleandataij| p\drawniid \Bern(p)$, then
    \begin{equation*}
        \EE\sbr{
            F_{\cleandatai}(\testsample)
            \given 
            \initdataiijj{i}{1},\ldots,\initdataiijj{i}{\nni},
            \testsample
        }
        =
        T+(1-T)
        \frac{
            \beta+(\nni+1)
            -
            \sumof{
                \initdataiijj{i}{1},\ldots,\initdataiijj{i}{\nni}
            }
        }{
            \alpha+\beta+(\nni+1)
        } .
    \end{equation*}  
\end{restatable}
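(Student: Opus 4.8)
The plan is to follow the same template as the normal--normal calculation, replacing the Gaussian conjugacy step with Beta--Bernoulli conjugacy. First I would reduce the empirical CDF to a single Bernoulli indicator: since, conditionally on $p$, the entries of $\cleandatai$ are i.i.d. $\Bern(p)$ and independent of $\initdatai$ and $\testsample$, linearity of expectation together with exchangeability gives
\[
    \EE\sbr{
        F_{\cleandatai}(\testsample)
        \given
        \initdataiijj{i}{1},\ldots,\initdataiijj{i}{\nni},\testsample
    }
    =
    \EE\sbr{
        1_{\cbr{\cleandataiijj{i}{1}\leq \testsample}}
        \given
        \initdataiijj{i}{1},\ldots,\initdataiijj{i}{\nni},\testsample
    },
\]
exactly as in the proof of the normal--normal proposition.

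Next I would apply the tower property, conditioning on $p$. Because $\testsample\in\cbr{0,1}$ and $\cleandataiijj{i}{1}\mid p\sim\Bern(p)$, the inner conditional expectation is $\testsample+(1-\testsample)(1-p)$: when $\testsample=1$ the indicator $1_{\cbr{z\leq 1}}$ is identically $1$, and when $\testsample=0$ we have $1_{\cbr{z\leq 0}}=1_{\cbr{z=0}}$, which has conditional mean $1-p$. Pushing the remaining conditional expectation through then leaves $\testsample+(1-\testsample)\rbr{1-\EE\sbr{p\given \initdataiijj{i}{1},\ldots,\initdataiijj{i}{\nni},\testsample}}$. Now I would invoke standard Beta--Bernoulli conjugacy: treating $\initdataiijj{i}{1},\ldots,\initdataiijj{i}{\nni},\testsample$ as $\nni+1$ Bernoulli trials with $\sumof{\initdataiijj{i}{1},\ldots,\initdataiijj{i}{\nni}}+\testsample$ successes, the posterior for $p$ is $\Beta\rbr{\alpha+\sumof{\initdataiijj{i}{1},\ldots,\initdataiijj{i}{\nni}}+\testsample,\ \beta+(\nni+1)-\sumof{\initdataiijj{i}{1},\ldots,\initdataiijj{i}{\nni}}-\testsample}$, so its mean is $\frac{\alpha+\sumof{\initdataiijj{i}{1},\ldots,\initdataiijj{i}{\nni}}+\testsample}{\alpha+\beta+(\nni+1)}$.

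Substituting this mean and simplifying gives the claimed expression. The one step that requires a little care --- and the only place the argument is not completely mechanical --- is that the posterior mean still carries a $\testsample$ in its numerator, whereas the target formula does not. This is resolved by the observation that since $\testsample\in\cbr{0,1}$ the factor $(1-\testsample)$ annihilates the $\testsample=1$ case, so $(1-\testsample)g(\testsample)=(1-\testsample)g(0)$ for any function $g$; applying this with $g(\testsample)=1-\frac{\alpha+\sumof{\initdataiijj{i}{1},\ldots,\initdataiijj{i}{\nni}}+\testsample}{\alpha+\beta+(\nni+1)}$ removes the stray $\testsample$ and yields $\testsample+(1-\testsample)\frac{\beta+(\nni+1)-\sumof{\initdataiijj{i}{1},\ldots,\initdataiijj{i}{\nni}}}{\alpha+\beta+(\nni+1)}$, as desired. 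I expect no real obstacle beyond this bookkeeping; the conjugacy and indicator computations are routine.
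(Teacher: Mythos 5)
Your proposal is correct and follows essentially the same route as the paper's proof: reduce the empirical CDF to a single indicator by exchangeability, condition on $p$, and apply Beta--Bernoulli conjugacy to the $\nni+1$ observations. The only (cosmetic) difference is that the paper splits explicitly into the cases $T=0$ and $T=1$, whereas you keep a single expression and dispose of the stray $T$ in the posterior mean via the identity $(1-T)g(T)=(1-T)g(0)$; both handle the bookkeeping equivalently.
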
  
\begin{proof}
    Start by noticing that the conditional expectation can be rewritten as 
    \begin{align*}
        \EE\sbr{
            F_{\cleandatai}(\testsample)
            \given 
            \initdataiijj{i}{1},\ldots,\initdataiijj{i}{\nni},
            \testsample
        }
        &=
        \EE\sbr{
            \frac{1}{\abr{\cleandatai}}
            \sum_{z\in\cleandatai}
            1_{
                \cbr{
                    z\leq T
                }
            }
            \given 
            \initdataiijj{i}{1},\ldots,\initdataiijj{i}{\nni},
            \testsample
        }
        \\
        &=
        \EE\sbr{
            1_{
                \cbr{
                    \cleandataiijj{i}{1}\leq T
                }
            }
            \given 
            \initdataiijj{i}{1},\ldots,\initdataiijj{i}{\nni},
            \testsample
        }
        \\
        &=
        P\rbr{
            \cleandataiijj{i}{1}\leq \testsample
            \given
            \initdataiijj{i}{1},\ldots,\initdataiijj{i}{\nni},
            \testsample
        } .
    \end{align*}  
    The law of total probability tells us that 
    \begin{align*}
        &\hspace{0.5cm}
        P\rbr{
            \cleandataiijj{i}{1}\leq \testsample
            \given
            \initdataiijj{i}{1},\ldots,\initdataiijj{i}{\nni},
            \testsample
        }
        \\
        &=
        \int
        P\rbr{
            \cleandataiijj{i}{1}\leq \testsample
            \given
            p,
            \initdataiijj{i}{1},\ldots,\initdataiijj{i}{\nni},
            \testsample
        }
        dP\rbr{
            p\given
            \initdataiijj{i}{1},\ldots,\initdataiijj{i}{\nni},
            \testsample
        } 
        \numberthis
        \label{eq:beta-bern-law-total-prob}
        .
    \end{align*}  
    We now consider two cases based on whether $T$ is 0 or 1. When $T=1$,~\eqref{eq:beta-bern-law-total-prob} becomes
    \begin{align*}
        P\rbr{
            \cleandataiijj{i}{1}\leq \testsample
            \given
            \initdataiijj{i}{1},\ldots,\initdataiijj{i}{\nni},
            \testsample=1
        }
        =
        \int
        1\cdot
        dP\rbr{
            p\given
            \initdataiijj{i}{1},\ldots,\initdataiijj{i}{\nni},
            \testsample
        } 
        =1 .
    \end{align*}  
    When $T=0$, recall from standard Beta-Bernoulli conjugacy arguments that 
    \begin{align*}
        &\hspace{0.5cm}
        p
        \given
        \initdataiijj{i}{1},\ldots,\initdataiijj{i}{\nni},
        \testsample
        \\
        &\sim
        \betadist{
            \alpha
            +
            \sumof{
                \initdataiijj{i}{1},\ldots,\initdataiijj{i}{\nni},
                \testsample
            }
        }{
            \beta+(\nni+1)
            -
            \sumof{
                \initdataiijj{i}{1},\ldots,\initdataiijj{i}{\nni},
                \testsample
            }
        }
        \\
        &=
        \betadist{
            \underbrace{
                \alpha
                +
                \sumof{
                    \initdataiijj{i}{1},\ldots,\initdataiijj{i}{\nni}
                }
            }_{
                \alpha_0:=
            }
        }{
            \underbrace{
                \beta+(\nni+1)
                -
                \sumof{
                    \initdataiijj{i}{1},\ldots,\initdataiijj{i}{\nni}
                }
            }_{
                \beta_0:=
            }
        } .
    \end{align*}  
    Also observe that when $T=0$,
    \(
        P\rbr{
            \cleandataiijj{i}{1}\leq \testsample
            \given
            p,
            \initdataiijj{i}{1},\ldots,\initdataiijj{i}{\nni},
            \testsample
        }
        =
        1-p
    \).
    Therefore,~\eqref{eq:beta-bern-law-total-prob} becomes
    \begin{align*}
        &\hspace{0.5cm}
        \int
        P\rbr{
            \cleandataiijj{i}{1}\leq \testsample
            \given
            p,
            \initdataiijj{i}{1},\ldots,\initdataiijj{i}{\nni},
            \testsample
        }
        dP\rbr{
            p\given
            \initdataiijj{i}{1},\ldots,\initdataiijj{i}{\nni},
            \testsample
        }
        \\
        &=
        \int
        (1-p) \frac{
            p^{1-\alpha_0}(1-p)^{1-\beta_0}
        }{
            B(\alpha_0, \beta_0)
        } .
    \end{align*}  
    Recall that if $Z\sim\betadist{\alpha_0}{\beta_0}$ then 
    \begin{align*}
        \frac{\alpha_0}{\alpha_0+\beta_0}
        =
        \EE[Z]
        =
        \int 
        (1-z) \frac{
            z^{1-\alpha_0}(1-z)^{1-\beta_0}
        }{
            B(\alpha_0, \beta_0)
        } .
    \end{align*}  
    Therefore,
    \begin{align*}
        \int
        (1-p) \frac{
            p^{1-\alpha_0}(1-p)^{1-\beta_0}
        }{
            B(\alpha_0, \beta_0)
        } 
        &=
        \int
        \frac{
            p^{1-\alpha_0}(1-p)^{1-\beta_0}
        }{
            B(\alpha_0, \beta_0)
        } 
        -
        \int
        p\frac{
            p^{1-\alpha_0}(1-p)^{1-\beta_0}
        }{
            B(\alpha_0, \beta_0)
        } 
        \\
        &=
        1-\frac{\alpha_0}{\alpha_0+\beta_0}
        \\
        &=
        \frac{\beta_0}{\alpha_0+\beta_0}
    \end{align*}  
    so we conclude that 
    \begin{align*}
        P\rbr{
            \cleandataiijj{i}{1}\leq \testsample
            \given
            \initdataiijj{i}{1},\ldots,\initdataiijj{i}{\nni},
            \testsample=0
        }
        =
        \frac{
            \beta+(\nni+1)
            -
            \sumof{
                \initdataiijj{i}{1},\ldots,\initdataiijj{i}{\nni}
            }
        }{
            \alpha+\beta+(\nni+1)
        } .
    \end{align*}  
    Putting both cases together gives us
    \begin{equation*}
        \EE\sbr{
            F_{\cleandatai}(\testsample)
            \given 
            \initdataiijj{i}{1},\ldots,\initdataiijj{i}{\nni},
            \testsample
        }
        =
        T+(1-T)
        \frac{
            \beta+(\nni+1)
            -
            \sumof{
                \initdataiijj{i}{1},\ldots,\initdataiijj{i}{\nni}
            }
        }{
            \alpha+\beta+(\nni+1)
        } .
    \end{equation*}  
\end{proof}  
\section{Proofs of Technical results}

In this section we derive a series of technical results which aid in the main proofs.

\begin{restatable}{lemma}{SvOneSampleCvmEq}
    \label{lem:sv-1-sample-cvm-eq}
    Let $\datadist\in\classctssvdists$
    be a continuous probability distribution over $\RR$, 
    and 
    \(
        X=
        \cbr{
            X_1,\ldots,X_n
        },
    \)
    where
    \(
        X_i,T
        \drawniid
        \datadist
    \). Then
    \begin{equation*}
        \EE\sbr{
            \rbr{
                F_X(T)-F_{\datadist}(T)
            }^2
        }
        =
        \frac{1}{6n} .
    \end{equation*}  
\end{restatable}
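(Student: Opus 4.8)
The plan is to condition on the evaluation point $T$ and exploit that, given $T$, the empirical CDF $F_X$ evaluated at $T$ is a scaled Binomial random variable centered exactly at $F_{\datadist}(T)$. First I would observe that $T$ is independent of $X_1,\dots,X_n$, so conditionally on $T = t$ the indicators $1_{\cbr{X_i \le t}}$ are i.i.d.\ $\mathrm{Bernoulli}\rbr{F_{\datadist}(t)}$, and hence $n F_X(t)$ is $\mathrm{Binomial}\rbr{n, F_{\datadist}(t)}$. Since $\EE\sbr{F_X(t) \mid T = t} = F_{\datadist}(t)$, the inner conditional second moment is simply a conditional variance:
\[
    \EE\sbr{\rbr{F_X(T) - F_{\datadist}(T)}^2 \,\big|\, T}
    = \Var\rbr{F_X(T) \,\big|\, T}
    = \frac{F_{\datadist}(T)\rbr{1 - F_{\datadist}(T)}}{n}.
\]

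Next I would take the outer expectation over $T$ via the tower property, which reduces the claim to evaluating $\EE\sbr{F_{\datadist}(T)\rbr{1 - F_{\datadist}(T)}}$. This is the one place where the hypothesis $\datadist \in \classctssvdists$ is used: since $F_{\datadist}$ is continuous and $T \sim \datadist$, the probability integral transform gives $F_{\datadist}(T) \sim \mathrm{Uniform}(0,1)$. Therefore
\[
    \EE\sbr{F_{\datadist}(T)\rbr{1 - F_{\datadist}(T)}}
    = \int_0^1 u(1-u)\,du
    = \frac{1}{2} - \frac{1}{3}
    = \frac{1}{6},
\]
and combining with the previous display yields $\EE\sbr{\rbr{F_X(T) - F_{\datadist}(T)}^2} = \tfrac{1}{6n}$.

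There is no genuinely hard step here; the only points needing care are (i) invoking the independence of $T$ and the $X_i$ so that the conditional law of $n F_X(T)$ given $T$ is honestly Binomial, and (ii) applying the probability integral transform under the continuity assumption — without continuity, $F_{\datadist}(T)$ may carry atoms, $\EE\sbr{F_{\datadist}(T)\rbr{1-F_{\datadist}(T)}}$ can be strictly smaller than $1/6$, and the stated identity fails. I expect this lemma to serve as the base case for the two-sample bounds in Lemma~\ref{lem:sv-2-sample-cvm-ub}, which is why the constant $1/6$ reappears in the main results; an alternative route avoiding the tower property is a direct expansion of $\rbr{F_X(T) - F_{\datadist}(T)}^2$ into pairwise indicator products, but conditioning on $T$ is cleaner.
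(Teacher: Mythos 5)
Your proposal is correct and follows essentially the same route as the paper's proof: condition on $T$, identify $\Var\rbr{F_X(T)\mid T} = F_{\datadist}(T)(1-F_{\datadist}(T))/n$ from the i.i.d.\ Bernoulli indicators, and then apply the probability integral transform to evaluate the outer expectation as $\int_0^1 u(1-u)\,du = 1/6$. Your remark about where continuity is needed matches the paper's use of Lemma~\ref{lem:prob-int-transform} at exactly that step.
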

\begin{proof}
    Notice that for a fixed $t\in\RR$, 
    \begin{align*}
        \EE\sbr{
            F_X(t)
        }
        =
        \frac{1}{n}
        \sum_{i=1}^n
        \EE\sbr{
            1_{
                \cbr{X_i\leq t}
            }
        }
        =
        F_{\distfromprior}(t) .
    \end{align*}  
    Using this observation and noticing that $1_{\cbr{X_i\leq T}}| T\sim\Bern\rbr{F_{\datadist}(T)}$ gives
    \begin{align*}
        \EE\sbr{
            \rbr{
                F_X(T)-F_{\datadist}(T)
            }^2
        }
        &=
        \EE_{T}\sbr{
            \EE_{X}\sbr{
                \rbr{
                    F_X(T)
                    -F_{\distfromprior}(T)
                }^2
                | T
            }
        }
        \\
        &=
        \EE_{T}\sbr{
            \Var\rbr{
                F_X(T)
                | T
            }
        }
        \\
        &=
        \EE_{T}\sbr{
            \frac{
                F_{\distfromprior}(T)(1-F_{\distfromprior}(T))
            }{
                n
            } 
        }
        \\
        &=
        \int_{-\infty}^\infty
        \frac{
            F_{\distfromprior}(T)(1-F_{\distfromprior}(T))
        }{
            n
        } 
        d\datadist(T) .
    \end{align*}  
    Since $\datadist$ is continuous, the probability integral transform (Lemma~\ref{lem:prob-int-transform}) tells us that if we set $U:=F_{\datadist}(T)$ then $U\sim\text{Unif}(0,1)$.
    The above equation can now be written as
    \begin{align*}
        \int_{-\infty}^\infty
        \frac{
            F_{\distfromprior}(T)(1-F_{\distfromprior}(T))
        }{
            n
        } 
        d\datadist(T)
        =
        \int_{0}^1
        \frac{
            U(1-U)
        }{
            n
        } 
        dU
        =
        \frac{1}{6n}
    \end{align*}  
    which concludes the proof.
\end{proof}  

\begin{restatable}{lemma}{SvTwoSampleCvmUb}
    \label{lem:sv-2-sample-cvm-ub}
    Let $\datadist\in\classsvdists$ be a probability distribution over $\RR$, 
    and 
    \(
        X=
        \cbr{
            X_1,\ldots,X_n
        },
        Y=
        \cbr{
            Y_1,\ldots,Y_m
        }
    \)
    where
    \(
        X_i,Y_i,T
        \drawniid
        \datadist
    \). Then
    \begin{equation*}
        \EE\sbr{
            \rbr{
                F_X(T)-F_Y(T)
            }^2
        }
        \leq
        \frac{1}{4}\rbr{
            \frac{1}{n}+\frac{1}{m}
        } .
    \end{equation*}  
    Moreover, when $\datadist\in\classctssvdists$ 
    \begin{equation*}
        \EE\sbr{
            \rbr{
                F_X(T)-F_Y(T)
            }^2
        }
        =
        \frac{1}{6}\rbr{
            \frac{1}{n}+\frac{1}{m}
        } .
    \end{equation*}
\end{restatable}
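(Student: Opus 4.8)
\emph{Proof proposal.} The plan is to condition on $T$ and exploit that, given $T$, both $F_X(T)$ and $F_Y(T)$ are averages of i.i.d.\ Bernoulli random variables with a common success probability, and that $X$ and $Y$ are independent. This is the two-sample analogue of Lemma~\ref{lem:sv-1-sample-cvm-eq}.

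First I would observe that since $X_i, Y_i, T \drawniid \datadist$ are mutually independent, conditioning on $T$ leaves $X$ and $Y$ independent, and $1_{\cbr{X_i \leq T}} \mid T \sim \Bern(p)$, $1_{\cbr{Y_i \leq T}} \mid T \sim \Bern(p)$ with $p := F_{\datadist}(T)$. Hence $\EE[F_X(T)\mid T] = \EE[F_Y(T)\mid T] = p$, and expanding the square and using that the cross term $\EE[F_X(T)F_Y(T)\mid T]$ factors (by conditional independence of $X$ and $Y$ given $T$) together with $\Var(F_X(T)\mid T) = p(1-p)/n$ and $\Var(F_Y(T)\mid T) = p(1-p)/m$, we get
\[
    \EE\sbr{ \rbr{F_X(T) - F_Y(T)}^2 \mid T }
    = \Var\rbr{F_X(T)\mid T} + \Var\rbr{F_Y(T)\mid T}
    = F_{\datadist}(T)\rbr{1 - F_{\datadist}(T)} \rbr{\tfrac1n + \tfrac1m}.
\]

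Taking expectation over $T$ via the tower property gives $\EE\sbr{\rbr{F_X(T)-F_Y(T)}^2} = \EE\sbr{F_{\datadist}(T)\rbr{1-F_{\datadist}(T)}}\rbr{\tfrac1n + \tfrac1m}$. For the general bound I would use $p(1-p) \leq 1/4$ for all $p \in [0,1]$, which yields $\EE\sbr{\rbr{F_X(T)-F_Y(T)}^2} \leq \tfrac14\rbr{\tfrac1n+\tfrac1m}$. For the continuous case, I would invoke the probability integral transform (Lemma~\ref{lem:prob-int-transform}) to conclude $U := F_{\datadist}(T) \sim \text{Unif}(0,1)$, so $\EE[U(1-U)] = \int_0^1 u(1-u)\,du = \tfrac12 - \tfrac13 = \tfrac16$, giving the claimed equality $\tfrac16\rbr{\tfrac1n+\tfrac1m}$.

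There is no genuine obstacle here; the only step requiring care is justifying that the cross term factors as $\EE[F_X(T)\mid T]\,\EE[F_Y(T)\mid T]$, which follows from the mutual independence of $X$, $Y$, and $T$, and making sure the same bookkeeping goes through when $\datadist$ has atoms (where the probability integral transform no longer applies and we must fall back on $p(1-p)\le 1/4$).
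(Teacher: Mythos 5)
Your proposal is correct and follows essentially the same route as the paper: decompose the squared difference via the common conditional mean $F_{\datadist}(T)$ so the cross term vanishes, reduce to $\EE\sbr{F_{\datadist}(T)(1-F_{\datadist}(T))}\rbr{\tfrac1n+\tfrac1m}$, then bound by $1/4$ in general and compute $1/6$ via the probability integral transform in the continuous case. The only cosmetic difference is that the paper routes the continuous-case computation through Lemma~\ref{lem:sv-1-sample-cvm-eq} applied twice, whereas you inline that same calculation directly.
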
  
\begin{proof}
    We start with proving the inequality.
    Let $F_{\distfromprior}(t)$ be the CDF of $\distfromprior$. 
    Notice that for a fixed $t\in\RR$,~
    \(
        \EE\sbr{
            F_X(t)
        }
        =
        \frac{1}{n}
        \sum_{i=1}^n
        \EE\sbr{
            1_{
                \cbr{X_i\leq t}
            }
        }
        =
        F_{\distfromprior}(t) .
    \)
    Together with independence we have
    \begin{align*}
        &\hspace{0.45cm}
        \EE\sbr{
            \rbr{
                F_X(T)-F_Y(T)
            }^2
        }
        \\
        &=
            \EE_{T}\sbr{
                \EE_{X,Y}\sbr{
                    \rbr{
                        F_X(T)-F_Y(T)
                    }^2
                    | T
                }
            }
        \\ 
        &=
            \EE_{T}\sbr{
                \EE_{X,Y}\sbr{
                    \rbr{
                        F_X(T)
                        -F_{\distfromprior}(T)
                        +F_{\distfromprior}(T)
                        -F_Y(T)
                    }^2
                    | T
                }
            }
        \\ 
        &=
        \EE_{T}\sbr{
            \EE_{X}\sbr{
                \rbr{
                    F_X(T)
                    -F_{\distfromprior}(T)
                }^2
                | T
            }
            +
            \EE_{Y}\sbr{
                \rbr{
                    F_{\distfromprior}(T)
                    -F_Y(T)
                }^2
                | T
            }
        } 
        \numberthis
        \label{eq:prior-sv-2-sample-cvm-ub-cts-decomp}
        \\ 
        &=
        \EE_{T}\sbr{
            \Var\rbr{
                F_X(T)
                | T
            }
            +
            \Var\rbr{
                F_Y(T)
                | T
            }
        } 
        .
    \end{align*}  
    Given $T$, $F_X(T)$ and $F_Y(T)$ are sums of i.i.d. bernoulli random variables, thus
    \begin{align*}
        \EE_{T}\sbr{
            \Var\rbr{
                F_X(T)
                | T
            }
            +
            \Var\rbr{
                F_Y(T)
                | T
            }
        } 
        &=
        \EE_{T}\sbr{
            \frac{
                F_{\distfromprior}(T)(1-F_{\distfromprior}(T))
            }{
                n
            } 
            +
            \frac{
                F_{\distfromprior}(T)(1-F_{\distfromprior}(T))
            }{
                m
            } 
        } 
        \\
        &=
        \EE_{T}\sbr{
                F_{\distfromprior}(T)(1-F_{\distfromprior}(T))
        } 
        \rbr{
            \frac{1}{n}+\frac{1}{m}
        }
        \label{eq:exp-ecdfs-diff-sqr-given-t}
        \numberthis
        \\
        &\leq 
        \frac{1}{4}
        \rbr{
            \frac{1}{n}
            +
            \frac{1}{m}
        } .
    \end{align*}  
    since $F_{\distfromprior}(T)\in[0,1]$.

    For the equality, we rewrite~\eqref{eq:prior-sv-2-sample-cvm-ub-cts-decomp} and apply Lemma~\ref{lem:sv-1-sample-cvm-eq}
    twice to get
    \begin{align*}
        &\hspace{0.45cm}
        \EE_{T}\sbr{
            \EE_{X}\sbr{
                \rbr{
                    F_X(T)
                    -F_{\distfromprior}(T)
                }^2
            }
            +
            \EE_{Y}\sbr{
                \rbr{
                    F_{\distfromprior}(T)
                    -F_Y(T)
                }^2
            }
        }
        \\
        &=
        \EE_{X,T}\sbr{
            \rbr{
                F_X(T)
                -F_{\distfromprior}(T)
            }^2
        }
        +
        \EE_{Y,T}\sbr{
            \rbr{
                F_{\distfromprior}(T)
                -F_Y(T)
            }^2
        }
        \\
        &=
        \frac{1}{6}
        \rbr{
            \frac{1}{n}
            +
            \frac{1}{m}
        } .
    \end{align*}  
\end{proof}  

\begin{restatable}{lemma}{CondExpWithTrueDist}
    \label{lem:cond-exp-with-true-dist}
    Let $\bayesprior\in\priorclasssv$ 
    be a distribution over the collection of 
    $\RR$-valued distributions. Suppose that 
    $\distfromprior\sim\bayesprior$ and then
    \(
        X=
        \cbr{X_1,\ldots,X_n},
        Y=
        \cbr{Y_1,\ldots,Y_m}
    \)
    where
    \(
        X_i, 
        Y_i,
        \testsample
        \drawniid
        \distfromprior
    \).
    Let $F_{\distfromprior}(t)$ be the CDF of $\distfromprior$.
    Then,
    \begin{equation*}
        \EE\sbr{
            F_{Y}\rbr{T}
            |
            X_1,
            \ldots,
            X_n,
            T,
            \distfromprior
        }
        =
        F_{\distfromprior}(T).
    \end{equation*}      
\end{restatable}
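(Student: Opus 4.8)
The plan is to expand the empirical CDF as an average of indicators and then take the conditional expectation term by term, exploiting the conditional independence built into the sampling model. Specifically, write $F_Y(T) = \frac{1}{m}\sum_{j=1}^{m} 1_{\{Y_j \leq T\}}$, so that by linearity of conditional expectation,
\[
\EE\sbr{F_Y(T) \mid X_1,\ldots,X_n, T, \distfromprior}
= \frac{1}{m}\sum_{j=1}^{m}\EE\sbr{1_{\{Y_j \leq T\}} \mid X_1,\ldots,X_n, T, \distfromprior}.
\]
The next step is to argue that each term simplifies to $F_{\distfromprior}(T)$. Because all of $X_1,\ldots,X_n, Y_1,\ldots,Y_m, T$ are drawn i.i.d. from $\distfromprior$ conditional on $\distfromprior$, the random variable $Y_j$ is conditionally independent of $(X_1,\ldots,X_n)$ given $(\distfromprior, T)$; in fact $Y_j$ is independent of $(X_1,\ldots,X_n,T)$ given $\distfromprior$. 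Hence $\EE[1_{\{Y_j \leq T\}} \mid X_1,\ldots,X_n, T, \distfromprior] = \EE[1_{\{Y_j \leq T\}} \mid T, \distfromprior] = P(Y_j \leq T \mid T, \distfromprior) = F_{\distfromprior}(T)$, using that the conditional law of $Y_j$ given $\distfromprior$ is $\distfromprior$ itself with CDF $F_{\distfromprior}$, and that $T$ is measurable with respect to the conditioning sigma-algebra.

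Summing the $m$ identical terms and dividing by $m$ then yields $\EE[F_Y(T) \mid X_1,\ldots,X_n,T,\distfromprior] = F_{\distfromprior}(T)$, which is the claim. I would state this cleanly by first noting that $F_{\distfromprior}$ is a (random) measurable function of $\distfromprior$, so $F_{\distfromprior}(T)$ is measurable with respect to $\sigma(X_1,\ldots,X_n,T,\distfromprior)$, making the final identity well-posed as an almost-sure equality of conditional expectations.

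The only point requiring care — and the ``main obstacle'' such as it is — is the formal justification of the conditional independence step: that conditioning additionally on $X_1,\ldots,X_n$ does not change the conditional distribution of $Y_j$ given $(\distfromprior, T)$. This is immediate from the hierarchical i.i.d. structure ($\distfromprior \sim \bayesprior$, then all samples i.i.d.\ from $\distfromprior$), but I would spell it out explicitly to avoid any ambiguity about which variables are being conditioned on, since the rest of the argument is entirely routine. No measure-theoretic pathology arises because everything reduces to finitely many exchangeable coordinates.
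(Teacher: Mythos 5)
Your proposal is correct and follows essentially the same route as the paper: reduce $F_Y(T)$ to a single indicator $1_{\{Y_1\le T\}}$ via the i.i.d.\ structure, then use that conditionally on $\distfromprior$ the variables $Y_1$ and $(X_1,\ldots,X_n,T)$ are independent, so the conditional expectation is $P(Y_1\le T\mid T,\distfromprior)=F_{\distfromprior}(T)$. The only difference is one of formality: where you invoke conditional independence directly, the paper verifies the defining integral identity of the conditional expectation over sets $B\in\sigma(X_1,\ldots,X_n,T,\distfromprior)$ via Fubini, which is exactly the ``spelling out'' you flag as the one step needing care.
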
  
\begin{proof}
    Using conditional independence we have
    \begin{align*}
        \EE\sbr{
            F_{Y}\rbr{T}
            |
            X_1,
            \ldots,
            X_n,
            T,
            \distfromprior
        }
        &=
        \frac{1}{m}
        \sum_{j=1}^m
        \EE\sbr{
            1_{
                \cbr{
                    Y_j\leq T
                }
            }
            |
            X_1,
            \ldots,
            X_n,
            T,
            \distfromprior
        }
        \\
        &=
        \frac{1}{m}
        \sum_{j=1}^m
        \EE\sbr{
            1_{
                \cbr{
                    Y_j\leq T
                }
            }
            | T, \distfromprior
        }
        \\
        &=
        \frac{1}{m}
        \sum_{j=1}^m
        F_{\distfromprior}(T)
        \\
        &=
        F_{\distfromprior}(T) .
    \end{align*}  
\end{proof}  

\begin{restatable}{lemma}{ExpCondVarEq}
    \label{lem:exp-cond-var-eq}
    Let $\Fcal\subseteq\Gcal$, suppose $X\in L^2$, and define $Y=\EE[X|\Gcal]$ then
    \begin{equation*}
        \EE\sbr{
            \Var\rbr{X|\Fcal}
        }
        -
        \EE\sbr{
            \Var\rbr{X|\Gcal}
        } 
        =
        \EE\sbr{
            \Var\rbr{Y|\Fcal}
        }
        .
    \end{equation*}  
\end{restatable}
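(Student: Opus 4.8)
The plan is to reduce the identity to two applications of the law of total variance together with the tower property, exploiting $\Fcal\subseteq\Gcal$. First I would record that since $X\in L^2$, conditional Jensen gives $Y=\EE[X\mid\Gcal]\in L^2$, so every conditional variance and every expectation appearing below is finite and the manipulations are justified.

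The first step is to expand, using $\Var(X\mid\Fcal)=\EE[X^2\mid\Fcal]-(\EE[X\mid\Fcal])^2$ and the tower property, to obtain
\begin{equation*}
    \EE\sbr{\Var(X\mid\Fcal)}=\EE\sbr{X^2}-\EE\sbr{(\EE[X\mid\Fcal])^2},
    \qquad
    \EE\sbr{\Var(X\mid\Gcal)}=\EE\sbr{X^2}-\EE\sbr{Y^2},
\end{equation*}
where in the second identity I used $\EE[X\mid\Gcal]=Y$. Subtracting the two displays, the $\EE[X^2]$ terms cancel and I am left with
\begin{equation*}
    \EE\sbr{\Var(X\mid\Fcal)}-\EE\sbr{\Var(X\mid\Gcal)}=\EE\sbr{Y^2}-\EE\sbr{(\EE[X\mid\Fcal])^2}.
\end{equation*}

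The second step is to rewrite the last term via the tower property: because $\Fcal\subseteq\Gcal$,
\(
    \EE[X\mid\Fcal]=\EE\sbr{\EE[X\mid\Gcal]\mid\Fcal}=\EE[Y\mid\Fcal].
\)
Substituting this in and applying the law of total variance to $Y$ — i.e. $\EE[\Var(Y\mid\Fcal)]=\EE[Y^2]-\EE[(\EE[Y\mid\Fcal])^2]$ — yields exactly $\EE[Y^2]-\EE[(\EE[X\mid\Fcal])^2]=\EE[\Var(Y\mid\Fcal)]$, which is the claim.

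I do not expect any real obstacle here: the only point requiring a moment's care is integrability, which is handled by $X\in L^2$ and conditional Jensen as noted above; everything else is a bookkeeping rearrangement of standard conditional-expectation identities.
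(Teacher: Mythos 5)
Your proposal is correct and follows essentially the same route as the paper: expand the conditional variances, cancel the common term, use the tower identity $\EE[X\mid\Fcal]=\EE[Y\mid\Fcal]$ (valid since $\Fcal\subseteq\Gcal$), and recognize the remainder as $\EE[\Var(Y\mid\Fcal)]$. The only cosmetic difference is that you cancel $\EE[X^2]$ directly where the paper cancels $\Var(X)$ via the law of total variance, which is an equivalent bookkeeping choice.
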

\begin{proof}
    Applying the law of total variation with respect to $\Fcal$ and $\Gcal$ gives us
    \begin{align}
        \Var(X)
        &=
        \EE\sbr{
            \Var\rbr{X|\Gcal}
        }
        +
        \Var\rbr{
            \EE\sbr{X|\Gcal}
        }
        \label{eq:exp-cond-var-eq-ltv1}
        \\
        \Var(X)
        &=
        \EE\sbr{
            \Var\rbr{X|\Fcal}
        }
        +
        \Var\rbr{
            \EE\sbr{X|\Fcal}
        } 
        \label{eq:exp-cond-var-eq-ltv2}
        .
    \end{align}  
    Subtracting~\eqref{eq:exp-cond-var-eq-ltv1} from~\eqref{eq:exp-cond-var-eq-ltv2} gives
    \begin{equation}
        \EE\sbr{
            \Var\rbr{X|\Fcal}
        } 
        -
        \EE\sbr{
            \Var\rbr{X|\Gcal}
        }
        =
        \Var\rbr{
            \EE\sbr{X|\Gcal}
        }
        -
        \Var\rbr{
            \EE\sbr{X|\Fcal}
        } 
        \label{eq:exp-cond-var-eq-ltv-diff}
        .
    \end{equation}  
    Now notice that by the tower property we have 
    \begin{equation*}
        \EE\sbr{Y|\Fcal}
        =
        \EE\sbr{
            \EE\sbr{X|\Gcal}
            |\Fcal
        }
        =
        \EE\sbr{X|\Fcal} .
    \end{equation*}  
    Combining this  with another application of the law of total variation yields
    \begin{align*}
        \Var\rbr{\EE\sbr{X|\Gcal}}
        =
        \Var\rbr{Y}
        &=
        \EE\sbr{
            \Var\rbr{Y|\Fcal}
        }
        +
        \Var\rbr{
            \EE\sbr{Y|\Fcal}
        } 
        \\
        &=
        \EE\sbr{
            \Var\rbr{Y|\Fcal}
        }
        +
        \Var\rbr{
            \EE\sbr{X|\Fcal}
        } .
    \end{align*}  
    Plugging this into the right hand side of~\eqref{eq:exp-cond-var-eq-ltv-diff} gives us 
    \begin{align*}
        \EE\sbr{
            \Var\rbr{X|\Fcal}
        }
        -
        \EE\sbr{
            \Var\rbr{X|\Gcal}
        } 
        &=
        \rbr{
            \EE\sbr{
                \Var\rbr{Y|\Fcal}
            }
            +
            \Var\rbr{\EE\sbr{X|\Fcal}}
        }
        -
        \Var\rbr{\EE\sbr{X|\Fcal}}
        \\
        &=
        \EE\sbr{
        \Var\rbr{Y|\Fcal}
        }
        .
    \end{align*}  
\end{proof}  
\section{Known results}

In this section we present two well known results and give proofs of them for completeness.

\begin{lemma}[\citet{durrett2019probability} Theorem 4.1.15]
    \label{lem:cond-exp-mse}
    Let $X$ be a random variable such that $\EE\sbr{X^2}<\infty$ and $\Fcal$ be a $\sigma$-algebra on the 
    underlying probability space. Then $\EE\sbr{X|\Fcal}$ is the $\Fcal$-measurable random variable $Y$ which
    minimizes $\EE\sbr{(X-Y)^2}$.
\end{lemma}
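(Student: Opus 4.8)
}
The plan is to prove this via the standard orthogonality (Pythagorean) decomposition that identifies $\EE\sbr{X\given\Fcal}$ as the $L^2$-projection onto the closed subspace of $\Fcal$-measurable functions. First I would set $Y:=\EE\sbr{X\given\Fcal}$ and record that $Y\in L^2$: conditional Jensen gives $Y^2\leq\EE\sbr{X^2\given\Fcal}$ almost surely, so $\EE\sbr{Y^2}\leq\EE\sbr{X^2}<\infty$. Next, observe that if $Z$ is any $\Fcal$-measurable random variable with $\EE\sbr{Z^2}=\infty$, then $\EE\sbr{(X-Z)^2}=\infty$ (since $X\in L^2$ would force $Z=X-(X-Z)\in L^2$ were $\EE\sbr{(X-Z)^2}$ finite), so such $Z$ cannot be a minimizer. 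Hence it suffices to compare $Y$ against an arbitrary $\Fcal$-measurable $Z\in L^2$.

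The core step is the expansion
\[
    \EE\sbr{(X-Z)^2}
    =
    \EE\sbr{(X-Y)^2}
    +
    2\,\EE\sbr{(X-Y)(Y-Z)}
    +
    \EE\sbr{(Y-Z)^2},
\]
where every term is finite by Cauchy--Schwarz since $X,Y,Z\in L^2$, and then to show the cross term vanishes. Because $Y-Z$ is $\Fcal$-measurable and $(X-Y)(Y-Z)\in L^1$ (Cauchy--Schwarz again), the tower property together with the ``take out what is known'' property of conditional expectation yield
\[
    \EE\sbr{(X-Y)(Y-Z)}
    =
    \EE\sbr{(Y-Z)\,\EE\sbr{X-Y\given\Fcal}}
    =
    \EE\sbr{(Y-Z)\cdot 0}
    =
    0,
\]
using $\EE\sbr{X-Y\given\Fcal}=\EE\sbr{X\given\Fcal}-Y=0$. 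Consequently $\EE\sbr{(X-Z)^2}=\EE\sbr{(X-Y)^2}+\EE\sbr{(Y-Z)^2}\geq\EE\sbr{(X-Y)^2}$, so $Y=\EE\sbr{X\given\Fcal}$ minimizes the mean-squared error over all $\Fcal$-measurable random variables, with equality iff $\EE\sbr{(Y-Z)^2}=0$, i.e.\ $Z=Y$ almost surely, which also gives uniqueness.

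The only point that needs care is the integrability justification for manipulating the cross term: one must check $(X-Y)(Y-Z)\in L^1$ before invoking the tower/pull-out properties, but this is immediate from Cauchy--Schwarz under the $L^2$ hypotheses. I therefore do not expect a genuine obstacle here; the statement is essentially the Hilbert-space projection theorem specialized to the inclusion $L^2(\Omega,\Fcal,P)\subseteq L^2(\Omega,\mathcal{A},P)$, and the argument above is just that projection argument written out in probabilistic language.
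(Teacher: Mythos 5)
Your proposal is correct and follows essentially the same route as the paper's proof: expand $\EE\sbr{(X-Z)^2}$ around $\EE\sbr{X\given\Fcal}$ and kill the cross term using the pull-out/tower properties of conditional expectation. The only differences are cosmetic extra care on your part (checking $L^2$ membership of the conditional expectation, dismissing non-square-integrable competitors, and noting uniqueness), none of which changes the argument.
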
  
\begin{proof} 
    Notice that if $Z$ is $\Fcal$-measurable and $\EE[Z^2]<\infty$ then
    \(
        Z\cdot\EE\sbr{X|\Fcal}
        =
        \EE\sbr{
            Z\cdot X|\Fcal
        }
    \)
    which implies 
    \[
        \EE\sbr{
            Z\cdot\EE\sbr{X|\Fcal}
        }
        =
        \EE\sbr{
            \EE\sbr{
                Z\cdot X|\Fcal
            }
        }
        =
        \EE[Z\cdot X] .
    \] 
    Rearranging we find
    \[
        \EE\sbr{
            Z\cdot\rbr{
                X-\EE[X|\Fcal]
            }
        }
        =0 .
    \]
    Now suppose that $Y$ is $\Fcal$-measurable and $\EE[Y^2]<\infty$, and define $Z=\EE[X|\Fcal]-Y$. Then, 
    \begin{align*}
        \EE\sbr{
            (X-Y)^2
        }
        &=
        \EE\sbr{
            (X-\EE[X|\Fcal] + Z)^2
        }
        \\
        &=
        \EE\sbr{
            (X-\EE[X|\Fcal])^2
        }
        +
        \EE\sbr{
            Z\cdot (X-\EE[X|\Fcal])
        }
        +
        \EE[Z^2]
        \\
        &=
        \EE\sbr{
            (X-\EE[X|\Fcal])^2
        }
        +
        \EE[Z^2]
    \end{align*}
    which implies that the mean squared error is minimized when $Y=\EE[X|\Fcal]$.
\end{proof}  

\begin{lemma}[Probability integral transform]
    \label{lem:prob-int-transform}
    Suppose that $X$ is a continuous $\RR$-valued random variable. Let $U=F_{X}(X)$,
    i.e. the CDF of $X$ evaluated at $X$.
    Then $U\sim\Unif(0,1)$.
\end{lemma}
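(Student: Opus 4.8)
The plan is to compute the CDF of $U = F_X(X)$ directly and verify that it equals that of $\Unif(0,1)$, i.e.\ $P(U \le u) = u$ for every $u \in (0,1)$, with the obvious boundary behaviour since $U$ takes values in $[0,1]$. The one subtlety is that $F_X$, though continuous and non-decreasing with $\lim_{x\to-\infty}F_X(x)=0$ and $\lim_{x\to\infty}F_X(x)=1$, need not be strictly increasing, so it has no genuine inverse; I would therefore work with a generalized inverse instead.

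First I would fix $u \in (0,1)$ and set $x_u := \inf\{x \in \RR : F_X(x) \ge u\}$. Since $F_X$ is non-decreasing with the stated limits, the set $A_u := \{x : F_X(x) \ge u\}$ is a non-empty half-line bounded below, so $x_u$ is finite; moreover $A_u$ is closed because $F_X$ is continuous, hence $A_u = [x_u, \infty)$ and in particular $F_X(x_u) \ge u$. On the other hand, $F_X(x) < u$ for every $x < x_u$, so letting $x \uparrow x_u$ and using continuity of $F_X$ gives $F_X(x_u) \le u$. Therefore $F_X(x_u) = u$.

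Next I would establish the event identity $\{F_X(X) \ge u\} = \{X \ge x_u\}$. One inclusion, $\{X \ge x_u\} \subseteq \{F_X(X) \ge u\}$, holds since $F_X$ is non-decreasing and $F_X(x_u) = u$; the other, $\{F_X(X) \ge u\} \subseteq \{X \ge x_u\}$, holds since $X < x_u$ implies $X \notin A_u$, i.e.\ $F_X(X) < u$. Taking probabilities, and using that continuity of $F_X$ forces $P(X = x_u) = 0$ (so that $P(X < x_u) = P(X \le x_u) = F_X(x_u) = u$), I obtain $P(F_X(X) \ge u) = 1 - u$, that is $P(U < u) = u$ for all $u \in (0,1)$.

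Finally, since $\{U \le u\} = \bigcap_{n} \{U < u + 1/n\}$ and $P(U < u') = u'$ whenever $u' \in (0,1)$, continuity of measure gives $P(U \le u) = \lim_{n} (u + 1/n) = u$ for $u \in (0,1)$; combined with $P(U \le u) = 0$ for $u \le 0$ and $P(U \le u) = 1$ for $u \ge 1$ (as $U \in [0,1]$), this is exactly the CDF of $\Unif(0,1)$, so $U \sim \Unif(0,1)$. The main obstacle --- and the only place continuity of $X$ is genuinely used --- is passing from the event identity to the probability statement when $F_X$ has flat stretches: continuity guarantees that each level set $\{x : F_X(x) = u\}$ carries no mass, so $x_u$ behaves like a true inverse almost surely.
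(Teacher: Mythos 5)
Your proof is correct and follows essentially the same route as the paper's: both work through the generalized inverse $\inf\{t:F_X(t)\ge u\}$ and compute the CDF of $U$ directly. If anything, your version is more careful than the paper's one-line chain of equalities, since you explicitly establish the exact event identity $\{F_X(X)\ge u\}=\{X\ge x_u\}$ and isolate where continuity is needed (the level sets of $F_X$ carry no mass), a point the paper's step $P(F_X(X)\le t)=P(X\le\widetilde{F}^{-1}(t))$ quietly absorbs.
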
  
\begin{proof}
    As $F_X(t)$ may not be strictly increasing, define the generalized inverse CDF
    $\widetilde{F}^{-1}(u)=\inf\cbr{t\in\RR:F_X(t)\geq u}$. Now notice that we can write the CDF of $U$ as 
    \begin{align*}
        F_{U}(t)
        =
        P\rbr{
            U\leq t
        }
        =
        P\rbr{
            F_X(X)\leq t
        }
        =
        P\rbr{
            X\leq \widetilde{F}^{-1}(t)
        }
        =
        F_X\rbr{
            \widetilde{F}^{-1}(t)
        }
        =t
    \end{align*}  
    from which we conclude that $U\sim\Unif(0,1)$.
\end{proof}  

\end{document}